\documentclass[11pt]{article}
\usepackage[margin=1in]{geometry}
\usepackage{natbib}
\usepackage{ifthen}
\usepackage{amsmath,amsfonts,bm}

\usepackage{hyperref}
\usepackage{url}

\usepackage{nicefrac}       
\usepackage{microtype}
\usepackage{comment}
\usepackage{adjustbox}
\usepackage{libertinust1math}
\usepackage{xspace}
\usepackage{chemfig}
\usepackage{enumitem}
\usepackage{multirow}
\usepackage{booktabs}
\usepackage{tikz}
\usepackage{wrapfig}
\usepackage{pgf}

\usepackage[dvipsnames]{xcolor}

\usepackage{graphicx}
\usepackage{amssymb}
\usepackage{mathtools}
\usepackage{amsthm}
\usepackage{subcaption}
\usepackage[capitalize,noabbrev]{cleveref}
\usepackage{subfiles}

\theoremstyle{plain}
\newtheorem{theorem}{Theorem}
\newtheorem*{theorem*}{Theorem}
\newtheorem{proposition}[theorem]{Proposition}
\newtheorem*{proposition*}{Proposition}

\newtheorem{corollary}[theorem]{Corollary}
\newtheorem*{corollary*}{Corollary}
\theoremstyle{definition}

\theoremstyle{remark}

\crefname{theorem}{theorem}{theorems}
\Crefname{theorem}{Theorem}{Theorems}

\crefname{lemma}{lemma}{lemmas}
\Crefname{lemma}{Lemma}{Lemmas}

\crefname{proposition}{proposition}{propositions}
\Crefname{proposition}{Proposition}{Propositions}

\crefname{corollary}{corollary}{corollaries}
\Crefname{corollary}{Corollary}{Corollaries}

\crefname{definition}{definition}{definitions}
\Crefname{definition}{Definition}{Definitions}

\usetikzlibrary{decorations.pathmorphing, quotes, patterns,arrows.meta,backgrounds,fit,positioning}

\usepackage{pdfpages}

\tikzset{snake it/.style={decorate, decoration=snake}}
\tikzset{%
	graph vertex/.style={
		circle,draw=black, fill=white, inner sep=0pt,minimum size=10pt
	}
}

\newcommand{\MyGNN}{{Share\-GNN}\xspace}
\newcommand{\MyGNNs}{{Share\-GNN}s\xspace}

\newcommand{\Rule}{\mathbf{R}}

\newcommand{\RuleMol}{\Rule_{\operatorname{Mol}}}
\newcommand{\RuleFinal}{\Rule_{\operatorname{Aggr}}}

\newcommand{\weightset}{\mathbf{\Omega}}
\newcommand{\weightmatrix}{\mathbf{W}}
\newcommand{\bias}{\mathbf{b}}
\newcommand{\ruleset}{\mathbf{\mathcal{T}}}
\newcommand{\graphlabeling}{l}
\newcommand{\labelset}{\mathcal{L}}

\newcommand{\naturalZero}{\mathbb{N}_0}
\newcommand{\Neighbor}[1]{\mathcal{N}_{#1}}
\newcommand{\MessagePassingFunction}{\operatorname{m}}
\newcommand{\BiasFunction}{\operatorname{b}}
\newcommand{\MPLabel}{l_1}
\newcommand{\BiasLabel}{l_2}

	\definecolor{tab_blue}{HTML}{1f77b4}
	\definecolor{tab_orange}{HTML}{ff7f0e}
	\definecolor{tab_green}{HTML}{2ca02c}
	\definecolor{tab_red}{HTML}{d62728}
	\definecolor{tab_purple}{HTML}{9467bd}
	\definecolor{tab_brown}{HTML}{8c564b}
	\definecolor{tab_pink}{HTML}{e377c2}
	\definecolor{tab_gray}{HTML}{7f7f7f}
	\definecolor{tab_olive}{HTML}{bcbd22}
	\definecolor{tab_cyan}{HTML}{17becf}
	
	\definecolor{darknavy}{RGB}{12,18,43}
	\definecolor{aqua}{RGB}{0,158,227}
	\definecolor{skyblue}{RGB}{89,189,247}
	\definecolor{fuchsia}{RGB}{232,46,130}
	\definecolor{violet}{RGB}{152,48,130}
	
	\newcommand{\FirstColor}[1]{\textcolor{fuchsia}{$\mathbf{#1}$}}
	\newcommand{\SecondColor}[1]{\textcolor{aqua}{$\mathbf{#1}$}}
	\newcommand{\ThirdColor}[1]{\textcolor{black}{$\mathbf{#1}$}}

	\newif\iflong
	\newif\ifshort
	\longfalse 
	\ifthenelse{\boolean{long}}{
		\shortfalse
	}{
		\shorttrue
	}

\title{Invariant-Based Weight Sharing for Message Passing}
\author{Florian Seiffarth$^{1,2}$\\
$^1$\small University of Bonn, Bonn, Germany\\
$^2$\small Lamarr Institute for Machine Learning and Artificial Intelligence, Bonn, Germany\\
\texttt{seiffarth@cs.uni-bonn.de}}
\date{}

\begin{document}
\maketitle

\begin{abstract}
Message-passing neural networks (MPNNs) are a powerful framework for learning representations of graph-structured domains.
However, weights in MPNNs act on features only, limiting their ability to capture structural patterns.
We introduce a novel \textit{structure-aware} weight sharing principle that explicitly incorporates information inherent to the graph structure.
Weights are indexed directly by user-chosen graph invariants, i.e., functions preserved under node permutations,
enabling systematic reuse across structurally equivalent subgraphs.
We present ShareGNNs, which instantiate this principle within a simple encoder-decoder architecture, resulting in an MPNN with \textit{learnable adjacency} and \textit{transformer-like connectivity}.
We show that their expressivity is at least as strong as the discriminative power of the chosen invariants, providing explicit control over the model complexity.
Experiments on synthetic and real-world data, as well as subgraph counting tasks,
demonstrate consistent improvements over standard MPNNs, competitive expressivity beyond the 1-WL test, and scalability to large datasets.
\end{abstract}

\section{Introduction}\label{sec:introduction}
	    \newcommand{\GraphNode}[2]{
	\scalebox{#2}{
		\begin{tikzpicture}
			\node[graph vertex, fill=#1](1) at (0,0){};
		\end{tikzpicture}
	}
}

\newcommand{\ScaledGraphNode}[2]{
	\begin{tikzpicture}
		\node[circle,draw=black, fill=#1, inner sep=0pt,minimum size=#2] at (0,0) {};
	\end{tikzpicture}
}

\newcommand{\targetNode}{
	\begin{tikzpicture}
		\node[circle,draw=red,fill=aqua, inner sep=0pt, minimum size=5pt] at (0, 0) (1) {};
\end{tikzpicture}}

\newcommand{\hydrogenAtom}{\begin{tikzpicture}\node [circle,draw=black, fill=fuchsia, inner sep=0pt, minimum size=5pt](1) at (0,0){};\end{tikzpicture}}

\newcommand{\carbonAtom}{\begin{tikzpicture}\node [circle,draw=black, fill=aqua, inner sep=0pt, minimum size=5pt](1) at (0,0){};\end{tikzpicture}}

	    \newcommand{\AttentionCoeff}[5]{
    \scalebox{#4}{
    \begin{tikzpicture}
    \node at (0,0) {$\omega_{(\GraphNode{#1}{#5}, \GraphNode{#2}{#5}, #3)}$};
    \end{tikzpicture}
    }
    }

	    \newcommand{\ScaledAttentionCoeff}[5]{
    \scalebox{#4}{
    \begin{tikzpicture}
    \node (O) at (0,0) {$\omega_{(\ScaledGraphNode{#1}{#5}, \ScaledGraphNode{#2}{#5}, #3)}$};
    \end{tikzpicture}
    }
    }
\textit{Structure-aware} weight sharing for graphs is challenging because the data is inherently irregular.
Regular domains such as images or sequences, provide a canonical notion of weight sharing, where a single kernel can be reused across spatial locations.
Extending this principle to graphs is not straightforward: graphs lack a canonical node ordering, and their connectivity can vary greatly across instances.
Existing graph neural networks (GNNs) such as message-passing networks~\citep{DBLP:conf/icml/GilmerSRVD17,DBLP:conf/iclr/KipfW17,Hamilton2017InductiveRL,Velickovic2017GraphAN,DBLP:conf/iclr/XuHLJ19} and graph transformers~\citep{DBLP:conf/nips/YunJKKK19,DBLP:conf/ijcai/ShiHFZWS21}
keep weight sharing simple by applying the same transformation to all node or edge representations, regardless of their structural context.

In this work, we extend the classical feature-based approach with a \textit{structure-aware invariant-based weight sharing} principle for graphs, leading to several advantages discussed throughout the paper.
Our principle is based on graph invariants, i.e., functions that remain unchanged under node permutations, and we therefore refer to it as \textit{invariant-based weight sharing}.
By indexing, i.e., tying, weights directly with invariant properties (e.g., atomic numbers or node degrees), we obtain a dynamic weight-sharing scheme that applies to any node or node pair with the same invariant signature, independent of the graph size, node ordering or the node and edge feature dimensions.
Moreover, the principle naturally extends to any domain representable via graphs, including images (using neighborhoods of pixels) or texts (using the sentence structure).

		We present \MyGNNs, an instantiation of this principle, and show that they extend classical message-passing networks by
(i) using invariant-based weight sharing,
(ii) enabling long-range interactions in a single message-passing layer, and
(iii) offering principled control of expressivity through the choice of invariants.
Conceptually, \MyGNNs can be interpreted as MPNNs with \textit{learnable adjacency matrix} and \textit{transformer-like connectivity}.
Classical message-passing networks~\citep{DBLP:conf/icml/GilmerSRVD17,DBLP:conf/iclr/KipfW17} iteratively update node embeddings by aggregating information from neighbors, while graph transformers~\citep{DBLP:conf/nips/YunJKKK19,DBLP:conf/ijcai/ShiHFZWS21} extend this idea by allowing every pair of nodes to exchange information via learned attention.

		ShareGNNs differ fundamentally from graph transformers in \textit{where} parameters are defined and \textit{how} interactions are modeled.
		Graph transformers learn pair-specific attention coefficients as functions of node features and positional encodings,
		resulting in parameters that are \textit{instance-specific} and not explicitly tied across structurally equivalent node pairs.
		In contrast, ShareGNNs index learnable parameters directly by graph invariants,
		yielding a \textit{structure-aware} weight sharing mechanism that enforces systematic parameter reuse across all node pairs with the same invariant signature, within and across graphs.
		As a result, ShareGNNs replace feature-conditioned attention with invariant-indexed message passing,
		providing explicit control over expressivity, improved interpretability, and transferability of learned parameters across graphs.

\begin{figure}[t]
		\centering
        \resizebox{0.49\linewidth}{!}{
		\begin{subfigure}{0.49\linewidth}\centering
	\begin{tikzpicture}[text=black,auto=left]
	\newcommand{\moleculeHX}{-0.6}
	\newcommand{\moleculeCX}{1.75}
	\newcommand{\moleculeHY}{0.985}
	\node [graph vertex](1) at (0,0){};
	\node [graph vertex, draw=red, thick](2) at (\moleculeCX,0){};
	\node [graph vertex](3) at (\moleculeHX,\moleculeHY){};
	\node [graph vertex](4) at (\moleculeHX,-\moleculeHY){};
	\node [graph vertex](5) at (\moleculeCX-\moleculeHX,\moleculeHY){};
	\node [graph vertex](6) at (\moleculeCX-\moleculeHX,-\moleculeHY){};

    \foreach \i in {3,4,5,6}
        \fill[fuchsia] (\i) circle (0.1);
    \foreach \i in {1,2}
        \fill[aqua] (\i) circle (0.1);

	\draw[thick] (1) -- (2);
	\draw[thick] (1) -- (3);
	\draw[thick] (1) -- (4);
	\draw[thick] (5) -- (2);
	\draw[thick] (6) -- (2);
	\draw (2) edge[in=-40,out=40,loop ,gray, thick, -latex, dashed] node[sloped, transform shape, above=-3pt]{\textcolor{black}{$\ScaledAttentionCoeff{aqua}{aqua}{0}{0.8}{6}$}} (2);
	\draw[gray, thick, -latex,decorate,dashed](1) to[bend left] node[pos=0.3,sloped, transform shape,  above=-3pt]{\textcolor{black}{$\ScaledAttentionCoeff{aqua}{aqua}{1}{0.8}{6}$}}  (2);
	\draw[gray, thick, -latex,decorate,dashed](5)  to[bend right] node[pos=0.3, transform shape,sloped, above=-3pt]{\textcolor{black}{$\ScaledAttentionCoeff{fuchsia}{aqua}{1}{0.8}{6}$}}   (2);
	\draw[gray, thick, -latex,decorate,dashed](6)  to[bend left] node[pos=0.3, transform shape,sloped, swap, below=-3pt]{\textcolor{black}{$\ScaledAttentionCoeff{fuchsia}{aqua}{1}{0.8}{6}$}}   (2);
	\draw[gray,thick, -latex,decorate,dashed](3) to[bend left] node[sloped, transform shape, above=-3pt]{\textcolor{black}{$\ScaledAttentionCoeff{fuchsia}{aqua}{2}{0.8}{6}$}} (2);
	\draw[gray,thick, -latex,decorate,dashed](4) to[bend right] node[sloped, swap, transform shape, below=-3pt]{\textcolor{black}{$\ScaledAttentionCoeff{fuchsia}{aqua}{2}{0.8}{6}$}} (2);
\end{tikzpicture}
	\end{subfigure}}
                \resizebox{0.49\linewidth}{!}{
				\begin{subfigure}{0.49\linewidth}\centering
			\begin{tikzpicture}[text=black,auto=left]
	\newcommand{\moleculeHX}{-0.6}
	\newcommand{\moleculeCX}{1.75}
	\newcommand{\moleculeHY}{0.985}
	\node [graph vertex](1) at (0,0){};
	\node [graph vertex, draw=red, thick](2) at (\moleculeCX,0){};
	\node [graph vertex](3) at (\moleculeHX,\moleculeHY){};
	\node [graph vertex](4) at (\moleculeHX,-\moleculeHY){};
	\node [graph vertex](5) at (\moleculeCX-\moleculeHX,\moleculeHY){};
	\node [graph vertex](6) at (\moleculeCX-\moleculeHX,-\moleculeHY){};

    \foreach \i in {3,4,5,6}
        \fill[fuchsia] (\i) circle (0.1);
    \foreach \i in {1,2}
        \fill[aqua] (\i) circle (0.1);

	\draw[thick] (1) -- (2);
	\draw[thick] (1) -- (3);
	\draw[thick] (1) -- (4);
	\draw[thick] (5) -- (2);
	\draw[thick] (6) -- (2);
	\draw (2) edge[in=-40,out=40,loop,gray, thick, -latex, dashed] node[sloped]{\textcolor{black}{\small$a_{11}$}}(2);
	\draw[gray, thick, -latex,decorate,dashed](1) to[bend left] node[above]{\textcolor{black}{\small$a_{13}$}} (2);
	\draw[gray, thick, -latex,decorate,dashed](5) to[bend right] node[midway,above left]{\textcolor{black}{\small$a_{12}$}} (2);
	\draw[gray, thick, -latex,decorate,dashed](6) to[bend left] node[midway, below left]{\textcolor{black}{\small$a_{14}$}} (2);

	\phantom{
		\draw (2) edge[in=-40,out=40,loop ,gray, thick, -latex, dashed] node[sloped, transform shape, above=-3pt]{\textcolor{black}{$\ScaledAttentionCoeff{aqua}{aqua}{0}{0.8}{6}$}} (2);
		\draw[gray, thick, -latex,decorate,dashed](1) to[bend left] node[pos=0.3,sloped, transform shape,  above=-3pt]{\textcolor{black}{$\ScaledAttentionCoeff{aqua}{aqua}{1}{0.8}{6}$}}  (2);
		\draw[gray, thick, -latex,decorate,dashed](5)  to[bend right] node[pos=0.3, transform shape,sloped, above=-3pt]{\textcolor{black}{$\ScaledAttentionCoeff{fuchsia}{aqua}{1}{0.8}{6}$}}   (2);
		\draw[gray, thick, -latex,decorate,dashed](6)  to[bend left] node[pos=0.3, transform shape,sloped, swap, below=-3pt]{\textcolor{black}{$\ScaledAttentionCoeff{fuchsia}{aqua}{1}{0.8}{6}$}}   (2);
		\draw[gray,thick, -latex,decorate,dashed](3) to[bend left] node[sloped, transform shape, above=-3pt]{\textcolor{black}{$\ScaledAttentionCoeff{fuchsia}{aqua}{2}{0.8}{6}$}} (2);
		\draw[gray,thick, -latex,decorate,dashed](4) to[bend right] node[sloped, swap, transform shape, below=-3pt]{\textcolor{black}{$\ScaledAttentionCoeff{fuchsia}{aqua}{2}{0.8}{6}$}} (2);
	}
	\end{tikzpicture}
\end{subfigure}}
	\caption{\label{fig:shared-attention}Invariant-based message passing (left) and graph attention (right) for the molecular graph of ethylene (\textcolor{fuchsia}{hydrogen}, \textcolor{aqua}{carbon}).
		The arrows denote the shared weights (left) and attention coefficients (right)
		for some target marked with red border.}
    \end{figure}

The following toy example on molecular graphs illustrates our idea.
Two node pairs are considered equivalent if:
(i) their atomic numbers and
(ii) their distances match.
In \Cref{fig:shared-attention} (left) there are four node pairs $(\hydrogenAtom, \carbonAtom)$
that are two hops apart.
All such pairs share a single weight, indexed by the tuple $(\hydrogenAtom, \carbonAtom, 2)$.
In this way, every node pair in any molecule can be assigned a weight based on its atomic numbers and pairwise distance.

In \MyGNNs, this concept is generalized: messages are computed for every source-target pair of nodes.
For a given pair, the source node’s embedding is multiplied by the weight associated with that pair’s invariant signature, and the result is added to the target node’s embedding.
Since these weights are shared using graph invariants, the same parameter is reused for all equivalent pairs, both within a graph and across different graphs.
Thus, information can flow between arbitrary nodes in a single layer, rather than being restricted to immediate neighbors
		This makes the learned weights naturally transferable to new graphs that contain similar structural patterns.
Again, we emphasize that graph transformers and ordinary message passing networks infer weights from learned node features rather than from structural invariants.
\Cref{fig:shared-attention} (left) illustrates this process: invariant-based shared weights allow non-adjacent node pairs to send messages based on structural similarity.
In contrast, GAT~\citep{Velickovic2017GraphAN} (right) assigns distinct attention coefficients, based on node features, to each edge and aggregates information only from direct neighbors.

We conclude with the main contributions of this work.
\begin{enumerate}[label=\arabic*)]
	\item \textit{Invariant-Based Weight Sharing:}
	A novel weight sharing principle using graph invariants, enabling systematic reuse of weights across graphs.

	\item \textit{\MyGNN:}
	A novel encoder–decoder architecture using invariant-based weight sharing that combines message passing with transformer-like connectivity.
	\item \textit{Theoretical and Empirical evaluation:}
    We prove that \MyGNNs are at least as expressive as the chosen invariants, giving a straightforward
	way to control model complexity and show that
	invariant-based weight sharing enhances model performance.
\end{enumerate}

\section{Main Approach}\label{sec:RuleGNN}
At the heart of our approach lies a \textit{new principle for learning weights in neural networks}.
Instead of relying on fixed-size weight matrices, we treat all learnable parameters as elements of an unordered collection.
For a given input graph, the relevant weights are \textit{dynamically} selected and assembled into matrices or bias terms according to the chosen graph invariants.
This on‑the‑fly construction of weight matrices allows the model to adaptively reuse parameters across structurally equivalent parts of a graph and even across graphs,
while maintaining full flexibility.
Building on this idea, we instantiate two types of invariant-based weight sharing:

\begin{enumerate}[label=(\alph*)]

\item \label{item:encoder-message-passing}
\textit{Encoder message passing}: weights are indexed by invariant signatures of node pairs (e.g., labels and distance), enabling information flow over arbitrary pairs.

\item \label{item:decoder-pooling-bias}
\textit{Decoder pooling and bias}: weights depend on invariants of single nodes, allowing invariant-based pooling and bias terms.

\end{enumerate}

In both cases, the model constructs the relevant weight structures dynamically from a shared parameter pool,
guaranteeing permutation equivariance in the encoder and permutation invariance in the decoder.
Note that these properties are not automatically guaranteed as in standard MPNNs, since
\MyGNNs learn weights of the adjacency structure itself rather than relying on a fixed adjacency matrix.

\subsection{Preliminaries}\label{subsec:MyGNNpreliminaries}
We denote by $\naturalZero$ the set of natural numbers including zero.
A \textit{graph} is a pair $G=(V, E)$ where $V$ is the set of nodes and $E\subseteq\{\{i,j\}\mid i,j\in V\}$ is the set of edges.
Each graph $G=(V, E)$ is associated with an initial node feature matrix $X^{(0)}\in\mathbb{R}^{|V|\times k}$, where $k\in\mathbb{N}$ is the node feature dimension.
This matrix also fixes an ordering of nodes in $V$.
For a node $i\in V$, its \textit{neighborhood} is denoted by $\Neighbor{i}\coloneqq\{j\in V : \{i,j\} \in E\}$, and its \textit{degree}  is $\operatorname{deg}(i)\coloneqq|\Neighbor{i}|$.
The \textit{distance} between two nodes $i,j\in V$, i.e., the length of a shortest path between them, is written as $d(i,j)$.
A \textit{labeling function} $l:V\rightarrow \naturalZero$ assigns an integer label to each node.

\subsection{Invariant-Based Weight Sharing}\label{subsec:definingrules}
We now formalize the two types (a) and (b) of invariant-based weight sharing from above.
In both cases, graph invariants provide indices into a shared parameter pool from which the corresponding weights are constructed.

\paragraph{Weight Sharing for Message Passing}
For type (a), we associate a learnable weight with each valid invariant triple that characterizes a pair of nodes.
A triple consists of: (i) the label of the first node, (ii) the label of the second node, and (iii) the shortest-path distance between them.
Thus, given a labeling function $l:V\rightarrow\naturalZero$, the triple for a node pair $(v, w)\in V\times V$ is defined as $\tau_{vw}\coloneqq(l(v), l(w), d(v,w))\in \naturalZero^3$.
For molecular graphs, $l$ can assign to each node its atomic number, while for general graphs $l$ may, for example, assign the node's degree or any other structural identifier.
Moreover, the labeling function for the source and target nodes could in principle be chosen differently, but we omit this distinction in our experiments for simplicity.
To control the parameter budget, we restrict ourselves to a finite set of valid triples $\ruleset\subset\naturalZero^3$.
For each valid triple $\tau\in\ruleset$, we assign a learnable scalar weight $\omega_\tau\in\mathbb{R}$.
Node pairs with triples not in $\ruleset$ get weight zero, ensuring sparsity and scalability.
For example, if $\mathcal{T}$ includes triples with distances up to $D$,
then any pair of nodes whose shortest-path distance exceeds $D$ will be assigned zero weight and will not exchange messages.
Let $\weightset_{\ruleset}$ be the set of weights associated with all valid triples.
This defines the invariant mapping
\begin{align}\label{eq:pairs-to-attention-coefficients}
	\MessagePassingFunction^l_{\ruleset}: V\times V\rightarrow\ruleset\cup\{0\}\rightarrow\weightset_{\ruleset}\cup\{0\}
\end{align}
that assigns to each pair of nodes $(v,w)$ a weight $\omega_{\tau_{vw}}$ if $\tau_{vw}\in\ruleset$ and zero otherwise.

While we focus on label-label-distance triples for simplicity, this association is only one example: other invariants (such as edge labels, motif counts, or higher-order structural descriptors) can be incorporated in exactly the same way, see~\Cref{sec:example-molecule-graphs} for an example with edge labels.

\paragraph{Weight Sharing for Bias and Pooling}
For type (b), we assign shared weights to individual nodes rather than node pairs.
Given a labeling function $l:V\rightarrow\naturalZero$ (not necessarily the same as before) and a set of valid labels $\labelset\subset\naturalZero$,
each label $\lambda\in\labelset$ is associated with a learnable weight vector $\omega_\lambda\in\mathbb{R}^k$, where $k$ is the node feature dimension.
The set of all such vectors is denoted by $\weightset_{\labelset,k}$.
This yields
\begin{align}\label{eq:nodes-to-label-weights}
	\BiasFunction^l_{\labelset,k}: V\rightarrow\labelset\cup\{0\}\rightarrow\weightset_{\labelset,k}\cup\{0\}
\end{align}
which assigns each node $v\in V$ a learnable weight vector $\omega_{l(v)}\in\mathbb{R}^k$ if $l(v)\in\labelset$ and zero otherwise.
During encoding, these vectors act as bias terms, while during decoding they are used as pooling weights to aggregate node embeddings.
As with type (a), the choice of labeling function is flexible: for molecules, $l$ may correspond to atomic numbers, while for arbitrary graphs it may use node degree, Weisfeiler–Leman labels, or other structural identifiers.
We will now describe several options for the labeling functions.

\subsection{Labeling Functions}\label{subsec:labeling-nodes}
The choice of labeling function $l$ determines how nodes (or node pairs) are grouped for parameter sharing.
Depending on the application, these labels encode properties, such as atomic numbers in molecular graphs, or richer structural information, such as degrees, Weisfeiler–Leman (WL) labels, or the existence of small patterns.
In our experiments, we consider the following labeling strategies:

\emph{Molecular Labeling.}
For molecular graphs, $l$ assigns to each node its atomic number.

\emph{Weisfeiler-Leman (WL) Labeling.}
The WL algorithm iteratively refines node labels by hashing together a node’s current label with the multiset of labels of its neighbors.
After a fixed number of iterations, this procedure yields labels that capture information about the $i$-hop neighborhood around each node
~\citep{DBLP:journals/jmlr/ShervashidzeSLMB11}.
In our experiments, we use $1$-WL labels with varying depths.

\emph{Pattern Labeling.}
In addition to WL labels, we consider labeling nodes by their participation in small subgraph patterns.
For a given set of patterns (e.g., triangles, cliques, cycles), we count for each node how many embeddings of each pattern include that node.
This count vector is then hashed into a unique integer label, providing additional structural distinctions beyond WL.

These strategies can also be combined, e.g., by concatenating the atomic number, WL label, and pattern labels into a single label.
This concatenation allows the model to capture multiple levels of structural information simultaneously.

\subsection{Invariant-Based Layers}\label{subsec:learningrules}
We now describe how invariant-based weight sharing is integrated into the architecture of  \MyGNNs.
Each encoder and decoder layer dynamically assembles its weight matrices
from a shared parameter pool according to the chosen invariants.
Given an input graph, the corresponding invariant mappings are applied to its nodes (or node pairs) to get the appropriate weights.
\paragraph{Invariant-Based Encoder}
An invariant-based encoder layer consists of two components:  ($1$) a message-passing matrix whose entries are determined by the type (a) invariant mapping $\MessagePassingFunction^{\MPLabel}_{\ruleset}(v,w)$, and
($2$) a bias matrix whose rows are determined by the type (b) mapping  $\BiasFunction^{\BiasLabel}_{\labelset, k}(v)$.
Unlike fixed weight matrices, these matrices are constructed dynamically for each input graph from the layer’s parameter pool.
Concretely, let $G=(V,E)$, with $|V|=n$ be the input graph and $X^{(h)}\in\mathbb{R}^{n \times k}$ the node embeddings after $h$ layers.
Assume that the nodes $v,w\in V$ correspond to the positions $i,j$ in $X^{(h)}$, i.e., they are nodes $i,j$ of the fixed
node ordering of $G$.
Then, for each node pair $(v,w)$, the entry $W_{ij}$ of the message-passing matrix corresponds to the weight $\MessagePassingFunction^{\MPLabel}_{\ruleset}(v,w)$,
where $\MPLabel$ is the labeling function and $\ruleset$ the valid set of triples.
Similarly, the $i$-th row of the bias matrix is given by $\BiasFunction^{\BiasLabel}_{\labelset, k}(v)$.
The encoder updates node embeddings as
\begin{align}\label{eq:rulebasedlayer}
    \hspace{-0.25em}\underbrace{\vphantom{\weightmatrix\left[G, \MessagePassingFunction^{\MPLabel}_{\ruleset},\weightset_{\ruleset}\right]}X^{(h+1)}}_{n\times k}
		\hspace{-0.1em}=\hspace{-0.1em} \sigma\left(\vphantom{\weightmatrix\left[G, \MessagePassingFunction^{\MPLabel}_{\ruleset},\weightset_{\ruleset}\right]}\right.
		\underbrace{\weightmatrix\left[G, \MessagePassingFunction^{\MPLabel}_{\ruleset},\weightset_{\ruleset}\right]}_{\mathclap{n\times n}} \cdot
		\underbrace{\vphantom{\weightmatrix\left[G, \MessagePassingFunction^{\MPLabel}_{\ruleset},\weightset_{\ruleset}\right]}X^{(h)}}_{\mathclap{n\times k}}\hspace{-0.05em} +\hspace{-0.05em}
		\underbrace{\bias\left[G,\BiasFunction^{\BiasLabel}_{\labelset, k}, \weightset_{\mathcal{L},k}\right]}_{\mathclap{n\times k}}
	\left.\vphantom{\weightmatrix\left[G, \MessagePassingFunction^{\MPLabel}_{\ruleset},\weightset_{\ruleset}\right]}\right)
\end{align}
where $\sigma$ is a non-linear activation function.
This formulation defines a form of message passing that:
is not limited to adjacent nodes, since any valid invariant triple can link two nodes,
allows directional effects, since $\MessagePassingFunction^{\MPLabel}_{\ruleset}(v,w)$ can differ from $\MessagePassingFunction^{\MPLabel}_{\ruleset}(w,v)$, and
reuses the same weights for all pairs that share the same triple.
As a result, a single encoder layer can propagate information over arbitrary distances while preserving permutation equivariance.
\Cref{fig:shared-attention-coefficients} (left) illustrates this mechanism for node $v_i$.

\paragraph{Invariant-Based Decoder}
The decoder aggregates the final node embeddings into a fixed-size graph representation using the type (b) mapping.
Instead of mean or max pooling, it performs a weighted pooling where the contribution of each node is determined by its invariant label.
Given the representation $X^{(h)}\in\mathbb{R}^{n \times k}$, we construct a pooling matrix $\weightmatrix$ whose $i$-th column corresponds to the weight vector $\BiasFunction^{l}_{\labelset, m}(v_i)\in\mathbb{R}^m$
retrieved from the decoder's parameter pool $\weightset_{\labelset,m}$.
The aggregated representation is then computed as
\begin{align}\label{eq:agregationlayer}
    \underbrace{\vphantom{\weightmatrix\left[G, \BiasFunction^{\MPLabel}_{\ruleset}, \weightset_{\ruleset}\right]}\mathbf{X}^{(h+1)}}_{m\times k} = \sigma
	\left(\vphantom{\weightmatrix\left[G,\BiasFunction^{l}_{\labelset, m}, \weightset_{\labelset}\right]}\right.
	\frac{1}{n}\underbrace{\weightmatrix\left[G,\BiasFunction^{l}_{\labelset, m}, \weightset_{\labelset}\right]}_{\mathclap{m\times n}} \cdot
	\underbrace{\vphantom{\weightmatrix\left[G,\BiasFunction^{l}_{\labelset, m}, \weightset_{\labelset}\right]}
		\mathbf{X}^{(h)}}_{\mathclap{n\times k}} +
	\underbrace{\vphantom{\weightmatrix\left[G,\BiasFunction^{l}_{\labelset, m}, \weightset_{\labelset}\right]}\bias}_{\mathclap{m\times k}}
	\left.\vphantom{\weightmatrix\left[G,\BiasFunction^{l}_{\labelset, m}, \weightset_{\labelset}\right]}\right)
\end{align}
where $m$ is the desired output dimension and $\bias$ is a standard bias term of fixed size $m\times k$.
The $1/n$ normalization mitigates the effect of graph size.
This pooling strategy preserves permutation invariance and allows the decoder to emphasize structurally important nodes, as determined by the chosen invariant labels, see~\Cref{fig:shared-attention-coefficients} (right).

\paragraph{Multi-Heads}
While encoder and decoder operate with a single set of invariants, \MyGNNs naturally extend to a multi-head architecture.
Each encoder or decoder head is instantiated in parallel, with its own parameter pool and potentially different labeling functions.
The heads are concatenated and combined by a feed-forward layer before proceeding to the next stage.
This multi-head design allows the model to capture complementary structural patterns within the same layer, see~\Cref{fig:substructure-counting-heads}.
\paragraph{Backpropagation} \MyGNNs use standard backpropagation.
Once the dynamic weight matrices are constructed, gradient computation proceeds automatically through the computational graph.

\begin{figure}[t]\centering
    \begin{tikzpicture}[scale=0.75]\centering
        \foreach \i in {1, ..., 6}
             \node[graph vertex, fill=white] (\i) at ({cos(\i*60)}, {sin(\i*60)}) {};
        \foreach \i in {1,...,6}{
            \pgfmathtruncatemacro{\j}{mod(\i,6) + 1}
            \draw (\i) -- (\j);
        }
        \node at (0,-1.5) {$G_1$};
        \foreach \n in {1,...,6}
            {
                \fill[aqua] (\n) circle (0.1);
            }

        	\draw[gray, thick, -latex,decorate,dashed](1) to[bend left] node[sloped, transform shape,  above=-3pt]{\textcolor{black}{$\ScaledAttentionCoeff{aqua}{aqua}{3}{1.2}{6}$}}  (4);

        \end{tikzpicture}
    \hspace{3cm}
    \begin{tikzpicture}[scale=0.75]
        \foreach \i in {1, ..., 6}
             \node[graph vertex, fill=white] (C1\i) at ({cos(\i*60+30)}, {sin(\i*60+30)}) {};
        \foreach \i in {1,...,6}{
            \pgfmathtruncatemacro{\j}{mod(\i,6) + 1}
            \draw (C1\i) -- (C1\j);
        }
        \foreach \i in {1, ..., 6}
             \node[graph vertex, fill=white] (C2\i) at ({2*cos(30) + cos(\i*60+30)}, {sin(\i*60+30)}) {};
        \foreach \i in {1,...,6}{
            \pgfmathtruncatemacro{\j}{mod(\i,6) + 1}
            \draw (C2\i) -- (C2\j);
        }
        \foreach \n in {5,6}
            {
                \fill[fuchsia] (C1\n) circle (0.1);
            }
        \foreach \n in {1,4}
            {
                \fill[violet] (C1\n) circle (0.1);
                \fill[violet] (C2\n) circle (0.1);
            }
        \foreach \n in {2,3}
            {
                \fill[tab_orange] (C1\n) circle (0.1);
            }
        \foreach \n in {5,6}
            {
                \fill[tab_orange] (C2\n) circle (0.1);
            }
        	\draw[gray, thick, -latex,decorate,dashed](C12) to[bend left=20] node[ sloped, transform shape,  above=-3pt]{\textcolor{black}{$\ScaledAttentionCoeff{tab_orange}{tab_orange}{4}{1.2}{6}$}}  (C26);
        	\draw[gray, thick, -latex,decorate,dashed](C13) to[bend left=-10] node[pos=0.2, sloped, transform shape,  above=-3pt]{\textcolor{black}{$\ScaledAttentionCoeff{tab_orange}{tab_orange}{5}{1.2}{6}$}}  (C26);

        \node at (1,-1.5) {$G_3$};
        \node at (3.4,0) {};
        \end{tikzpicture}
\\
        \begin{tikzpicture}[scale=0.75]
        \foreach \i in {1, ..., 6}
             \node[graph vertex, fill=white] (\i) at ({cos(\i*60)}, {sin(\i*60)}) {};
        \draw (2) -- (3);
        \draw (3) -- (4);
        \draw (4) -- (2);

        \draw (5) -- (6);
        \draw (6) -- (1);
        \draw (1) -- (5);

        \node at (0,-1.5) {$G_2$};
        \foreach \n in {1,...,6}
            {
                \fill[aqua] (\n) circle (0.1);
            }
        \end{tikzpicture}
\hspace{3cm}
        \begin{tikzpicture}[scale=0.75]
            \foreach \i in {1, ..., 5}
                 \node[graph vertex, fill=white] (C1\i) at ({cos(\i*72)}, {sin(\i*72)}) {};
            \foreach \i in {1,...,5}{
                \pgfmathtruncatemacro{\j}{mod(\i,5) + 1}
                \draw (C1\i) -- (C1\j);
            }
            \foreach \i in {1, ..., 5}
                 \node[graph vertex, fill=white] (C2\i) at ({3 + cos(\i*72+36)}, {sin(\i*72+36)}) {};
            \foreach \i in {1,...,5}{
                \pgfmathtruncatemacro{\j}{mod(\i,5) + 1}
                \draw (C2\i) -- (C2\j);
            }
            \draw (C15) -- (C22);
            \node at (1.5,-1.5) {$G_4$};
            \foreach \n in {2,3}
                {
                    \fill[tab_orange] (C1\n) circle (0.1);
                }
            \foreach \n in {4,5}
                {
                    \fill[tab_orange] (C2\n) circle (0.1);
                }
            \foreach \n in {1,4}
                {
                    \fill[violet] (C1\n) circle (0.1);
                }
            \foreach \n in {1,3}
                {
                    \fill[violet] (C2\n) circle (0.1);
                }
            \foreach \n in {5}
                {
                    \fill[fuchsia] (C1\n) circle (0.1);
                }
            \foreach \n in {2}
                {
                    \fill[fuchsia] (C2\n) circle (0.1);
                }
                    	\draw[gray, thick, -latex,decorate,dashed](C12) to[bend left=10] node[sloped, transform shape,  above=-3pt]{\textcolor{black}{$\ScaledAttentionCoeff{tab_orange}{tab_orange}{5}{1.2}{6}$}}  (C25);
                    	\draw[gray, thick, -latex,decorate,dashed](C13) to[bend right=10] node[pos=0.45, sloped, transform shape,  below=-3pt]{\textcolor{black}{$\ScaledAttentionCoeff{tab_orange}{tab_orange}{5}{1.2}{6}$}}  (C25);
        \end{tikzpicture}
    \caption{\label{fig:expressivity-MyGNN} Non-isomorphic graph pairs $G_1, G_2$ and $G_3, G_4$ that are indistinguishable by the 1-WL test, and therefore by standard message-passing GNNs, but distinguishable by~\MyGNN. Node colors indicate $1$-WL labels.
        Arrows mark messages in~\MyGNN that enable distinguishing the graphs.
    }
\end{figure}

\subsection{\MyGNNs}\label{subsec:MyGNNs}
A \MyGNN is built by stacking one or more invari\-ant-based encoder layers followed by a decoder that aggregates the final node embeddings into a graph-level representation.
This design offers several advantages:
\textit{(i) Adaptivity:}
The model is not constrained to fixed-size weight matrices; it dynamically adapts to graphs of varying sizes.
\textit{(ii) Expressiveness:}
Its ability to capture structural patterns is determined by the choice of invariants and is not limited by standard neighborhood-based schemes.
\textit{(iii) Long-range interactions:}
It propagates information across arbitrary node pairs within a single layer, avoiding over-smoothing~\cite{DBLP:journals/corr/abs-2006-13318} and over-squashing~\cite{DBLP:conf/iclr/0002Y21}.
\textit{(iv) Permutation awareness:}
The encoder is permutation equivariant, and the decoder is permutation invariant by construction.
\textit{(v) Transferability and interpretability:}
Since parameters are indexed by structural invariants, the learned weights can be transferred across different datasets, and each parameter has a clear structural meaning.
We now turn to a formal analysis of the properties of \MyGNNs (see the proofs in~\Cref{sec:proofs}).
\begin{proposition}\label{prop:permutation-equivariance}
    The encoder layer as defined in~\Cref{eq:rulebasedlayer} is permutation equivariant, i.e.,
    for each permutation $\pi$ of the node order of the input graph it holds
    $\pi(\mathbf{X}^{(h+1)}) = \sigma(\weightmatrix\cdot \pi(\mathbf{X}^{(h)}) + \bias)$.
\end{proposition}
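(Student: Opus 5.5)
The plan is to represent the permutation $\pi$ by a permutation matrix $P\in\{0,1\}^{n\times n}$, so that reordering the nodes sends $\mathbf{X}^{(h)}\mapsto P\mathbf{X}^{(h)}$. The key point is that $\weightmatrix$ and $\bias$ in \Cref{eq:rulebasedlayer} are not fixed. They are assembled from the graph $G$ together with its node ordering, which $X^{(0)}$ fixes. So the statement should be read with $\weightmatrix$ and $\bias$ denoting the matrices built for the reordered input. In that reading the claim is
\[
P\,\sigma\bigl(\weightmatrix\mathbf{X}^{(h)}+\bias\bigr)=\sigma\bigl(\weightmatrix_\pi P\mathbf{X}^{(h)}+\bias_\pi\bigr),
\]
where $\weightmatrix_\pi,\bias_\pi$ are the dynamically constructed matrices for the permuted ordering.

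\textbf{Step 1: the message-passing matrix transforms as $\weightmatrix_\pi=P\weightmatrix P^{\top}$.} Under the new ordering, the node sitting at position $i$ is the node previously at position $\pi^{-1}(i)$. Entry $(i,j)$ of $\weightmatrix_\pi$ is $\MessagePassingFunction^{\MPLabel}_{\ruleset}(v,w)$ for the nodes $v,w$ now at positions $i,j$. This value depends only on the triple $\tau_{vw}=(\MPLabel(v),\MPLabel(w),d(v,w))$ and on membership in $\ruleset$. Since $\MPLabel$ is a graph invariant labeling, it assigns the same label to the same node regardless of the ordering. The shortest-path distance is also intrinsic to $G$. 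Hence $(\weightmatrix_\pi)_{ij}=\weightmatrix_{\pi^{-1}(i)\pi^{-1}(j)}$, which is exactly $P\weightmatrix P^{\top}$.

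\textbf{Step 2: the bias matrix transforms as $\bias_\pi=P\bias$.} The same argument applies to the type (b) mapping $\BiasFunction^{\BiasLabel}_{\labelset,k}$. Row $i$ of $\bias_\pi$ is the weight vector of the node now at position $i$, so rows are simply permuted.

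\textbf{Step 3: combine.} Using $P^{\top}P=I$,
\[
\weightmatrix_\pi P\mathbf{X}^{(h)}+\bias_\pi=P\weightmatrix\mathbf{X}^{(h)}+P\bias=P\bigl(\weightmatrix\mathbf{X}^{(h)}+\bias\bigr).
\]
Since $\sigma$ is applied entrywise, it commutes with row permutations, which yields the claim.

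\textbf{Main obstacle: Step 1.} The step most likely to be questioned is the hidden assumption that the labeling functions are genuinely invariant. This is unproblematic for atomic numbers and degrees. For WL labels or pattern-count labels, it holds when the label is a function of the node's isomorphism-invariant neighborhood data, and hashing is a fixed function of that data. I would state this as the standing assumption on $l$: for any isomorphism $\varphi$, $l(\varphi(v))=l(v)$. With that assumption, the entrywise identity in Step 1 follows directly, and the rest is routine linear algebra.
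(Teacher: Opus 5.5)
Your proof is correct and follows the same route as the paper's: the dynamically assembled message-passing and bias matrices are rebuilt for the permuted ordering, so they transform as $P\weightmatrix P^{\top}$ and $P\bias$, and the pre-activation becomes $P(\weightmatrix\mathbf{X}^{(h)}+\bias)$. You make explicit what the paper leaves implicit, namely the permutation-matrix bookkeeping, the fact that $\sigma$ acts entrywise, and the requirement that the labeling functions depend on the node rather than its position (with any hashing fixed); this sharpens the paper's argument without changing it.
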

\begin{proposition}\label{prop:permutation-invariance}
	The decoder as defined in~\Cref{eq:agregationlayer} is permutation invariant, i.e.,
    for each permutation $\pi$ of the node order of the input graph it holds
    $\mathbf{X}^{(h+1)} = \sigma(\frac{1}{n}\weightmatrix\cdot \pi(\mathbf{X}^{(h)}) + \bias)$.
\end{proposition}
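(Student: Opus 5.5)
The plan is to make precise what happens to the dynamically constructed pooling matrix $\weightmatrix = \weightmatrix\left[G,\BiasFunction^{l}_{\labelset, m}, \weightset_{\labelset}\right]$ when the node order is permuted, and then to check that the permutation cancels in the product. Represent the permutation $\pi$ by a permutation matrix $P\in\{0,1\}^{n\times n}$, so that $\pi(\mathbf{X}^{(h)}) = P\mathbf{X}^{(h)}$; row $i$ of $P\mathbf{X}^{(h)}$ is the embedding of the node placed at position $i$ in the new order, i.e.\ of $v_{\pi^{-1}(i)}$. The statement is understood as follows: $\weightmatrix$ is rebuilt from the permuted input graph, and the resulting output equals the original $\mathbf{X}^{(h+1)}$.

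\textbf{Step 1.} Column $i$ of $\weightmatrix$ is $\BiasFunction^{l}_{\labelset, m}(v_i)$. This vector depends only on the label $l(v_i)$, through $\omega_{l(v_i)}$ or $0$.
\begin{itemize}
\item The labeling $l$ is a graph invariant (atomic number, WL label, pattern counts). Hence a node keeps its label under reordering, by invariance of $l$ under isomorphism applied to the relabeled graph.
\item So the pooling matrix of the permuted graph is the original one with its columns permuted in the same way, namely $\weightmatrix' = \weightmatrix P^{\top}$.
\end{itemize}
This is the only step with real content, and I expect it to be the main (mild) obstacle. The point to state clearly is that invariance of the labeling, together with the fact that the parameter pool $\weightset_{\labelset,m}$ is indexed by labels and not by positions, makes the weights travel with the nodes.

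\textbf{Step 2.} Compute the permuted product:
\[
\tfrac{1}{n}\weightmatrix' (P\mathbf{X}^{(h)}) = \tfrac{1}{n}\weightmatrix P^{\top}P\mathbf{X}^{(h)} = \tfrac{1}{n}\weightmatrix\mathbf{X}^{(h)} ,
\]
using $P^{\top}P=I$.

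Equivalently, in summation form, the product is $\frac1n\sum_{v\in V}\BiasFunction(v)\,x_v^{\top}$. This is a sum over nodes rather than positions, so it is independent of the order. I would present this form as well, since it gives the cleanest intuition.

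\textbf{Step 3.} Finish the remaining terms.
\begin{itemize}
\item The bias $\bias$ is a fixed $m\times k$ matrix that does not depend on the node order.
\item The normalization $n$ is unchanged by reordering.
\item $\sigma$ is applied entrywise to an unchanged matrix.
\end{itemize}
Therefore $\mathbf{X}^{(h+1)} = \sigma(\frac{1}{n}\weightmatrix'\,\pi(\mathbf{X}^{(h)}) + \bias)$, which is the claim. The same bookkeeping with $P\weightmatrix P^{\top}$ is what underlies \Cref{prop:permutation-equivariance}, so the two arguments can be written in parallel.
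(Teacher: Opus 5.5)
Your proposal is correct and follows the same argument as the paper: the pooling matrix rebuilt for the reordered graph has its columns permuted in the same way as the rows of $\mathbf{X}^{(h)}$, so the product, and hence the output, is unchanged. Your version makes the paper's informal argument explicit through $\weightmatrix' = \weightmatrix P^{\top}$ and $P^{\top}P = I$. It also clarifies that the $\weightmatrix$ in the statement must be read as the matrix reassembled for the permuted input, which the paper's shorthand $\weightmatrix\cdot \pi(\mathbf{X}^{(h)}) = \weightmatrix\cdot \mathbf{X}^{(h)}$ leaves implicit.
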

\begin{corollary}\label{cor:permutation-invariance}
	\MyGNNs are permutation invariant.
\end{corollary}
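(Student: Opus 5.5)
The plan is to write a \MyGNN as a composition $f = \mathrm{Dec}\circ \mathrm{Enc}_L\circ\cdots\circ \mathrm{Enc}_1$, possibly with multi-head blocks and a final feed-forward layer, and then combine \Cref{prop:permutation-equivariance} with \Cref{prop:permutation-invariance}. The key point is that the weight matrices are not fixed. They are rebuilt from the permuted graph: if $\pi$ is applied to the node order, then the dynamically assembled message-passing matrix becomes $P\weightmatrix P^\top$ and the bias becomes $P\bias$, where $P$ is the permutation matrix of $\pi$. This holds because $\tau_{vw}=(l(v),l(w),d(v,w))$ and $l(v)$ depend only on the nodes and not on their positions, since labels and distances are graph invariants.

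The argument then proceeds by induction over the encoder layers. Let $X^{(h)}_\pi$ denote the embeddings computed on the reordered input. The claim is $X^{(h)}_\pi = P X^{(h)}$. The base case is the definition of reordering the input features, $X^{(0)}_\pi = PX^{(0)}$. For the inductive step, apply \Cref{prop:permutation-equivariance}. Concretely,
$$\sigma\bigl(P\weightmatrix P^\top P X^{(h)} + P\bias\bigr) = P\,\sigma\bigl(\weightmatrix X^{(h)}+\bias\bigr),$$
since $P^\top P = I$ and $\sigma$ acts entrywise. For multi-head layers, each head is equivariant, so concatenating the heads along the feature dimension commutes with row permutation. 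The subsequent feed-forward layer acts row-wise on the features and therefore also commutes with $P$.

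At the decoder, \Cref{prop:permutation-invariance} (equivalently, the fact that the reassembled pooling matrix is $\weightmatrix P^\top$) gives
$$\tfrac1n \weightmatrix P^\top P X^{(L)} + \bias = \tfrac1n\weightmatrix X^{(L)}+\bias.$$
So the graph-level output is unchanged. Any head after pooling only sees this invariant quantity, which completes the proof.

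The main obstacle is bookkeeping rather than mathematics: the propositions are stated loosely, with $\weightmatrix$ and $\bias$ implicitly re-indexed. The care needed is to argue explicitly that the assembled matrices transform as $P\weightmatrix P^\top$, $P\bias$ and $\weightmatrix P^\top$. This follows directly from the entrywise definitions via $\MessagePassingFunction^l_{\ruleset}$ and $\BiasFunction^l_{\labelset,k}$, applied to the node pair behind positions $(i,j)$.
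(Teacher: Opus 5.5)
Your proposal is correct and follows the same route as the paper: the paper's proof simply composes the permutation equivariance of the encoder layers (\Cref{prop:permutation-equivariance}) with the permutation invariance of the decoder (\Cref{prop:permutation-invariance}). Your explicit induction over the layers, the handling of multi-head and feed-forward layers, and the transformation rules $P\weightmatrix P^\top$, $P\bias$, $\weightmatrix P^\top$ for the dynamically assembled matrices make precise what the paper's proofs of those propositions only state informally.
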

\begin{proposition}\label{prop:expressivity-MyGNN}
	Each~\MyGNN is at least as expressive as the labeling function $l$ of the decoder, i.e.,
	if two graphs $G$ and $G'$ are distinguishable by $l$ then they are also distinguishable by the~\MyGNN.
\end{proposition}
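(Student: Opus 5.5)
We interpret ``distinguishable by $l$'' as: the multisets of labels $\{\!\{l(v) : v\in V\}\!\}$ and $\{\!\{l(v') : v'\in V'\}\!\}$ differ. We may also restrict to relative frequencies, which is what the $1/n$ normalization naturally sees; if $|V|\neq|V'|$ we handle this separately. The goal is to exhibit \emph{one} choice of parameters of a \MyGNN, including its encoder layers, for which the outputs on $G$ and $G'$ differ. Since the statement asks for at least as much expressivity as $l$, an existence argument suffices.

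\textbf{Step 1: neutralize the encoder.} Set all message-passing weights $\omega_\tau$ to $0$, and set every encoder bias vector $\omega_\lambda$ to a fixed constant vector $c\mathbf{1}$. Then each encoder layer outputs $\sigma(c)\mathbf{1}$ in every row, independently of the graph. Choose $c$ so that $\sigma(c)\neq 0$ (for ReLU take $c=1$). After the encoder stack every node therefore carries the same embedding $e=\sigma(\cdots)\mathbf{1}\neq 0$. By Proposition~\ref{prop:permutation-equivariance} and Corollary~\ref{cor:permutation-invariance}, the result does not depend on the node ordering.

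\textbf{Step 2: the decoder computes a label histogram.} With constant rows $e$, the first column of $\frac1n \weightmatrix X^{(h)}$ has $r$-th entry
\[
\frac{e_1}{n}\sum_{v\in V}\bigl(\omega_{l(v)}\bigr)_r=e_1\sum_{\lambda\in\labelset}\omega_{\lambda,r}\,p_G(\lambda),
\]
where $p_G(\lambda)$ is the fraction of nodes carrying label $\lambda$. Take $m\ge 1$ and choose the weights $\omega_{\lambda,1}$ generically, for example as algebraically independent reals or as distinct powers of a large base. Then the map $p\mapsto\sum_\lambda\omega_{\lambda,1}p(\lambda)$ is injective on the finitely many rational histograms that occur. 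Alternatively, use $m=|\labelset|$ with $\omega_\lambda$ the standard basis vectors, so that the pre-activation is exactly the histogram vector. Finally pick the decoder bias $\bias$ and $\sigma$ so that the final activation is injective on the relevant range, e.g.\ ReLU with a large positive bias, which acts as a shift.

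\textbf{Step 3: main obstacle.} The main difficulty is reconciling the notion of ``distinguishable by $l$'' with what the model actually sees. Labels outside $\labelset$ receive the zero vector, so we assume $\labelset$ contains all labels occurring in $G$ and $G'$, or otherwise compare only on $\labelset$. The $1/n$ factor also loses absolute counts when $G'$ is a blown-up copy of $G$. To fix this I would exploit the graph-independent bias in the encoder differently: add one valid triple $\tau=(\lambda,\lambda',0)$ for the self-loops, or simply note that, when the sizes differ, the encoder could encode $n$. As a fallback I would state the result for normalized label distributions, matching the decoder's definition.
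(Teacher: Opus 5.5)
With the fallback you give in Step~3, your proof is correct, and its core is the paper's own argument: feed constant embeddings into the decoder and choose the decoder weights so that the output reads off label counts. The paper uses $m=1$, with $\omega_a=1$ for a label $a$ whose counts differ and $\omega_b=0$ otherwise; this is the simplest case of your basis-vector choice. The paper is looser than you. It takes a \MyGNN consisting of the decoder alone, with all-ones inputs, and compares pre-activations, ignoring $\frac1n$, $\bias$ and $\sigma$. Your Step~1 is what is actually needed to support the claim for \emph{each} \MyGNN, but it has a caveat. It requires every node's encoder-bias label to lie in $\labelset$; otherwise that node ends with $\sigma(0)=0$ for tanh or ReLU and silently drops out of the decoder sum. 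It also does not apply to encoders without bias, as in the paper's experiments, unless the self-loop triples $(\lambda,\lambda,0)$ of all occurring labels are valid.

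The paper removes your Step~3 obstacle by hypothesis: its proof assumes $|V|=|V'|=n$, and then label counts and label frequencies carry the same information. Your fallback, stating the result for normalized label distributions, contains this case and is slightly more general. Adopt it outright, because the other fixes you suggest do not work.

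A triple at distance $0$ only adds $\omega_\tau x_v$ to node $v$ and carries no information about $n$. More fundamentally, no repair exists in general. Let $G'$ be the disjoint union of two copies of $G$, and use local labelings (atomic numbers, degrees, WL or pattern labels) in all layers. Pairs in different copies have no finite distance and therefore get weight $0$. By induction over the layers, every node of $G'$ then carries the same embedding as its original in $G$, so the decoder's $\frac{1}{2n}$-normalized sum equals the $\frac1n$-normalized one for $G$. Hence no \MyGNN separates $G$ from $G'$, even though all nonzero label counts double.

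So the raw-count statement for unequal sizes is false, not merely hard to prove. Letting the encoder compute $n$ works only under extra hypotheses, such as connectivity and $D$ at least the diameter.
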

\begin{corollary}\label{cor:expressivity-MyGNN}
    For non-isomorphic graphs $G$ and $G'$ it exists a~\MyGNN distinguishing $G$ and $G'$.
\end{corollary}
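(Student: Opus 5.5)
The plan is to reduce the corollary to \Cref{prop:expressivity-MyGNN} by choosing a decoder labeling function that separates $G$ and $G'$. First, I will say precisely what ``distinguishable by $l$'' means: the multisets $\{\!\{l(v) : v\in V\}\!\}$ and $\{\!\{l(v') : v'\in V'\}\!\}$ differ. Here $l$ must be a graph invariant, so that isomorphic graphs receive identical label multisets. With that in place, it is enough to exhibit one invariant labeling function $l$ whose label multisets differ on $G$ and $G'$, and to plug it into the decoder of a \MyGNN. By \Cref{prop:expressivity-MyGNN}, the resulting \MyGNN then distinguishes $G$ and $G'$.

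To build $l$, I will use a complete invariant. Fix a canonical form $\mathrm{can}$, that is, a map from graphs to integers with $\mathrm{can}(H)=\mathrm{can}(H')$ if and only if $H\cong H'$. Such a map exists, for example by taking the lexicographically smallest adjacency matrix over all node orderings and encoding it as an integer. Define $l(v)\coloneqq \mathrm{can}(G)$ for every node $v$ of a graph $G$. This is trivially an invariant, since it does not depend on the node order. If $G\not\cong G'$, then every node of $G$ gets the label $\mathrm{can}(G)$, and this label never appears in $G'$. So the label multisets differ.

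I would also sketch why \Cref{prop:expressivity-MyGNN} applies here, in case its proof depends on the form of $\labelset$. Take $\labelset=\{\mathrm{can}(G),\mathrm{can}(G')\}$ and a single encoder layer with $\weightset_{\ruleset}=\emptyset$, so that $\weightmatrix=0$. For the encoder bias, set $\omega_\lambda=\mathbf{1}$ for both labels. Then, with $\sigma$ chosen so that $\sigma(1)\ne 0$ (for instance ReLU), we get $X^{(1)}=\sigma(\mathbf{1})$ on every node, independent of the input features. In the decoder, use $m=1$ with $\omega_{\mathrm{can}(G)}=1$, $\omega_{\mathrm{can}(G')}=0$ and $\bias=0$. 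The pooled output $\frac1n\weightmatrix X^{(1)}$ is then $\sigma(1)$ for $G$ and $0$ for $G'$. Applying the outer $\sigma$ keeps these values different, provided $\sigma$ is injective on $\{0,\sigma(1)\}$, which holds for ReLU. Note that the $1/n$ normalisation is harmless here, because each graph has only a single label.

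The main obstacle is making sure the labeling function used in the corollary is admissible in the paper's framework. The concern is that it may look like ``cheating'', since computing it is as hard as graph isomorphism. The statement is purely existential, though, so efficiency is irrelevant. If the paper insists on node-local labels, I would instead use $l(v)\coloneqq$ the canonical form of the pair $(G,v)$, rooted at $v$. This is still invariant and still yields disjoint label sets for non-isomorphic graphs, so the same argument goes through.
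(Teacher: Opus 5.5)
Your proof is correct. It uses the same reduction as the paper, namely \Cref{prop:expressivity-MyGNN} applied to a labeling function that separates $G$ and $G'$, but you pick a different separating labeling. The paper takes the $(k+1)$-dimensional Weisfeiler--Leman labels, with $k$ the larger treewidth of $G$ and $G'$, and cites Dvo\v{r}\'ak for the fact that these separate the graphs. You take the trivial complete invariant $l(v)=\mathrm{can}(G)$, or its rooted variant.

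Your choice needs no external theorem. It also needs no care in turning colours of $(k+1)$-tuples into node labels.

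Your explicit instantiation (zero message-passing weights, constant encoder bias, decoder weights $1$ and $0$) also removes three tacit assumptions in the paper's proof of \Cref{prop:expressivity-MyGNN}. That proof assumes $|V|=|V'|$, assumes all-ones input features, and uses a decoder-only network, although a \MyGNN is defined to have at least one encoder layer. The size assumption matters: because of the $1/n$ factor the decoder compares label proportions, not counts. For example, a triangle and two disjoint triangles have different $1$-WL histograms but identical pooled outputs. Your construction gives $\sigma(\sigma(1))$ versus $\sigma(0)$ whatever the graph sizes, and your encoder makes the features constant by design.

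What the paper's route buys is a separating invariant drawn from the WL hierarchy that the architecture actually uses, with its dimension tied to treewidth. That fits the paper's theme of controlling expressivity through the choice of invariants. Your canonical-form labeling instead makes plain that the corollary is essentially tautological once arbitrary invariants are admitted.
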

Beyond labels, distance-based message passing further strengthens expressivity.
\Cref{fig:expressivity-MyGNN} shows graphs that are indistinguishable by the $1$-WL test, and therefore by standard MPNNs but can be distinguished by~\MyGNN using $1$-WL node labels.

\paragraph{Time and Space Complexity}
The runtime of~\MyGNNs consists of two parts: (i) a preprocessing step, where node labels and pairwise distances are computed, and (ii) the actual training and inference.
The cost of computing node labels depends on the chosen invariants but is in our case negligible in practice compared to model training (see \Cref{tab:preprocessing-times}).
Pairwise distance computations require $O(n(n+m))$ time per graph with $n$ nodes and $m$ edges but need only computed once for the entire dataset.
Let $D$ denote the maximum propagation distance and $N$ the number of distinct node labels in the dataset.
This yields at most $N^2\cdot D$ parameters for the message-passing weights.
In practice (cf.~\Cref{tab:details-best-runs}) the number is significantly smaller: not all label pairs occur at every distance, and weights unused in the dataset are discarded.
If the label space is very large, $N$ can also be restricted explicitly.
Assigning weights to node pairs requires $O(n^2)$ time and $O(n^2)$ space to store the mapping from node pairs to their corresponding weights.
Overall, the asymptotic complexity is comparable to graph transformers, but with the benefit of
flexibility in reducing the number of parameters through the choice of invariants.

\subsection{\MyGNN is a MPNN with Learnable Adjacency}
The key difference of the message passing in \MyGNNs and ordinary MPNNs lies in the dynamic construction of weight matrices based on invariant mappings.
Thus, message passing in  \MyGNNs can be interpreted as message passing in GCNs with learnable adjacency matrices.
Note that the learned adjacency is beyond the fixed graph structure, i.e., beyond the edges defined in the input graph.
The following comparison of the message passing in GCN~\ref{eq:MPNN} and \MyGNN~\ref{eq:ShareGNN}
illustrates this:
\begin{align}
			\label{eq:MPNN}\mathbf{X}^{(h+1)} &=& \sigma(&&\underbrace{\mathbf{D}^{-\frac{1}{2}}\mathbf{\tilde{A}}\mathbf{D}^{-\frac{1}{2}}}_{n\times n}&\cdot&\underbrace{\mathbf{X}^{(h)}}_{n\times k}&\cdot&\underbrace{\weightmatrix^{(h)}}_{k\times k}&&)\\
			\label{eq:ShareGNN}\mathbf{X}^{(h+1)} &=& \sigma(&&\underbrace{\Theta^{(h)}}_{n\times n}&\cdot&\underbrace{\mathbf{X}^{(h)}}_{n\times k}&\cdot&\underbrace{\weightmatrix^{(h)}}_{k\times k}&&)
\end{align}
Importantly, in contrast to GCNs where the size of $\weightmatrix$ depends only on the feature dimension $k$,
the size of $\Theta$ is \textit{not fixed} and
depends on the input graph size.
Note that this novel approach is only
possible due to the dynamic weight sharing mechanism of \MyGNNs.
In fact, $\Theta$ is \textit{not} learned directly but assembled from invariant-indexed scalar weights.

\section{Related Work}\label{sec:related-work}
We position our approach in the context of prior work on weight sharing and graph neural networks.

\paragraph{Weight sharing}
Weight sharing was first introduced in neural networks by
~\cite{10.5555/104279.104293} and became a cornerstone of convolutional networks
~\citep{DBLP:conf/nips/CunBDHHHJ89}, where shared kernels provide translation invariance and parameter efficiency.
Extending these ideas to irregular structures such as graphs has been challenging because graphs lack a regular grid structure.

\paragraph{MPNNs}
In message passing networks
~\citep{DBLP:conf/iclr/KipfW17,DBLP:conf/icml/GilmerSRVD17,Hamilton2017InductiveRL}
node embeddings are updated by aggregating information from neighbors.
Variants such as GAT~\citep{Velickovic2017GraphAN} and GATv2~\citep{DBLP:conf/iclr/Brody0Y22}
implement attention mechanisms to learn the importance of neighboring nodes.
These models typically learn weights and importance between neighboring nodes based on node features.

\paragraph{Beyond MPNNs}
Several approaches overcome the locality and limitations of standard message passing
by introducing global interactions and richer structural information.
These include graph transformers
~\citep{DBLP:conf/nips/YunJKKK19,DBLP:conf/ijcai/ShiHFZWS21,DBLP:conf/iclr/Brody0Y22,DBLP:conf/kdd/ZhuWS0Z23,DBLP:journals/corr/abs-2402-10793},
which enable all node pairs to interact in a single layer through learned attention, as well as approaches based on subgraph-level reasoning
\citep{DBLP:conf/icml/ZhangFDHW23,DBLP:conf/icml/PunyLKML23,DBLP:conf/nips/PaolinoMWK24,DBLP:conf/kdd/YanZG0Z24},
higher-order representations using simplicial and cellular complexes
~\citep{DBLP:conf/nips/BodnarFOWLMB21,DBLP:conf/icml/BodnarF0OMLB21},
provably powerful architectures~\citep{DBLP:conf/nips/MaronBSL19},
or positional/distance encodings~\citep{DBLP:conf/nips/LiWWL20,DBLP:conf/nips/KreuzerBHLT21,DBLP:conf/nips/RampasekGDLWB22,DBLP:journals/tmlr/FranksECWSFK25}.
However, the works still parameterize their operations in terms of edges, attention scores, or node features, without a principled mechanism for structure-aware weight sharing.
\paragraph{R-GCN}
We would like to particularly highlight the related work on Relational Graph Convolutional Networks (R-GCN) by~\citet{10.1007/978-3-319-93417-4_38} as
weights are shared across edges of the same relation type in multi-relational graphs, effectively tying parameters to a fixed set of relation labels.
    In contrast, \MyGNN ties weights by structural invariants computed from the graph itself,
which allows sharing across structurally equivalent node pairs even when no explicit relation types are given.
    This decouples the sharing mechanism from a predefined relation vocabulary and enables transfer across graphs with
    different structures but similar invariant patterns.
    Moreover, their approach is based on learnable normalizing factors while we directly consider learnable relations (based on graph-invariants).
    Besides an encoder, we also provide a decoder to handle graph-level tasks.
    \citet{10.1007/978-3-319-93417-4_38} consider only node classification and link prediction tasks as their weight matrices are of fixed size in contrast to our dynamic sized weight matrices.
    Notably, the dynamic-sized weight matrices are the main practical challenges (on the implementation side).

\section{Experiments}\label{sec:Experiments}

Our experiments address three main questions:
(1) \textit{Does the invariant-based weight sharing extension improve over ordinary MPNNs?}
(2) \textit{Do expressive graph invariants also lead to expressive models in practice?}
(3) \textit{Is \MyGNN scalable to real-world datasets with many graphs?}
Additionally, we conduct an ablation study to assess the impact of different model modifications on performance.
We provide the experimental details (\Cref{sec:evaluation-details}) including the dataset details (\Cref{subsec:details-on-the-datasets}), the full experimental setup including hyperparameter configurations~(\Cref{sec:evaluation-details}), as well as extended results and more ablation studies~(\Cref{sec:additional-results}) in the
supplementary material.
To reproduce the experiments see \href{https://github.com/fseiffarth/SimpleGNN}{https://github.com/fseiffarth/SimpleGNN}.

\subsection{Invariant-Based Weight Sharing vs. MPNNs}\label{subsec:results}
We show the improvement due to structure-aware weight sharing compared to standard MPNN weight sharing on diverse graph classification tasks, including molecular graphs, social graphs, and synthetic datasets.
We set up \MyGNN according to the following configuration the other baselines are configured as described in~\cite{DBLP:conf/iclr/ErricaPBM20}.

The comparison is performed using \textit{two} different setups to emphasize that the results in GNN evaluation highly depend on the evaluation setup:
(i) \textit{Fair} setup, using same train/validation/test splits as in~\citep{DBLP:conf/iclr/ErricaPBM20} but we report the performance of the single hyperparameter configuration that performs best (on validation) across all folds instead of using the best hyperparameter configuration per validation fold and (ii) \textit{Standard} evaluation~\citep{DBLP:conf/iclr/XuHLJ19} only with train/test splits.

\paragraph{\MyGNN Configuration}
The experiments show that structure-aware weight sharing can replace weights based on node and edge features and even the explicit encoding of node attributes completely.
Thus, the initial graph representation is a vector of ones, and all structural information enters the model solely through the labeling functions, i.e., the weight sharing mechanism.
For robustness, we additionally consider a variant where Gaussian noise with standard deviation $0.5$ is added to the constant inputs.
The decoder output dimension is set to the number of classes, producing a graph-level representation of the correct size without any additional projection.
Here, we use a single encoder layer with one head.
The only exception is RT3 with two encoder layers.
Our hyperparameters are the labeling functions $l_{\operatorname{e}}$ and $l_{\operatorname{d}}$ for
the encoder and decoder and the set of valid triples $\ruleset$, controlling the range of message passing.
For $l_{\operatorname{e}}$ and $l_{\operatorname{d}}$, we perform a grid search over: atomic numbers, $1$‑WL labels with depths from $0$ to $3$, and pattern labels (e.g., cycles and cliques)
which is standard practice for these datasets (cf.~\citep{DBLP:journals/pami/BouritsasFZB23}).
The valid triples $\mathcal{T}$ are restricted by the maximum hop distance $D$, which we set to $6$ for molecular datasets and to the maximum graph diameter for social datasets.
We use \textit{tanh} for activation in all layers, cross-entropy loss
and the Adam optimizer~\citep{DBLP:journals/corr/KingmaB14} with a learning rate of $0.01$
(real-world) and $0.1$ (synthetic).
All biases are initialized with $0$ and all other weights with a constant value of $0.001$ (real-world) or uniformly distributed $[-0.1,0.1]$ (synthetic).
Interestingly, therefore, the only random factor in the real-world experiments is the order of the input samples.
The models are trained for $200$ epochs with a batch size of $64$ and early stopping if the validation accuracy does not improve for $25$ epochs.

\begin{table*}[t]
	\centering
    \begin{adjustbox}{width=\textwidth}
\begin{tabular}{lcccc|cc|ccc|cc}
 Method     & NCI1             & NCI109                     & Muta.                            & DHFR                        & IM$-$B           & IM$-$M                          & RT1                     & RT2                       & RT3          & CSL          & SF\\
\toprule
GraphSAGE & $79.0^\bullet$     & $78.5^\circ$               & $80.0^\circ$                     & $79.8^\bullet$              & $70.2^\bullet$ & $47.8^\bullet$                & $31.5^\star$              & $50.1^\bullet$          & $26.7^\bullet$ & $10.0^\circ$ & $23.3^\bullet$\\
GIN & $80.3^\circ$             & \ThirdColor{79.1^\circ}               & \ThirdColor{81.7^\circ}                     & \ThirdColor{80.0^\bullet}              & $67.3^\star$ & $44.9^\star$              & $31.7^\bullet$              & $51.2^\bullet$              & $26.2^\bullet$ & $8.7^\bullet$ & $24.2^\bullet$\\
GAT & $76.1^\bullet$           & $75.4^\circ$               & $78.7^\circ$                     & $78.4^\bullet$              & $68.0^\star$ & $47.8^\bullet$                & $33.0^\bullet$              & $59.4^\star$            & $36.9^\star$ & $10.0^\circ$ & $25.4^\bullet$\\
GATv2  & $80.4^\circ$          & $78.6^\circ$               & $78.8^\circ$                     & $78.3^\bullet$              & $69.8^\star$ & $48.0^\bullet$                & $33.4^\bullet$              & $63.9^\bullet$          & $35.4^\star$ & $10.0^\circ$ & $24.0^\star$\\
 \hline
 GT & $80.5^\circ$             & $-$ & $-$ & $-$                                                                          & $73.1^\circ$   & $49.0\circ$ & $-$ & $-$ & $-$ & $-$ & $-$\\
GT + R$-$Cov & \ThirdColor{83.1^\circ}      & $-$ & $-$ & $-$                                                                          & \FirstColor{76.1^\circ} & \SecondColor{51.1\bullet} & $-$ & $-$ & $-$ & $-$ & $-$\\
	\midrule
\textbf{ours} & \FirstColor{85.4^\circ} & \SecondColor{85.3^\circ} & \SecondColor{83.2^\circ} & \SecondColor{80.9^\star} & \SecondColor{75.9^\bullet} & \SecondColor{51.1^\star} & \FirstColor{100.0^\circ} & \FirstColor{100.0^\circ} & \FirstColor{99.9^\circ} & \FirstColor{100.0^\circ} & \FirstColor{98.0^\circ}\\
\textbf{ours (Random)} & \SecondColor{85.3^\bullet} & \FirstColor{85.4^\circ} & \FirstColor{83.3^\circ} & \FirstColor{81.5^\bullet} & \ThirdColor{75.6^\bullet} & \FirstColor{51.6^\star} & \FirstColor{100.0^\circ} & \FirstColor{100.0^\circ} & \SecondColor{96.9^\circ} & \FirstColor{100.0^\circ} & \FirstColor{98.0^\bullet}\\
\bottomrule
\end{tabular}
\end{adjustbox}
    \caption{\label{tab:real-world}
	\textit{Fair} evaluation (Accuracy in \%).
		The best results are highlighted by \FirstColor{\textbf{First}}, \SecondColor{\textbf{Second}} and \ThirdColor{\textbf{Third}}.
The standard deviation is denoted by $^\circ$ for small deviation ($0.0-2.0$), $^\bullet$ for medium deviation ($2.1-4.0$) and $^\star$ for large deviation ($>4.0$).
}

\end{table*}

\begin{table}[t]
	\small
	\centering
\begin{tabular}{lcc|cc}
	\toprule
 & NCI1 & DHFR & IMDB-B & IMDB-M\\
\midrule
    GraphSAGE & \FirstColor{\phantom{-}0.4} & \FirstColor{\phantom{-}2.3} & \SecondColor{-1.1} & \SecondColor{-1.9} \\
	GIN & \SecondColor{-2.8} & \FirstColor{\phantom{-}0.4} & \FirstColor{\phantom{-}3.4} & \FirstColor{\phantom{-}3.0} \\
	GAT  & \SecondColor{-1.1} & \SecondColor{-0.4} & \FirstColor{\phantom{-}0.9} & \SecondColor{-0.7} \\
	GATv2 & \FirstColor{\phantom{-}0.2} & \SecondColor{-0.9} & \FirstColor{\phantom{-}0.0} & \SecondColor{-0.5} \\

	\bottomrule
	\end{tabular}
		\caption{\label{tab:results-with-features}
			GNNs with additional node features (Mean change in accuracy in \%), \FirstColor{positive} and \SecondColor{negative} changes.
			The GNNs use the same label information of the best run of~\MyGNNs as additional input features.
		}
	\end{table}

\paragraph{Results}
The experiments (see~\Cref{tab:real-world,tab:real-world-sota}) demonstrate that  \MyGNNs outperform ordinary MPNNs and are competitive to transformer-based models.
Thus, invariant-based weight sharing enables a single-layer model to capture complex structural dependencies that typically require deep architectures.
Notably, on synthetic benchmarks such as RT and SF, we achieve accuracies exceeding $98\%$,
even though these datasets are designed to challenge standard message-passing methods.
On datasets not solvable by 1-WL (e.g., CSL and Snowflakes (SF)), \MyGNN consistently outperforms message-passing baselines.
This demonstrates that weight-sharing using graph invariants improves \MyGNNs expressivity compared to ordinary message passing.
The comparison to ordinary MPNNs might not seem entirely fair, as our model uses additional structural information encoded in the weight sharing.
Due to their design, the only way to provide this information to ordinary MPNNs is via input features.
Thus, we also compare against
ordinary MPNNs that receive the same structural information (of our best performing model) as input features (see \Cref{tab:details-best-runs}).
Surprisingly, even in this setting, \MyGNN outperforms the ordinary MPNNs consistently (see~\Cref{tab:real-world-sota}) showing that
structure-aware weight sharing is more effective than standard weight sharing based on node and edge features.

\begin{figure*}[t]
		\begin{subfigure}{0.32\linewidth}
		\includegraphics[width=1\linewidth]{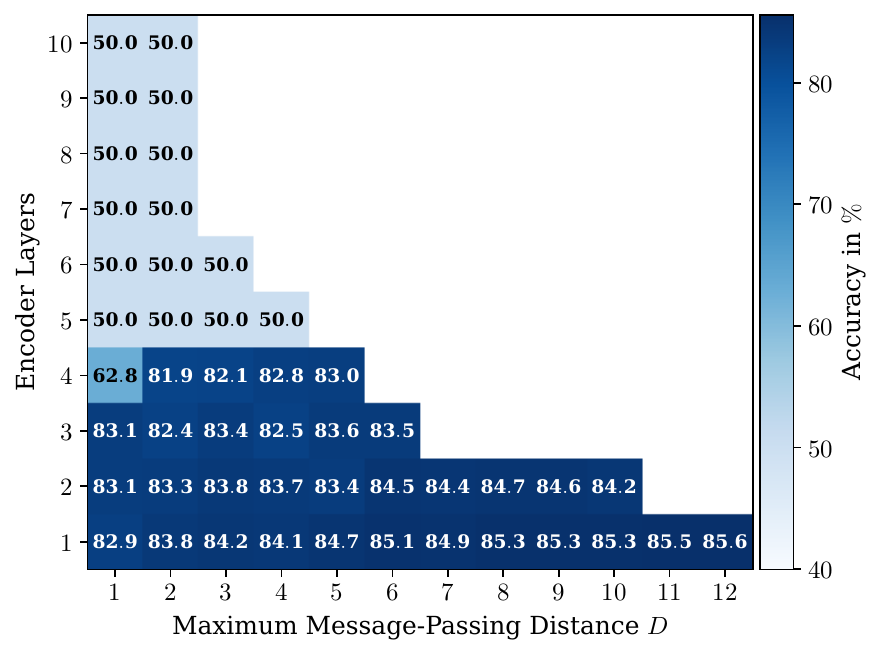}
		\caption[]{NCI1\label{fig:ablation-c}}

	\end{subfigure}
	\begin{subfigure}{0.32\linewidth}
		\includegraphics[width=1\linewidth]{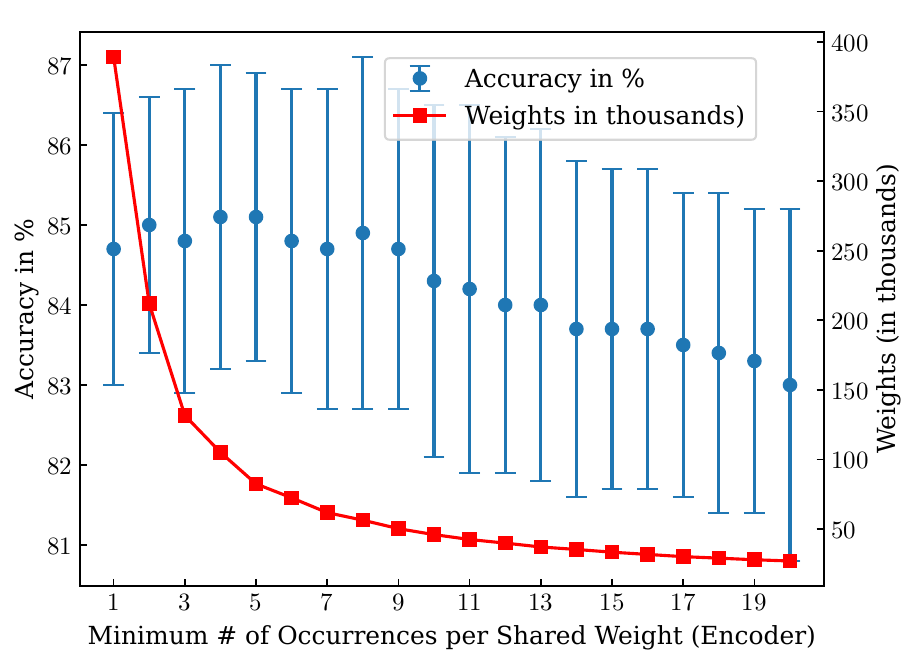}
		\caption[]{NCI1\label{fig:ablation-a}}

	\end{subfigure}
	\begin{subfigure}{0.32\linewidth}
		\includegraphics[width=1\linewidth]{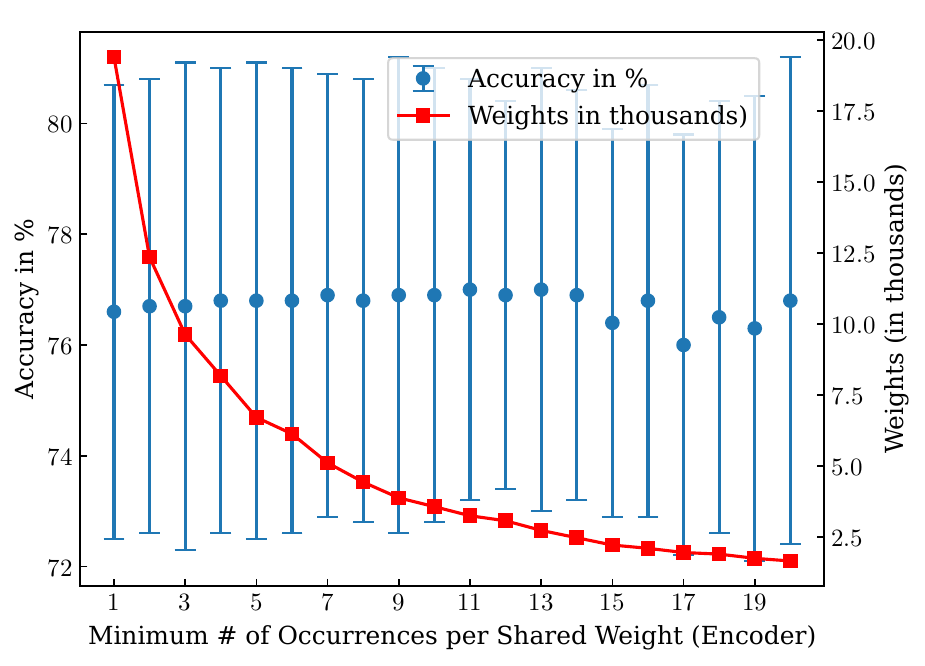}
		\caption[]{IMDB-BINARY\label{fig:ablation-b}}

	\end{subfigure}
	\caption{\label{fig:ablation-parameters} Ablation Study.
		(\ref{fig:ablation-c})~\MyGNN performance for different maximum message-passing distances $D$ ($x$-axis) and different numbers of encoder layers ($y$-axis).
		(\ref{fig:ablation-a}) and (\ref{fig:ablation-b})~\MyGNN performance for different number of weights.
	The model corresponding to the $x$-axis value $i$ contains only those shared message passing weights that occur at least $i$-times in the respective graph dataset.
	}
	\end{figure*}

\paragraph{Ablation Study}
We ablate the number of encoder layers (i), the maximum hop distance $D$ (ii) both in \Cref{fig:ablation-c},
and the size and pruning of $\ruleset$ (iii) in \Cref{fig:ablation-a,fig:ablation-b},
to quantify how invariant-based sharing trades off accuracy and parameter count.
The results for (i) and (ii) show that increasing the maximum hop distance $D$ has more impact on the performance than adding more encoder layers.
This confirms that long-range message passing is crucial for performance and can be captured in shallow architectures.
As for classical MPNNs, adding more layers (here more than four) adds instability and decreases performance.
The results for (iii) show that pruning the set of valid triples $\ruleset$ based on their occurrence in the dataset
allows to reduce the number of weights significantly without sacrificing accuracy.
The same behavior is observed across different datasets (see \Cref{fig:ablation_threshold_lower,fig:ablation_distance} in the supplementary material).

\subsection{Expressive Invariants lead to Expressive Models}\label{subsec:substructure-counting}
Using the Substructure Counting Benchmark (cf.~\citep{DBLP:conf/iclr/ZhaoJAS22,DBLP:conf/nips/FrascaBBM22}) that contains graphs labeled with counts of six motifs (triangles, tailed triangles, stars, and 4-6 cycles)
and measures the ability to distinguish structural patterns we can show that expressive invariants lead to expressive models in practice.
We train a \MyGNN with a single encoder and decoder layer, using five heads:
four heads dedicated to specific cycle lengths and one capturing the global structure.
We consider two settings: one separate model for each motif (one) and a single model jointly for all six motifs (all).
Baselines include strong models like
PPGN~\citep{DBLP:conf/nips/MaronBSL19},
GNN-AK+~\citep{DBLP:conf/iclr/ZhaoJAS22},
SUN (EGO+)~\citep{DBLP:conf/nips/FrascaBBM22}, and GNN-SSWL+~\citep{DBLP:conf/icml/ZhangFDHW23}.
\Cref{tab:graph-counting-comparison} shows that \MyGNN achieves state-of-the-art results in the single-task setting and remains competitive in the multi-task setting, confirming the theoretical properties of invariant-based weight sharing even with shallow architectures.
Moreover, our method allows to visualize the activations of the different heads (\Cref{fig:substructure-counting-heads}), highlighting that \MyGNNs' predictions can be directly interpreted in terms of the structural motifs it uses.

\begin{table*}[t]
\begin{adjustbox}{width=\textwidth}
\begin{tabular}{l|cccccc}
    \toprule
ZINC & GIN  & GSN & GNN-SSWL+ & PPGN++ (6) & ESC-GNN & \textbf{ours} \\
& \citep{DBLP:conf/iclr/XuHLJ19}
  & \citep{DBLP:journals/pami/BouritsasFZB23}
  & \citep{DBLP:conf/icml/PunyLKML23}
  & \citep{DBLP:conf/icml/ZhangFDHW23}
  & \citep{DBLP:conf/kdd/YanZG0Z24}
   & \\  \midrule
    ($12k$) & $0.163 \pm 0.004$  & $0.101 \pm 0.010$ & \FirstColor{0.070 \pm 0.005} & \SecondColor{0.071 \pm 0.001} & $0.075 \pm 0.002$ & $0.107 \pm 0.003$ \\
    ($250k$) & $0.088 \pm 0.002$  & $-$ & $0.022 \pm 0.002$ & \FirstColor{0.020 \pm 0.001} & \SecondColor{0.021\pm 0.003} & $0.024 \pm 0.001$ \\
    \bottomrule
\end{tabular}
\end{adjustbox}
    \caption{\label{tab:zinc-regression}
		ZINC Benchmark (MAE $\downarrow$).
		The best two results are highlighted by \FirstColor{\textbf{First}} and \SecondColor{\textbf{Second}}.
	}
\end{table*}

\begin{table}[t]\centering
\begin{tabular}{lcccccc}
\toprule
 & triangle & tailed triangle & star & $4$-cycles & $5$-cycles & $6$-cycles \\
\midrule
PPGN & $8.9$ & $9.6$ & $14.8$ & $9.0$ & $13.7$ & $16.7$ \\
GNN-AK+ & $12.3$ & $11.2$ & $15.0$ & $12.6$ & $26.8$ & $58.4$ \\
SUN (EGO+) & $7.9$ & \ThirdColor{8.0} & \ThirdColor{6.4} & $10.5$ & $17.0$ & $55.0$ \\
GNN-SSWL+ & \ThirdColor{6.4} & \SecondColor{6.7} & $7.8$ & \ThirdColor{7.9} & \ThirdColor{10.8} & \ThirdColor{15.4} \\
\midrule
\textbf{ours (one)}  & \FirstColor{1.9} & \FirstColor{3.2} & \FirstColor{2.4} & \FirstColor{3.6} & \FirstColor{5.3} & \FirstColor{5.1} \\
\textbf{ours (all)} & \SecondColor{3.3} & $9.1$ & \SecondColor{6.3} & \SecondColor{5.6} & \FirstColor{5.3} & \SecondColor{7.8} \\
\bottomrule
\end{tabular}
\caption{
 Substructure Counting Benchmark MAE ($\downarrow$) in $10^{-3}$ on different tasks.
	The best results are highlighted by \FirstColor{\textbf{First}}, \SecondColor{\textbf{Second}} and \ThirdColor{\textbf{Third}}.
}
\label{tab:graph-counting-comparison}
\end{table}

\begin{figure}[t]
    \centering
	\begin{subfigure}{0.2\linewidth}
		        \includegraphics[width=\linewidth]{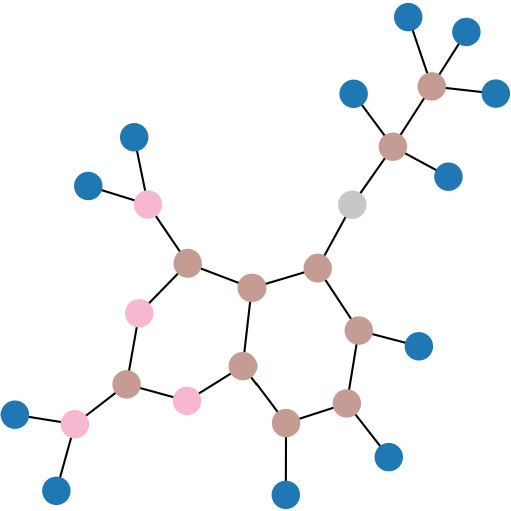}
	\end{subfigure}\hfill
		\begin{subfigure}{0.2\linewidth}
		        \includegraphics[width=\linewidth]{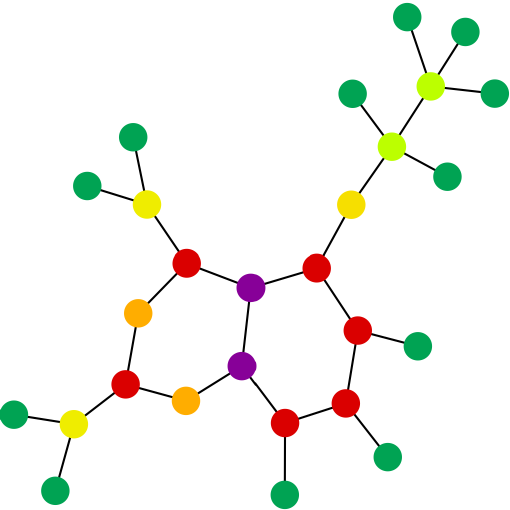}
	\end{subfigure}\hfill
		\begin{subfigure}{0.2\linewidth}
		        \includegraphics[width=\linewidth]{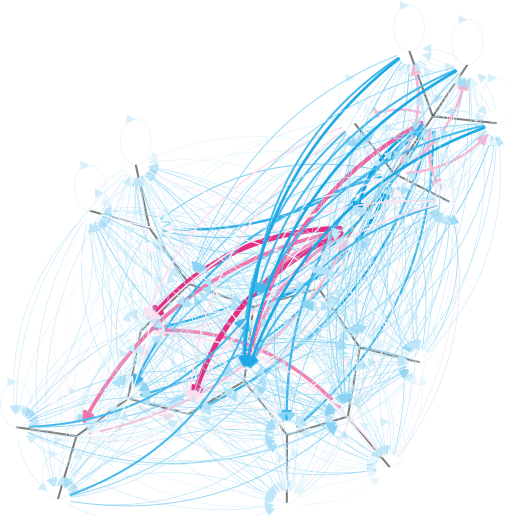}
	\end{subfigure}\hfill
		\begin{subfigure}{0.2\linewidth}
		        \includegraphics[width=\linewidth]{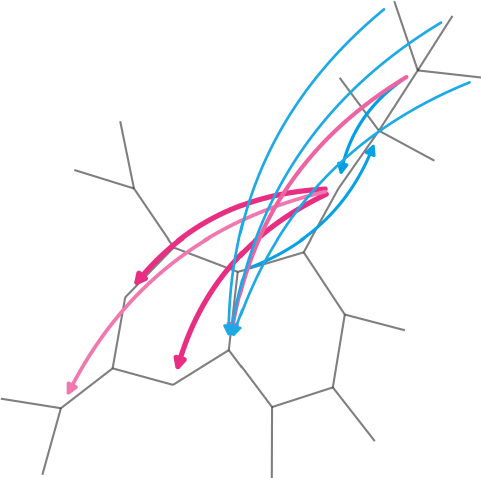}
	\end{subfigure}
    \caption{\label{fig:weight-evaluation} Invariant-based message passing (\MyGNN encoder layer) for a molecule from DHFR.
    	Left to right:
    	atomic numbers, graph invariant labels, learned weights, and the three largest positive (red), negative (blue) ones.
    	The thickness and color of the edges scales with their absolute value.
	}
\end{figure}

\subsection{Scalability}
\label{subsec:graph-regression}
To assess scalability, we evaluate \MyGNN on the ZINC benchmark dataset, considering both the $12k$ and $250k$ graph versions.
For the experiments, we use a \textit{single-layer} encoder and decoder with multiple heads.
Each head focuses on a distinct type of structural invariant, see~\Cref{subsec:regression}.
\MyGNNs attain competitive performance (MAE) on both datasets (\Cref{tab:zinc-regression})
while maintaining a shallow architecture.
Thus, our invariant-based design scales to large datasets while retaining high accuracy, \textit{without} requiring deep or complex networks.
While ShareGNNs achieve competitive performance on ZINC, they do not surpass the best-performing models.
We attribute this gap primarily to two factors.
First, several competing methods incorporate richer chemistry-specific inductive biases, such as detailed bond encodings or higher-order substructure enumerations beyond simple cycle information.
Second, our design intentionally prioritizes \textit{structural parameter sharing} over task-specific feature engineering, favoring interpretability and transferability.
Notably, models such as GSN~\citep{DBLP:journals/pami/BouritsasFZB23},
which also rely heavily on cycle information, achieve comparable performance, supporting the view that the observed gap reflects
differences in inductive bias
rather than a limitation of invariant-based weight sharing itself.

In summary, we show that \textit{structure-aware} weight sharing surpasses the ordinary \textit{feature-based} weight sharing.
While relying on shallow architectures, we
achieve competitive performance to sota models on diverse benchmarks.
Finally, we see that expressive graph invariants lead to expressive models in practice,
confirming that our method provides a
promising novel direction for capturing structural information.

\section{Concluding Remarks and Outlook}\label{sec:Conclusion}

We introduced \textit{structure-aware invariant-based} weight sharing, and demonstrated its
advantages over traditional, fixed weight-sharing schemes for graph learning.
The approach yields permutation-equivariant encoders and permutation-invariant decoders by construction, enables long-range message passing in a single layer, and mitigates over-smoothing effects associated with deep architectures.
Our analysis establishes a direct connection between \MyGNN's expressivity and the discriminative power of the chosen invariants.
Since invariants can be incorporated without modifying the underlying architecture,~\MyGNNs provide a flexible testbed for studying how different invariants affect graph representation learning.
Extensive experiments show that our weight-sharing scheme achieves competitive or superior performance on substructure counting, synthetic, and real-world benchmarks, despite relying on remarkably shallow models.
A key benefit of invariant-based weight sharing is that weights are explicitly tied to graph structure, which improves interpretability by revealing which structural patterns drive predictions (cf. \Cref{fig:weight-evaluation,fig:substructure-counting-heads,fig:interpretability-1,fig:interpretability-2}).

\paragraph{Limitations and Future Work}
Weights are naturally transferable across datasets which makes the invariant-based approach especially well suited for the emerging field of foundation models on graphs
\citep{DBLP:journals/corr/abs-2310-11829,DBLP:journals/tmlr/FranksECWSFK25}, as it naturally supports pretraining on large (synthetic) graph datasets
followed by a transfer to new tasks.
While our focus here was to establish the core formulation, analyze its theoretical properties, and evaluate its empirical performance, future work will scale this approach to larger and more diverse datasets, and investigate its role in transfer across domains and tasks.
ShareGNNs incur additional preprocessing costs for computing node invariants and pairwise distances, and their worst-case memory complexity scales quadratically with the number of nodes.
While in practice this cost is comparable to transformer-based architectures and can be
substantially reduced by restricting the set of valid invariant signatures and pruning rarely occurring relations, scalability to extremely large or temporal graphs remains a challenge.
Furthermore, the choice of invariants currently requires user input and domain knowledge.
Future work will explore automated invariant selection, incremental updates for dynamic graphs, and applications to
other domains such as images and text (cf. \Cref{sec:appendix-text-image}).

\newpage

\section*{Impact Statement}
This paper presents work whose goal is
 to advance the field of machine learning.
There are many potential
societal consequences of our work, none of which we feel must be
specifically highlighted here.

\section*{Ethics statement}
The authors declare that they have no potential conflicts of interest with respect to the research, authorship, and/or publication of this article.

\section*{Reproducibility statement}
Our results can be reproduced using the code and instructions provided under the repository found at
\href{https://github.com/fseiffarth/SimpleGNN}{https://github.com/fseiffarth/SimpleGNN}.
We provide the hardware specifications in~\Cref{subsec:resources-and-implementations}.
All datasets used in our experiments are publicly available.
We provide additional details in~\Cref{tab:real-world-datasets,tab:synthetic-datasets} as well as illustrative examples of graphs from the synthetic datasets~(\Cref{fig:ring-transfer-1-plot,fig:ring-transfer-2-plot,fig:ring-transfer-3-plot,fig:csl-plot,fig:snowflakes-plot}).
Moreover, in the appendix we provide detailed information on the experimental setup, including hyperparameters~(\Cref{tab:invariants-social-encoder,tab:details-best-runs,tab:zinc-hparams}), preprocessing~(\Cref{tab:preprocessing-times}) and architecture details~(\Cref{tab:zinc-arch}).

\bibliography{bibliography}
\bibliographystyle{plainnat}

\newpage
\section*{The Use of Large Language Models (LLMs)}
We used large language models for minor editorial polishing (e.g., grammar, wording).
All the ideas, analyses, and conclusions remain entirely our own.

\appendix

\section{Proofs}\label{sec:proofs}
\textbf{Proposition~\ref{prop:permutation-equivariance}}
\textit{
    The encoder layer is permutation equivariant, i.e.,
    for each permutation $\pi$ of the node order of the input graph it holds
    $\pi(\mathbf{X}^{(h+1)}) = \sigma(\weightmatrix\cdot \pi(\mathbf{X}^{(h)}) + \bias)$.
}
\begin{proof}
	We prove that the encoder layers of~\MyGNNs are permutation equivariant, by showing that the following holds for all permutations $\pi$ of the nodes of the graph:
	\[\pi\left(X^{(h+1)}\right) = \sigma\left(\weightmatrix\cdot \pi\left(X^{(h)}\right) + \bias\right)\]
	Note that a permutation of the nodes of a graph corresponds to a permutation of the rows of its corresponding node representation $X^{(h+1)}$.
	The entries of the weight matrix and the bias term in our definition of the forward propagation depend on the fixed order of the nodes of the input graph.
	Thus, considering the permuted input $\pi(X^{(h)})$, the corresponding
	weight matrix and the bias term are by definition,
	permuted in the same way compared to the original input signal $X^{(h+1)}$.
	Hence, it follows that $\weightmatrix\cdot \pi(X^{(h)}) = \pi(\weightmatrix\cdot X^{(h)})$ and
	$\pi(\weightmatrix\cdot X^{(h)}) + \bias = \pi(\weightmatrix\cdot X^{(h)} + \bias)$ which completes the proof.
\end{proof}
\textbf{Proposition~\ref{prop:permutation-invariance}}
\textit{
	The decoder layer is permutation invariant, i.e.,
    for each permutation $\pi$ of the node order of the input graph it holds
    $\mathbf{X}^{(h+1)} = \sigma(\frac{1}{n}\weightmatrix\cdot \pi(\mathbf{X}^{(h)}) + \bias)$.
}
\begin{proof}
	We prove that the decoder of~\MyGNNs is permutation invariant, by showing that the following holds for all permutations $\pi$ of the nodes of the graph:
	\[
		\mathbf{X}^{(h+1)} = \sigma\left(\frac{1}{n}\weightmatrix\cdot \pi\left(\mathbf{X}^{(h)}\right) + \bias\right)
	\]
	Again, the columns of the weight matrix $\weightmatrix$ depend on the fixed order of the nodes of the input graph.
	Thus, if the rows in $X^{(h)}$ are permuted, the corresponding columns of $\weightmatrix$ are permuted in the same way.
	Hence, the result of the multiplication remains unchanged, i.e., we have that
	$\weightmatrix\cdot \pi(\mathbf{X}^{(h)}) = \weightmatrix\cdot \mathbf{X}^{(h)}$, which completes the proof.
\end{proof}
\textbf{Corollary~\ref{cor:permutation-invariance}}
\textit{
	\MyGNNs are permutation invariant.
}
\begin{proof}
	Permutation invariance of~\MyGNNs follows directly from the permutation equivariance the encoder layers and the permutation invariance of the decoder.
	\end{proof}
\textbf{Proposition~\ref{prop:expressivity-MyGNN}}
\textit{
	Each~\MyGNN is at least as expressive as the labeling function $l$ of the decoder, i.e.,
	if two graphs $G$ and $G'$ are distinguishable by $l$ then they are also distinguishable by the~\MyGNN.
}
\begin{proof}
	We show that the expressive power of the~\MyGNNs is based on the expressive power of the underlying labeling functions.
	Indeed, we show that~\MyGNNs are at least as powerful as the labeling function $\graphlabeling$
	of the decoder.
	Assume, $l:V,V'\rightarrow\mathbb{N}_0$ is a labeling function that distinguishes the non-isomorphic graphs $G=(V,E)$ and $G'=(V',E')$ with $|V|=|V'|=n$ by comparing the histograms of the label counts.
	More precisely, if $G$ and $G'$ are non-isomorphic
	there exists some $a\in\mathbb{N}_0$ such that $A \coloneqq \sum_{v\in V} \mathbf{1}_{l(v)=a} \neq \sum_{v\in V'} \mathbf{1}_{l(v)=a}\eqqcolon A'$ with
	$\mathbf{1}$ being the indicator function.
	For example, we can define $l$ to be the $(k+1)$-Weisfeiler-Leman labels with $k$ being the maximum of the treewidths of $G$ and $G'$~\citep{DBLP:journals/jgt/Dvorak10}.
	Let~\MyGNN consist of only a decoder layer based on the labeling function $l$ with a single output neuron, i.e., $m=1$.
	The corresponding set of weights is denoted by $\weightset_{\labelset}$ where $|\labelset|$ is the number of different ${(k+1)}$-Weisfeiler-Leman labels
	that occur for the nodes of $G$ and $G'$.
	Without loss of generality we assume that the graph representations of $G$ and $G'$
	denoted by $X^G, X^{G'}\in\mathbb{R}^{n}$ are equal to vectors of ones.
	Let $\omega_a = 1$ and $\omega_b=0$ for all $b\in\labelset\setminus\{a\}$.
	It follows that
	$\weightmatrix\cdot X^G=A\neq A'=\weightmatrix\cdot X^{G'}$ showing that the above defined~\MyGNN is able to distinguish $G$ and $G'$.
	Moreover, we have shown that \textit{every}~\MyGNN is at least as powerful as the labeling function $l$ of the decoder.
\end{proof}
\textbf{Corollary~\ref{cor:expressivity-MyGNN}}
\textit{
    For each two non-isomorphic graphs $G$ and $G'$ there exists a ~\MyGNN that distinguishes $G$ and $G'$.
}
\begin{proof}
    The proof follows directly from Proposition~\ref{prop:expressivity-MyGNN} and the fact that there exists a labeling function $l$ that distinguishes $G$ and $G'$.
    For example, we can define $l$ to be the $(k+1)$-Weisfeiler-Leman labels with $k$ being the maximum of the treewidths of $G$ and $G'$~\citep{DBLP:journals/jgt/Dvorak10}.
\end{proof}

\section{Architecture Vizualizations}\label{sec:architecture-visualizations}
\Cref{fig:shared-attention-coefficients} visualizes the update of a single node $v_i$ in the encoder (left) and the aggregation of the final node embeddings in the decoder (right).

\begin{figure*}[t]\centering
	\begin{tikzpicture}\centering
		\node[graph vertex, fill=aqua!60] (v1) at (1,0) {\small $x_1^{(0)}$};
		\node at (1.75,0) {\ldots};
		\node[graph vertex, fill=aqua!60] (v4) at (2.5,0) {\small$x_n^{(0)}$};
		\node[graph vertex, fill=aqua!60] (bias) at (1.75, -1.2) {$+$};
		\node[graph vertex, fill=aqua!60] (v5) at (1.75,-2.2) {\small$x_i^{(1)}$};
		\path (v1) edge[-latex] node[left] {\small$\cdot~{\MessagePassingFunction^{\MPLabel}_{\ruleset}(v_1,v_i)}$} (bias);
		\path (v4) edge[-latex] node[right] {\small$\cdot~{\MessagePassingFunction^{\MPLabel}_{\ruleset}(v_n,v_i)}$} (bias);
		\path (bias) edge[-latex] node[right] {\small$+~{\BiasFunction^{\BiasLabel}_{\labelset, k}(v_i)}$} (v5);

	\end{tikzpicture}
	\begin{tikzpicture}\centering
		\node[graph vertex, fill=aqua!60] (x1) at (0,0.4) {\small $x_1^{(h)}$};
		\node at (1.7,0.4) {\ldots};
		\node[graph vertex, fill=aqua!60] (xn) at (3.4,0.4) {\small$x_n^{(h)}$};
		\node[graph vertex, fill=aqua!60] (p1) at (0.9,-1.2) {$+$};
		\node at (1.7,-1.2) {\ldots};
		\node[graph vertex, fill=aqua!60] (pm) at (2.5,-1.2) {$+$};

		\node[graph vertex, fill=aqua!60] (y1) at (0.9,-2) {\small$y_1$};
		\node at (1.7,-2) {\ldots};
		\node[graph vertex, fill=aqua!60] (ym) at (2.5,-2) {\small$y_m$};

		\draw[-latex] (x1) -- (p1) node[pos=0.45, below, sloped] {\small$\cdot~{\BiasFunction^{l}_{\labelset, m}(v_1)}_1$};
		\draw[-latex] (xn) -- (pm) node[pos=0.4, below, sloped] {\small$\cdot~{\BiasFunction^{l}_{\labelset, m}(v_n)}_m$};
		\draw[-latex] (x1) -- (pm) node[pos=0.3, above, sloped] {\small$\cdot~{\BiasFunction^{l}_{\labelset, m}(v_1)}_m$};
		\draw[-latex] (xn) -- (p1) node[pos=0.3, above, sloped] {\small$\cdot~{\BiasFunction^{l}_{\labelset, m}(v_n)}_1$};

		\path (p1) edge[-latex] node[left] {\small$+\bias_1$} (y1);
		\path (pm) edge[-latex] node[right] {\small$+\bias_m$} (ym);

	\end{tikzpicture}
	\caption{\label{fig:shared-attention-coefficients}
	Update ($v_i$) in the encoder (left)  and
	aggregation of the final node embeddings in the decoder (right)
	}
\end{figure*}

\section{Experimental Details}\label{sec:evaluation-details}
We provide additional details about our experiments, including details on the benchmark datasets,
the hyperparameters used to train the models, and the implementation of the~\MyGNNs.

Since standardized evaluation protocols are often missing in graph learning, direct comparison across works is difficult~\citep{DBLP:conf/iclr/ErricaPBM20}.
To ensure clarity and reproducibility, we follow \textit{two} widely used setups:
\textit{fair}~\citep{DBLP:conf/iclr/ErricaPBM20} and~\textit{standard}~\citep{DBLP:conf/iclr/XuHLJ19}.
\paragraph{Dataset Choice}
As noted by, e.g.,~\cite{Schulz2019OnTN,DBLP:conf/iclr/ErricaPBM20,DBLP:conf/icml/Bechler-Speicher24}, it difficult to assess the true benefits of advanced models, as even simple baselines often perform competitively.
For very small datasets (MUTAG, PTC), we observed that results vary dramatically with the data split, making meaningful comparisons unreliable.
To ensure a fair and informative evaluation, we therefore focus on datasets where structural information is essential by also including challenging synthetic benchmarks RingTransfer (RT1-RT3) and Snowflakes (SF).
These are designed to test long-range dependencies and structural reasoning.
Moreover, we consider molecular datasets (DHFR, Mutagenicity (Muta), NCI1, NCI109), social network datasets (IMDB-BINARY (IM-B), IMDB-MULTI (IM-M)) collected by~\cite{TUDortmund}.
\paragraph{\textit{Fair} Evaluation \citep{DBLP:conf/iclr/ErricaPBM20}}
Each dataset is split into $10$ predefined folds, with separate training, validation, and test sets.
Models are trained on the training folds, and the hyperparameter configuration that performs best on the validation sets is used to evaluate the corresponding test sets.
We report the average test accuracy over 10 folds, selecting the epoch with the best validation performance for each fold and repeating the procedure with three random seeds
 (\Cref{tab:real-world}).
\paragraph{Competitor Choice}
We consider several baselines, including, non-neural methods such as simple label histograms (NoG)~\citep{Schulz2019OnTN} and Weisfeiler–Leman subtree kernel (WL)~\citep{DBLP:journals/jmlr/ShervashidzeSLMB11},
classical GNNs, GraphSAGE~\citep{Hamilton2017InductiveRL}, GIN~\citep{DBLP:conf/iclr/XuHLJ19}, GAT~\citep{Velickovic2017GraphAN} and GATv2~\citep{DBLP:conf/iclr/Brody0Y22},
and modern transformer-based models, GraphTransformer (GT)~\citep{DBLP:conf/ijcai/ShiHFZWS21} and GT+R-Cov~\citep{DBLP:conf/icml/Bechler-Speicher24} that
are evaluated in the \textit{fair} setup.
All baselines use the same train/validation/test splits, if available by~\citet{DBLP:conf/iclr/ErricaPBM20} and ~\citet{DBLP:journals/jmlr/DwivediJL0BB23} (CSL).

\subsection{Implementation}\label{subsec:implementation}
To reproduce the results we provide the implementation under the following link\\~\href{https://github.com/fseiffarth/SimpleGNN}{https://github.com/fseiffarth/SimpleGNN}.
Moreover, we give a detailed description on how to run the experiments, and how to add custom datasets, and custom labeling functions.
Labeling functions and other hyperparameters are easily adaptable by changing our config files.
In fact, all precomputation of already implemented labeling functions and pairwise distances between nodes is
automated and also executed when custom datasets are added.
Our implementation is fully compatible with the PyTorch Geometric library~\citep{Fey/Lenssen/2019} and the PyTorch library~\citep{DBLP:conf/nips/PaszkeGMLBCKLGA19}.

\subsection{Resources}\label{subsec:resources-and-implementations}
All experiments were conducted on a machine with an AMD Ryzen 9 7950X 16-core processor with $128$ GB of RAM.

\subsection{Dataset Details}\label{subsec:details-on-the-datasets}
In this section, we provide additional details about the datasets used in the experiments.
First, we provide a detailed description of our new synthetic benchmark datasets.

\paragraph{\textit{RingTransfer1}} The dataset consists of $1200$ cycles of $100$ nodes each, and is designed to test the ability to detect long-range dependencies.
Four of the cycle nodes are labeled by $1, 2, 3, 4$, and all the others by $0$.
The distance between each pair of the four nodes is exactly $25$ or $50$.
The label of the graph is $0$ if $d(1,2)=50$, $1$ if $d(1,3)=50$, and $2$ if $d(1,4)=50$.
\Cref{fig:ring-transfer-1-plot} shows an example of the dataset.
There are $400$ graphs per class.
The difficulty of the classification task lies in the fact that the information has to be propagated over a long distance.
For~\MyGNNs this is very easy because information can be propagated over arbitrary distances in a single encoder layer.

\paragraph{\textit{RingTransfer2}}
The dataset consists of $1200$ cycles of $16$ nodes each and is designed to test the ability
to detect long-range dependencies as well as the ability to add node labels and detect even and odd numbers.
The nodes in each graph are labeled from $0$ to $15$.
For all nodes and their opposite node in the circle (distance 8) the sum of their labels is computed.
If there are more even sums than odd sums, the graph is labeled $0$, otherwise it is labeled $1$.
\Cref{fig:ring-transfer-2-plot} shows an example of the dataset.
There are $600$ graphs per class.
We use the information that only distance $8$ is relevant, by only assigning weights to node pairs with distance $8$.

\paragraph{\textit{RingTransfer3}}
The dataset consists of the same graphs as \textit{RingTransfer2}.
However, the graphs are labeled differently.
The graph label is determined by the labels of the nodes at distances $8$ and $4$ from the node with the label $0$.
We denote the node at distance $8$ by $x$ and those at distance $4$ by $y, z$.
There are four cases: $x$ is even and $y+z$ is even, $x$ is even and $y+z$ is odd, $x$ is odd and $y+z$ is even, $x$ is odd and $y+z$ is odd.
Each case corresponds to a class label from $0$ to $3$.
\Cref{fig:ring-transfer-3-plot} shows an example of the dataset.
We construct $300$ graphs per class, that is, $100$ graphs for each of the four cases.
We use the information that information has to be collected from nodes at distances $8$ and $4$ only, by only assigning weights to node pairs with distances $4$ and $8$.

\paragraph{\textit{Snowflakes}} The dataset consists of graphs proposed by~\citet{naik2024iterative} that are indistinguishable
by the 1-WL test, see~\Cref{fig:snowflakes-plot} for an example.
The dataset consists of circles of length $3$ to $12$ and at each circle node a graph from $M_0, M_1, M_2$ or $M_3$ is attached, see~\Cref{fig:m-plot}.
$M_0, M_1, M_2$ and $M_3$ are non-isomorphic graphs that are not distinguishable by the 1-WL test.
We refer to~\citet{naik2024iterative} for the details.
One node in the circle is labeled by $1$ and all other graph nodes are labeled by $0$.
The label of the graph equals the index of the graph $M_0, M_1, M_2$ or $M_3$ that is attached to the circle node with label $1$.

Tables~\ref{tab:real-world-datasets} and~\ref{tab:synthetic-datasets} provide an overview of the real-world and synthetic datasets,
including the number of graphs, the number of nodes, the number of edges, the diameter, the number of node labels and the number of classes.

\begin{table*}[t]
	\centering
\begin{adjustbox}{width=\linewidth}
    \begin{tabular}{lr|rrr|rrr|rrr|rr}
        \toprule
        Dataset & \#Graphs &  \multicolumn{3}{c}{\#Nodes} & \multicolumn{3}{c}{\#Edges} & \multicolumn{3}{c}{Diameter} & \#Node Labels & \#Classes \\
         &  & max & avg & min & max & avg & min & max & avg & min & & \\
        \midrule

        NCI1 & 4\,110 & 111 & 29.9 & 3 & 119 & 32.3 & 2 & 45 & 11.5 & 0 & 37 & 2 \\
        NCI109 & 4\,127 & 111 & 29.7 & 4 & 119 & 32.1 & 3 & 61 & 11.3 & 0 & 38 & 2 \\
        Mutagenicity & 4\,337 & 417 & 30.3 & 4 & 112 & 30.8 & 3 & 41 & 6.3 & 0 & 14 & 2 \\
        DHFR & 756 & 71 & 42.4 & 20 & 73 & 44.5 & 21 & 22 & 14.6 & 8 & 9 & 2 \\
        \midrule
        IMDB-BINARY & 1\,000 & 136 & 19.8 & 12 & 1249 & 96.5 & 26 & 2 & 1.9 & 1 & 1 & 2 \\
        IMDB-MULTI & 1\,500 & 89 & 13.0 & 7 & 1467 & 65.9 & 12 & 2 & 1.5 & 1 & 1 & 3 \\
        \midrule
        ZINC ($12k$) & 12\,000 & 37 & 23.2 & 9 & 42 & 24.9 & 8 & 22 & 12.5 & 4 & 21 & - \\
        ZINC ($250k$) & 249\,456 & 38 & 23.2 & 6 & 45 & 24.9 & 5 & 23 & 12.5 & 3 & 28 & - \\
        \bottomrule
    \end{tabular}
\end{adjustbox}
    	    \caption{\label{tab:real-world-datasets}
    	Details of the real-world datasets used in the experiments.}

\end{table*}

\begin{table*}[t]
	\centering
	\begin{adjustbox}{width=\linewidth}
	\begin{tabular}{lr|rrr|rrr|rrr|rr}
		\toprule
		Dataset & \#Graphs &  \multicolumn{3}{c}{\#Nodes} & \multicolumn{3}{c}{\#Edges} & \multicolumn{3}{c}{Diameter} & \#Node Labels & \#Classes \\
		&  & max & avg & min & max & avg & min & max & avg & min & & \\
		\midrule
		RingTransfer1 & 1\,200 & 100 & 100.0 & 100 & 100 & 100.0 & 100 & 50 & 50.0 & 50 & 5 & 3 \\
		RingTransfer2 & 1\,200 & 16 & 16.0 & 16 & 16 & 16.0 & 16 & 8 & 8.0 & 8 & 16 & 2 \\
		RingTransfer3 & 1\,200 & 16 & 16.0 & 16 & 16 & 16.0 & 16 & 8 & 8.0 & 8 & 16 & 4 \\
		CSL & 150 & 41 & 41.0 & 41 & 82 & 82.0 & 82 & 10 & 6.0 & 4 & 1 & 10 \\
		Snowflakes & 1\,000 & 180 & 112.5 & 45 & 300 & 187.5 & 75 & 18 & 15.5 & 13 & 2 & 4 \\
		\midrule
		Substructure Conting & 5\,000 & 30 & 18.8 & 10 & 45 & 31.3 & 20 & 10 & 4.2 & 0 & 1 & - \\
		\bottomrule
	\end{tabular}
    \end{adjustbox}
	\caption{\label{tab:synthetic-datasets}
		Details of the synthetic datasets used in the experiments.}
\end{table*}

\begin{figure}[t]
	\centering
    {\includegraphics[width=0.48\textwidth]{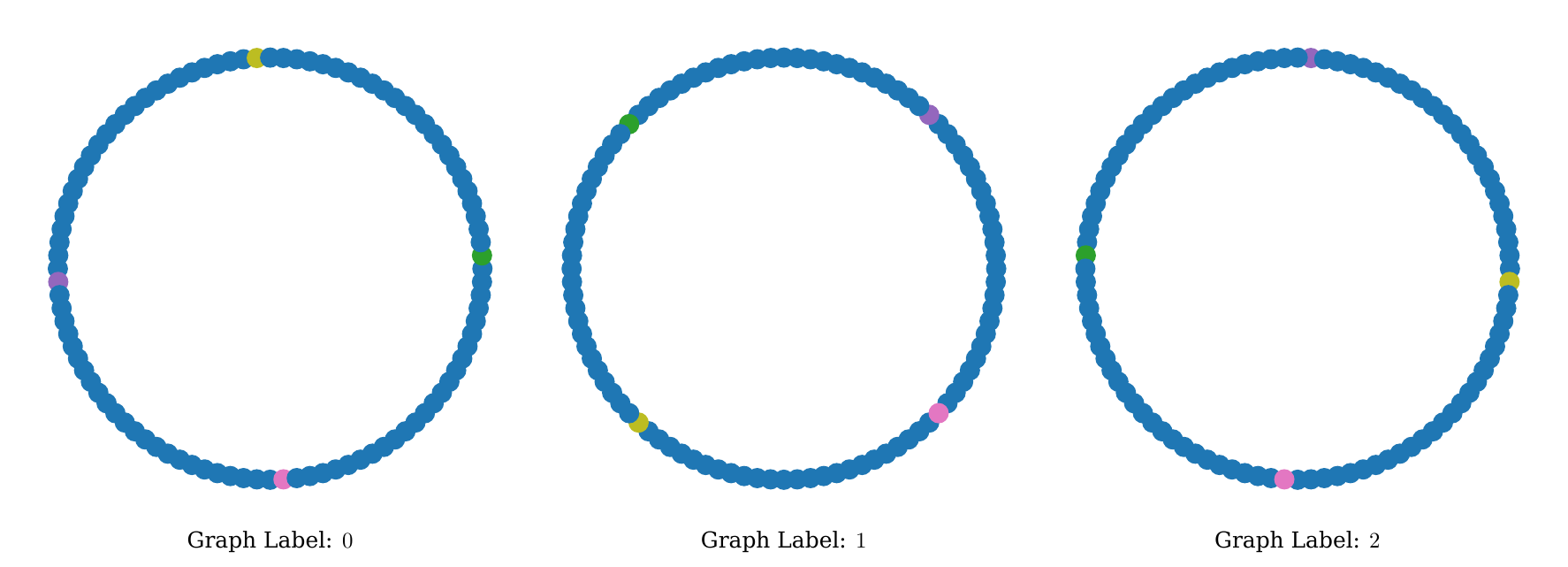}}
    \caption{\label{fig:ring-transfer-1-plot}
    Example graphs taken from the \textit{RingTransfer1} dataset.
    }
\end{figure}

\begin{figure}[t]
	\centering
    {\includegraphics[width=0.25\textwidth]{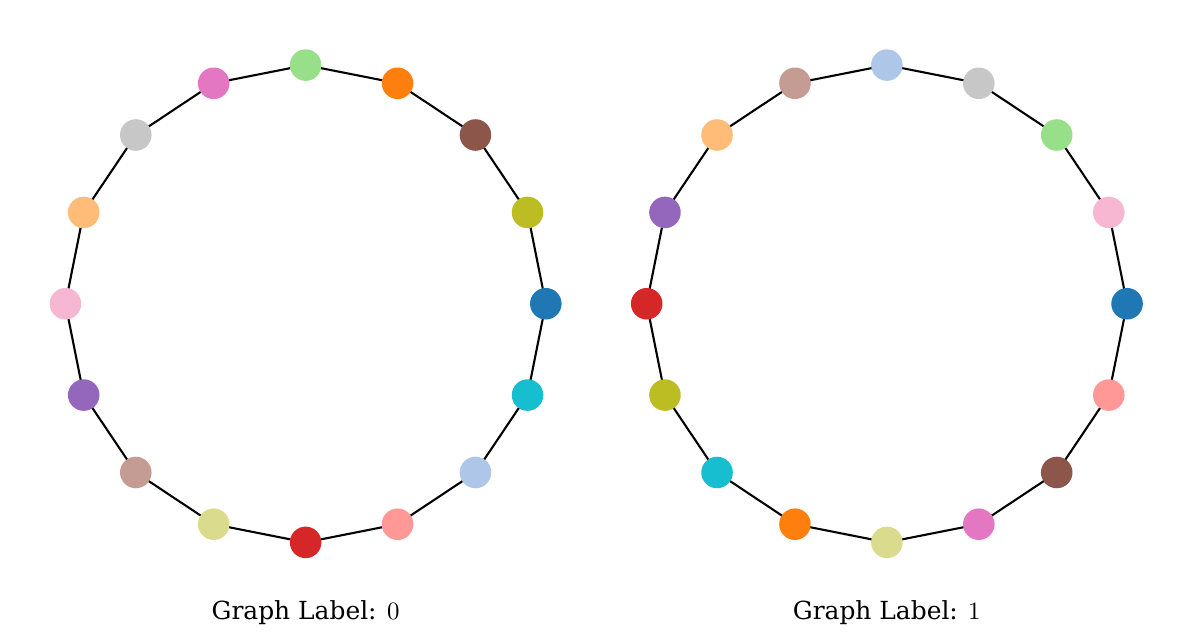}}
    \caption{\label{fig:ring-transfer-2-plot}
    Example graphs taken from the \textit{RingTransfer2} dataset.
    }
\end{figure}

\begin{figure}[t]
	\centering
    {\includegraphics[width=0.48\textwidth]{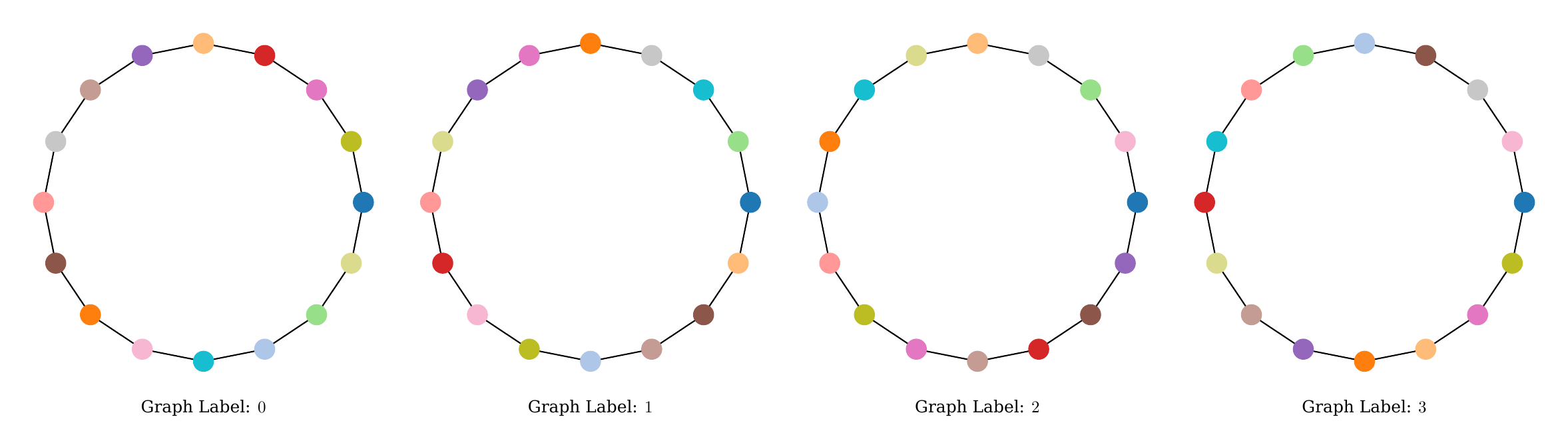}}
    \caption{\label{fig:ring-transfer-3-plot}
    Example graphs taken from the \textit{RingTransfer3} dataset.
    }
\end{figure}

\begin{figure}[t]
	\centering
    {\includegraphics[width=0.48\textwidth]{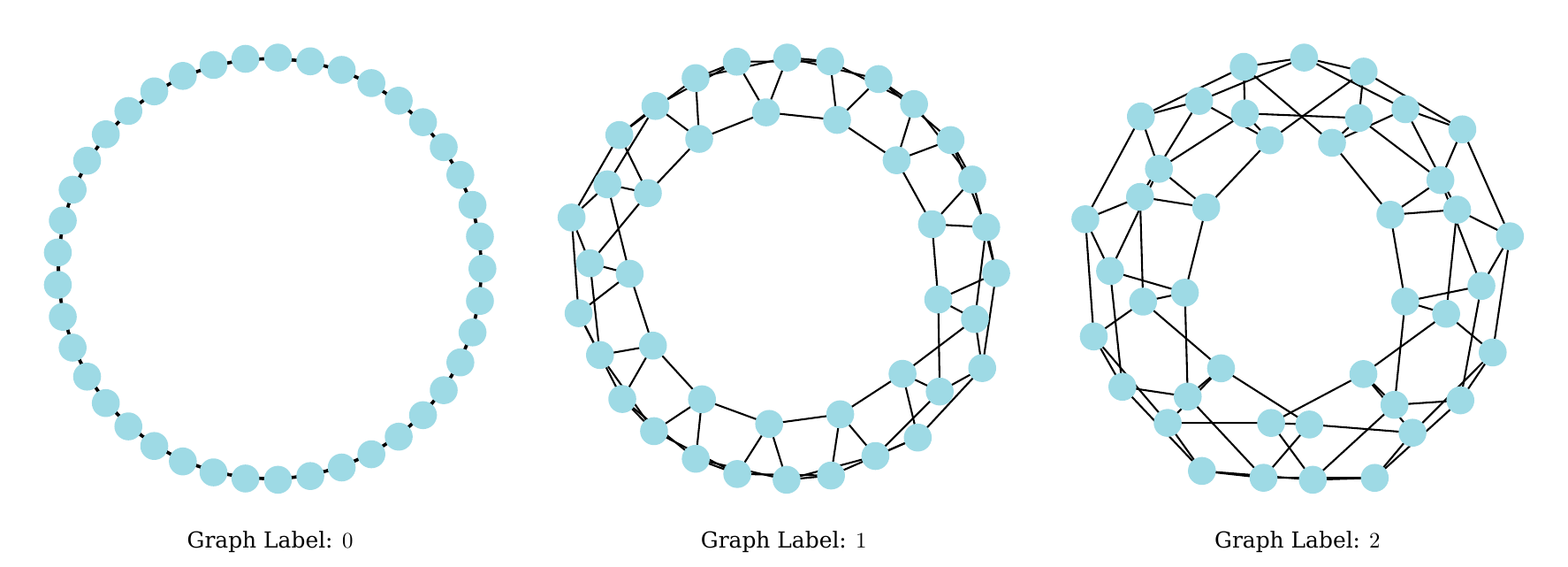}}
    \caption{\label{fig:csl-plot}
    Example graphs taken from the \textit{CLS} dataset.
    }
\end{figure}

\begin{figure*}[t]
	\centering
    {\includegraphics[width=\textwidth]{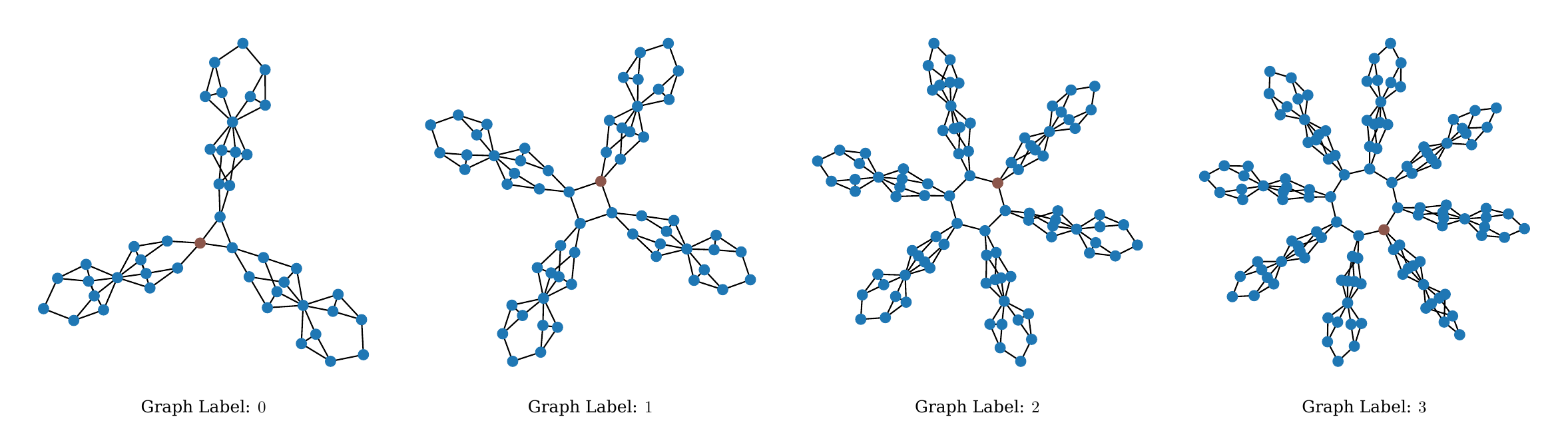}}
    \caption{\label{fig:snowflakes-plot}
    Example graphs from the \textit{Snowflakes} dataset. The brown node in the circle is labeled by $1$ and the other nodes by $0$.
    The label of the graph is determined by the subgraph attached to the brown node.
    }
\end{figure*}

\begin{figure}[t]\centering
    {\includegraphics[width=0.48\textwidth]{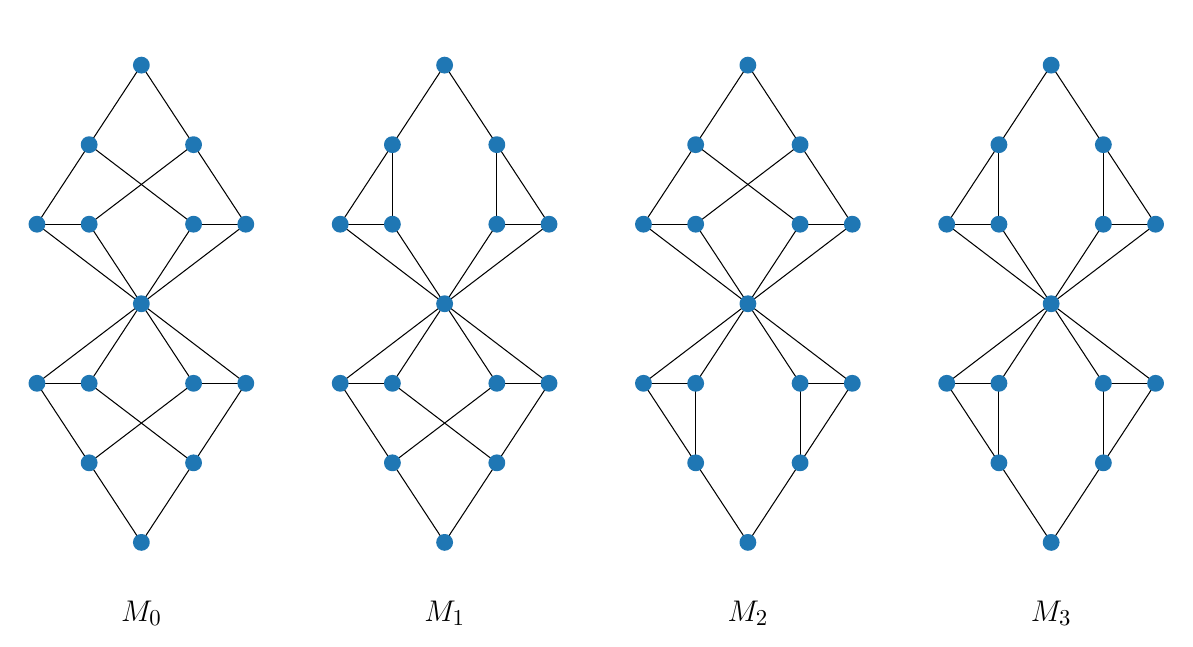}}
    \caption{\label{fig:m-plot}
    Graphs $M_0, M_1, M_2$ and $M_3$ \citep{naik2024iterative} that are not distinguishable by the 1-WL test.
    }
\end{figure}

\begin{table*}[t]
		\centering
\begin{adjustbox}{max width=\textwidth}
			\begin{tabular}[t]{ccc}
				\begin{tabular}[t]{l}
\toprule
\textbf{Molecules: Encoder/Decoder Labeling Function} \\
\midrule
Atomic Numbers \\
Node Degree \\
\midrule
WL, Iterations 1 \\
WL, Iterations 2 \\
WL, Iterations 3 \\
\midrule
Induced Cycles, Max.~Length 10 + Atomic Numbers \\
Induced Cycles, Max.~Length 20 + Atomic Numbers \\
Simple Cycles, Max.~Length 10 + Atomic Numbers \\
Simple Cycles, Max.~Length 20 + Atomic Numbers \\
Simple Cycles, Max.~Length 6 + Atomic Numbers \\
\bottomrule
\end{tabular}
				&
\begin{tabular}[t]{l}
\toprule
\textbf{Social: Encoder Labeling Function} \\
\midrule
Node Degree \\
\midrule
Induced Cycles, Max.~Length 10 \\
Induced Cycles, Max.~Length 20 \\
Induced Cycles, Max.~Length 4 \\
Induced Cycles, Max.~Length 5 \\
Simple Cycles, Max.~Length 3 \\
Simple Cycles, Max.~Length 4 \\
Simple Cycles, Max.~Length 5 \\
\midrule
Clique, Max.~Size 3 \\
Clique, Max.~Size 4 \\
Clique, Max.~Size 10 \\
Clique, Max.~Size 20 \\
Clique, Max.~Size 6 \\
Clique, Max.~Size 50 \\
\midrule
Clique Size 4 \\
Squares + Node Degree \\
Triangles + Node Degree \\
Triangles, Squares + Node Degree \\
\midrule
WL, Iterations 1 \\
WL, Iterations 2 \\
WL, Iterations 3 \\
WL, Iterations 4 \\
\bottomrule
\end{tabular}
&
				\begin{tabular}[t]{l}
\toprule
\textbf{Social: Decoder Labeling Function} \\
\midrule
Node Degree \\
\midrule
Induced Cycles, Max.~Length 10 \\
Induced Cycles, Max.~Length 20 \\
Induced Cycles, Max.~Length 4 \\
Induced Cycles, Max.~Length 5 \\
Simple Cycles, Max.~Length 3 \\
Simple Cycles, Max.~Length 4 \\
Simple Cycles, Max.~Length 5 \\
\midrule
Clique, Max.~Size 10 \\
Clique, Max.~Size 20 \\
Clique, Max.~Size 3 \\
Clique, Max.~Size 4 \\
Clique, Max.~Size 50 \\
Clique, Max.~Size 6 \\
\midrule
WL, Iterations 1 \\
WL, Iterations 2 \\
WL, Iterations 3 \\
WL, Iterations 4 \\
\bottomrule
\end{tabular}
\end{tabular}
\end{adjustbox}

\caption{\label{tab:invariants-social-encoder}
	Labeling functions used for the encoder and decoder layers for the molecules (left) and
	social datasets (middle and right).
	For the molecules are $10$ different options for the encoder and decoder layers, i.e., in total
	we tested $100$ different hyperparameter configurations.
	For the social datasets there are $22$ options for the encoder and $18$ for the decoder layers, i.e., in total
	we tested $396$ different hyperparameter configurations.
}
\end{table*}
\subsection{Substructure Counting Benchmark}\label{subsec:configuration-substructure-counting}
In this section, we provide additional details about the Substructure Counting Benchmark dataset and the configuration of the~\MyGNNs.
The Substructure Counting Benchmark dataset consists of $5\,000$ graphs (cf.~\Cref{tab:synthetic-datasets}) with predefined split $1\,500/1\,000/2\,500$ for training/validation/test.
As for the other datasets the \MyGNN consists of one encoder layer and one decoder layer.
Different from the classification benchmark configuration, we use multiple heads for both the encoder and decoder layers to capture different
invariants in one layer.
More precisely, for the encoder layer and decoder layer we use $5$ heads each.
The graph invariants respectively labeling functions used in the heads are simple cycles of length $3,4,5,6$ and the degree of the nodes.
In case of the encoder for the cycles we use only distance $0$ information, corresponding to message-passing on self-connections and for the node degree we use all distances from $0$ to $10$,
which is the maximum diameter of the graphs in the dataset.
We concatenate the heads of the encoder using a feature output dimension of $10$.
The output dimension of the decoder is $10$.
We add a fully connected layer to match the required output dimension.
All activations are LeakyReLU with the default slope of $0.01$.
The architecture definition can be found in the \textit{network\_substructure\_counting.yml} configuration file.
We use the Adam optimizer, MAE loss, batch size $64$, constant learning rate $0.001$ and $1000$ epochs.
\Cref{fig:substructure-counting-heads} shows the different activations per head.
We see that for the cycle invariants only self-connection weights are set while using the degree as invariant we get weights between all pairs of nodes.
Moreover, the visualization shows that for the different tasks (rows) the corresponding heads are activated while for the (all) model trained on all tasks simultaneously no specific head activation can be seen.

\begin{figure*}[t]
    \centering
    \includegraphics[scale=0.26]{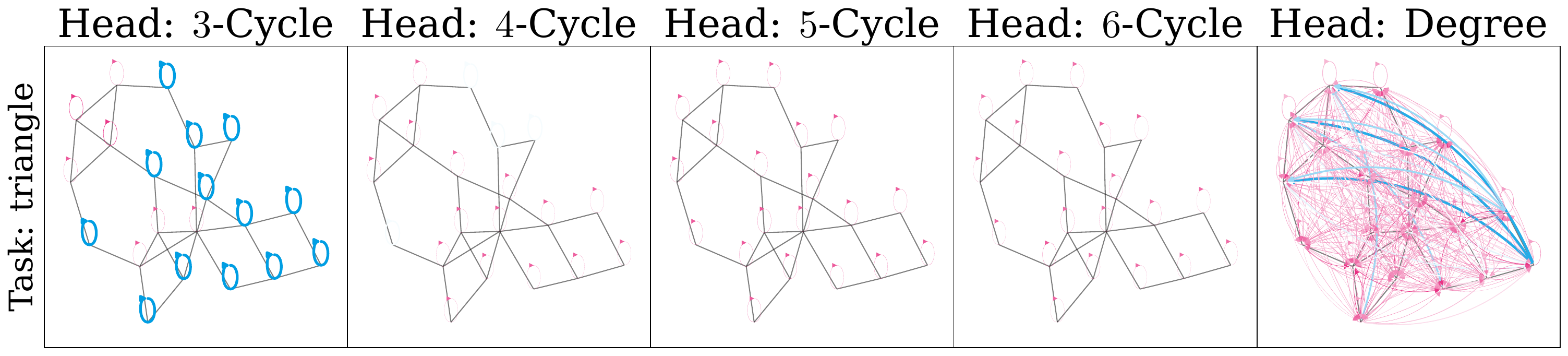}
    \includegraphics[scale=0.26]{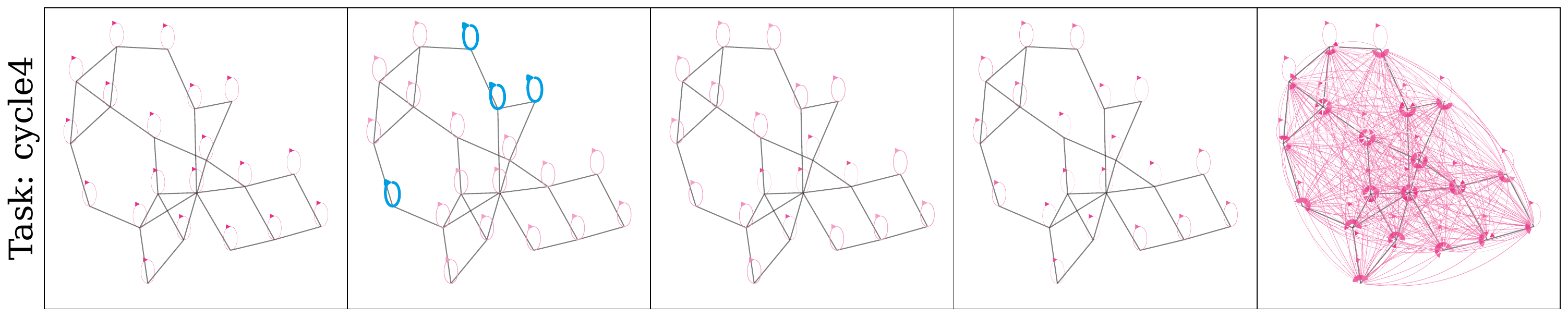}
    \includegraphics[scale=0.26]{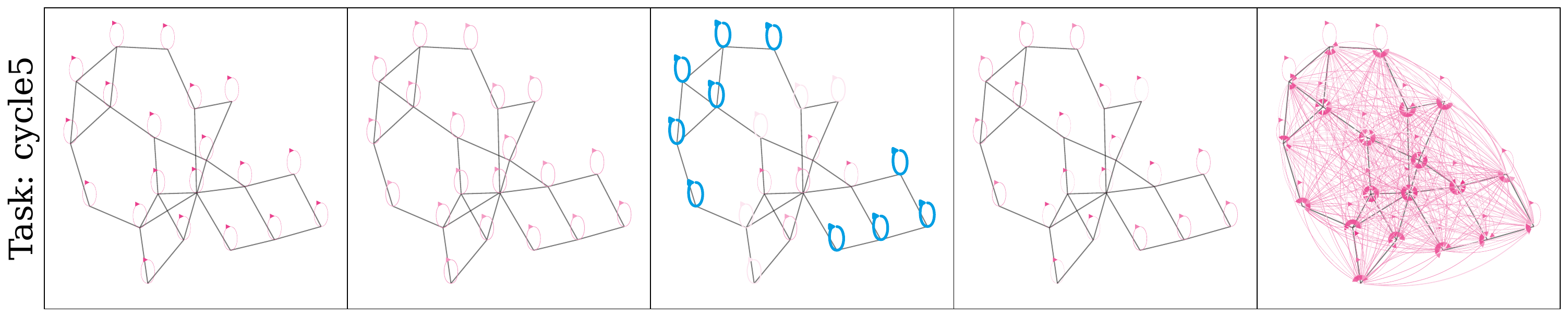}
    \includegraphics[scale=0.26]{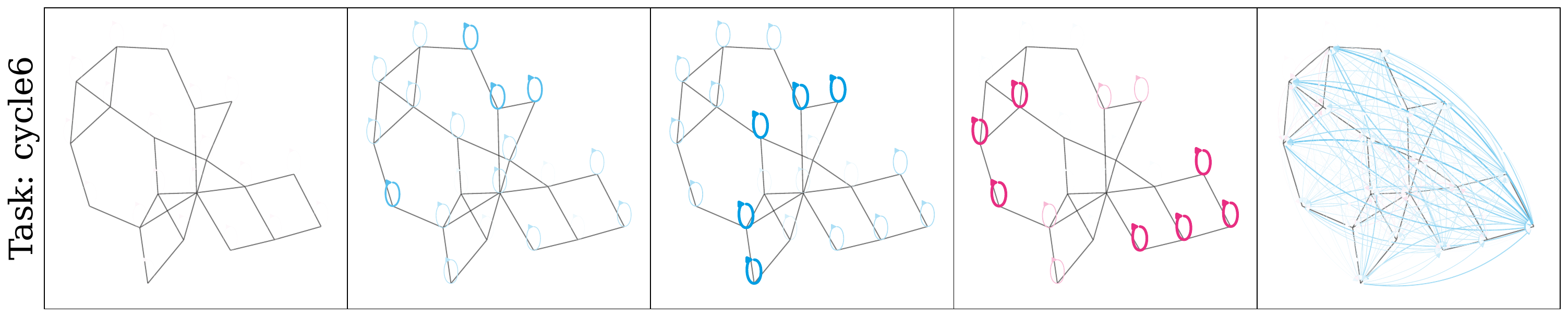}
    \includegraphics[scale=0.26]{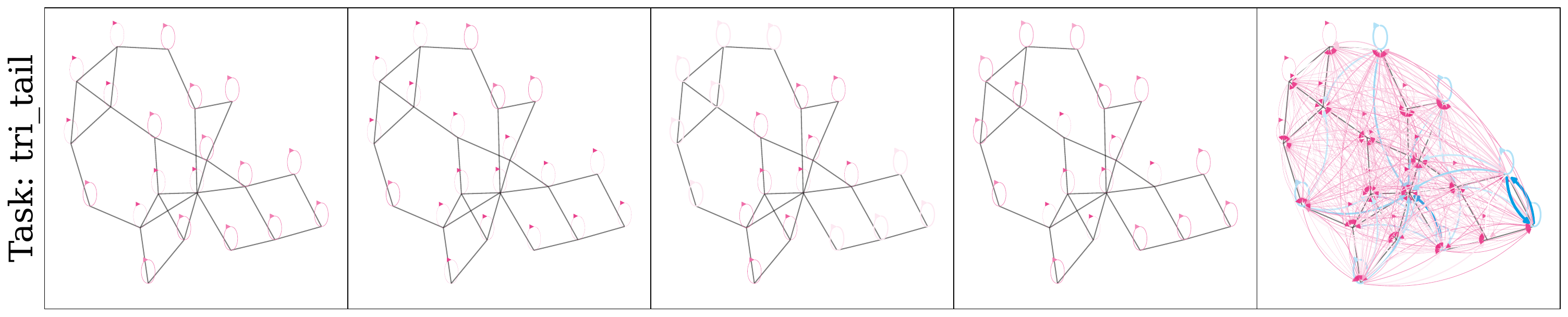}
    \includegraphics[scale=0.26]{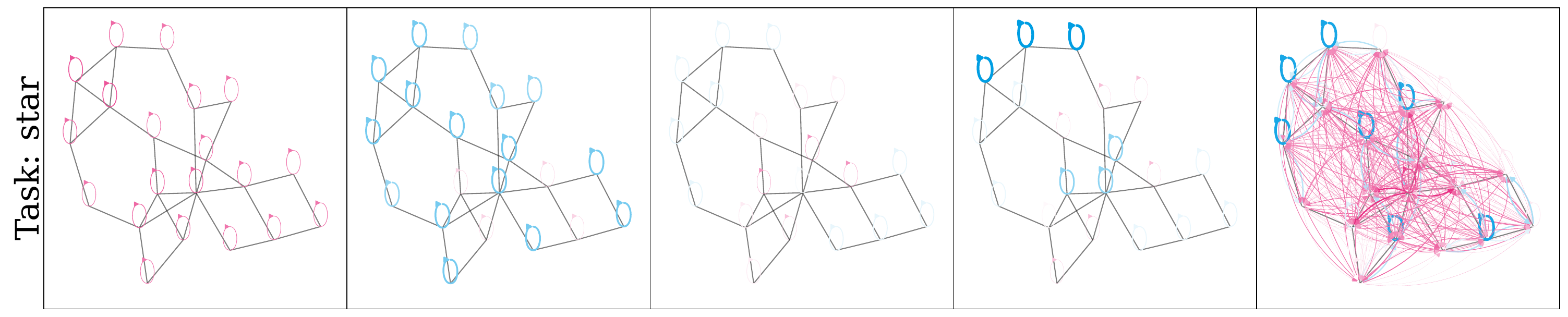}
    \includegraphics[scale=0.26]{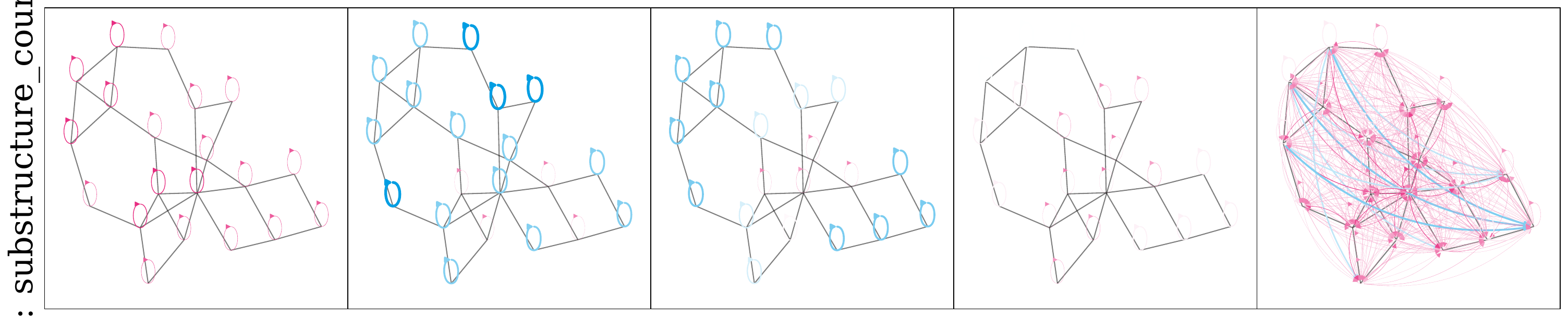}
    \caption{\label{fig:substructure-counting-heads}
    	Message passing within different heads of the ShareGNNs for the substructure counting task on one example graph from the test set. 
    	The tasks from top to bottom are \textit{triangle, cycle4, cycle5, cycle6, tailed triangle and star}.
    	The last row visualizes the ShareGNN (all) that is trained on all tasks in parallel.
    	Each column corresponds to one head of the invariant based encoder, where the heads base from left to right on the graph invariants \textit{3-cycles, 4-cycles, 5-cycles, 6-cycles and degrees}.
    	}
\end{figure*}

\subsection{Classification Benchmarks}\label{subsec:preprocessing-and-training-details}
In this section, we provide additional details about the preprocessing and training of the~\MyGNNs for the graph classification benchmark datasets.
\paragraph{Hyperparameter Search}
Each~\MyGNN is defined by the labeling functions for the encoder layers and the decoder.
As stated above we use no bias term for the encoder and only one encoder layer.
Thus, in fact, our hyperparameters are one labeling function for the encoder (for message passing)
and one for the decoder (for aggregation).
\Cref{tab:invariants-social-encoder} shows the hyperparameter grid used for our experiments on the real-world datasets, i.e.,
all labeling functions used for the encoder and the decoder.
We differentiate between the original atomic numbers, node degrees, the Weisfeiler-Leman algorithm with different
iteration depths, and subgraphs based on cycles and cliques of different lengths and sizes.
We test $10$ different options for the encoder and decoder layers for the molecules, and $22$ and $18$ options for the social datasets, respectively.
Thus, in total we test $100$ different hyperparameter configurations for the molecules and $396$ for the social datasets.
We did no hyperparameter search on the valid triples $\ruleset$ as we used a fixed $D$ to determine the maximum distance between node pairs
for which we compute shared weights.
For the molecules we used $D=6$, i.e., we compute shared message passing weights for all node pairs with distance at most $6$ and
for the social datasets we used $D=2$.
For the social datasets the maximum diameter is two, hence we define shared message passing weights for all node pairs
in the graph.
The best performing hyperparameter configuration for the \textit{fair} evaluation setup is shown in~\Cref{tab:details-best-runs}.
We note that our hyperparameter search is much faster than the one of the competitors in the \textit{fair} evaluation setup using
the implementation of~\citet{DBLP:conf/iclr/ErricaPBM20} and the same hardware, see also the note
by~\citet{DBLP:conf/iclr/ErricaPBM20} regarding the runtime of their experiments.

\paragraph{Preprocessing}
We need to precompute the node labels in advance for all labeling functions used in the hyperparameter search.
Moreover, we need the pairwise distances between the nodes of the graphs to compute all the valid triples.
\Cref{tab:preprocessing-times} shows the runtimes for the preprocessing of the datasets.
All pairwise distances can be computed in less than $10$ minutes, noting that we did not even pay attention to an efficient implementation (using networkx) as for these small datasets there was no need to do it.
Except for the IMDB datasets the preprocessing time to compute the labels
for all labeling functions for the hyperparameter search is negligible,
again noting that we did not pay attention to an efficient implementation.

\paragraph{The Best Hyperparameters}
\Cref{tab:details-best-runs} provides the details for the best hyperparameter configuration per dataset for the fair evaluation setup.
The first column shows the average epoch in which the best validation accuracy was achieved, i.e., the epoch for which the test accuracy was calculated.
It gives an indication of the model's convergence speed.
In particular, for all real-world datasets the best validation accuracy was achieved on average after less than $50$ epochs.
This shows that our approach is very efficient and converges quickly.
Recall, that the maximum number of epochs was set to $200$.
For NCI109, the best validation accuracy was already achieved on average after $3$ epochs.

The second column shows the average time per epoch.
The time per epoch depends on the number of parameters used in the model.
Indeed, for NCI109 the best hyperparameter configuration of our model
has approximately one million parameters and thus the time per epoch is higher than for the other datasets with
about one minute per epoch.
Particularly, except for the larger datasets NCI1, NCI109 and Mutagenicity the training time is less than $1$ second per epoch.
In general, runtime is not a problem for our approach as the preprocessing time is negligible and the training time is reasonable, compared to the competitors.
Note that our computations run in parallel, i.e., we are able to run all the three runs and $10$ folds in parallel on the same machine which produces some overhead but is more efficient than running the experiments sequentially.
Indeed, implementing a dynamic approach in PyTorch is also a very challenging task, see also the comments in~\citet{DBLP:journals/pami/HanHSYWW22}.
They mention that there is a gap between the theoretical and practical runtime of dynamic neural networks because
the implementation in PyTorch is not optimized for dynamic neural networks.
This is also the reason why our approach runs approximately three times faster on the CPU than on the GPU.

Regarding the \textit{Encoder Layers} in~\Cref{tab:details-best-runs} we see which encoder labeling functions performed best for the different datasets.
Notably, for the molecular datasets, Weisfeiler-Leman labeling with different iteration depths performed best.
For the social datasets, the best results were achieved with labeling functions based using a combination of degree labeling and counting of patterns (triangles, squares).

For the \textit{Decoder Layer} there is no uniform picture of which labeling functions performed best.
The total number of parameters depends strongly on the chosen labeling functions.
For the real-world datasets the number of parameters ranges from $10\,183$ in case of IMDB-BINARY to $925\,956$ in case of NCI109.

For the synthetic datasets we took only one hyperparameter configuration as
we wanted to demonstrate that our approach is able to
capture long-range dependencies and it is possible to restrict the message passing to a certain distance.
For the RingTransfer1 dataset we use the original node labels and define shared weights only for self-connections and node pairs at distance $50$.
For RingTransfer2 we use the original node labels and define shared weights only for node pairs with distance $8$.
For RingTransfer3 we use two encoder layers with the original node labels and define shared weights only for node pairs with distance $8$
in the first layer and for node pairs with distance $4$ in the second layer.
For CSL we use node labels induced by patterns consisting of
simple cycles up to length $10$ and define shared weights only for self-connections and node pairs with distance $1$.
For the Snowflakes dataset we use combined node labels induced by patterns consisting of
simple cycles of maximum length $10$ and the
original node labels.
We define shared weights only for self-connections and node pairs with distance $3$.
In this way the~\MyGNN is able to distinguish the graphs $M_0, M_1, M_2$ and $M_3$ that are not distinguishable by the 1-WL test.
For the output layer we used the Weisfeiler-Leman with $i=2$ iterations to collect the relevant information.

\begin{table}
	\centering
	\begin{tabular}{lrr}
\toprule
Dataset & Preprocessing Distances (s) & Preprocessing Labels (s) \\
\midrule
NCI1 & $33.6$ & $9.0$ \\
NCI109 & $33.2$ & $8.9$ \\
Mutagenicity & $36.0$ & $7.6$ \\
DHFR & $11.4$ & $1.8$ \\
\midrule
IMDB-BINARY & $4.9$ & $1007.7$ \\
IMDB-MULTI & $3.4$ & $1618.9$ \\
\midrule
ZINC ($12k$) &$94.4$&$26.1$ \\
ZINC ($250k$) &$1071.8$&$550.1$\\
\midrule
RingTransfer1 & $94.9$ & $0.0$ \\
RingTransfer2 & $2.6$ & $0.0$ \\
RingTransfer3 & $2.7$ & $0.0$ \\
CSL & $2.3$ & $5.0$ \\
Snowflakes & $125.8$ & $5.1$ \\
\midrule
Substructure Counting &$16.4$&$20.0$ \\
\bottomrule
\end{tabular}
    	\caption{\label{tab:preprocessing-times}
	Preprocessing times in seconds for the datasets used in the experiments.
	\textit{Preprocessing Distances} shows the time needed to compute all the pairwise distances between the nodes of the graphs,
	\textit{Preprocessing Labels} shows the time needed to compute the node labels for all node labeling functions used in the hyperparameter search.
	}
\end{table}

\begin{table*}
    \centering
	\begin{adjustbox}{width=1\linewidth}
\begin{tabular}{lcc|ccr|cr|r}
\toprule
Dataset & Best Epoch & Time per Epoch (s) & \multicolumn{3}{c}{Encoder Layers} & \multicolumn{2}{c}{Decoder Layer} & \# Total Weights \\
 & & & Labeling Function & Distances & \# Weights & Labeling Function & \# Weights & \\
\midrule
NCI1 & $6.4 \pm 5.1$ & $13.7 \pm 9.1$ & WL, Iterations 2 & 0, \ldots, 6 & 381\,145 & WL, Iterations 2 & 8\,118 & $389\,263$ \\
NCI109 & $3.0 \pm 5.0$ & $67.2 \pm 25.2$ & WL, Iterations 3 & 0, \ldots, 6 & 925\,878 & Atomic Numbers & 78 & $925\,956$ \\
Mutagenicity & $14.7 \pm 7.3$ & $2.1 \pm 0.5$ & WL, Iterations 1 & 0, \ldots, 6 & 28\,638 & Atomic Numbers & 30 & $28\,668$ \\
DHFR & $41.3 \pm 28.9$ & $0.4 \pm 0.1$ & WL, Iterations 2 & 0, \ldots, 6 & 53\,715 & Simple Cycles, Max.~Length 20 + Atomic Numbers & 292 & $54\,007$ \\
\midrule
IMDB-BINARY & $21.9 \pm 16.2$ & $0.4 \pm 0.1$ & Triangles, Squares + Node Degree & 0, 1, 2 & 19\,332 & Clique, Max.~Size 4 & 64 & $19\,396$ \\
IMDB-MULTI & $15.6 \pm 11.2$ & $0.7 \pm 0.2$ & Triangles, Squares + Node Degree & 0, 1, 2 & 10\,003 & Node Degree & 180 & $10\,183$ \\
\midrule
RingTransfer1 & $200.0 \pm 0.0$ & $0.6 \pm 0.1$ & Node Labels & 0, 50 & 18 & Node Labels & 18 & $36$ \\
RingTransfer2 & $200.0 \pm 0.0$ & $0.5 \pm 0.1$ & Node Labels & 8 & 240 & Node Labels & 34 & $274$ \\
\multirow{2}{*}{RingTransfer3} & \multirow{2}{*}{$152.5 \pm 56.8$} & \multirow{2}{*}{$0.7 \pm 0.2$} & \textit{Layer 1:} Node Labels & 8 & 240 & \multirow{2}{*}{Node Labels} & \multirow{2}{*}{68} & \multirow{2}{*}{$548$} \\
 &  & & \textit{Layer 2:} Node Labels & 4 & 240 &  &  &  \\
CSL & $200.0 \pm 0.0$ & $0.0 \pm 0.0$ & Simple Cycles, Max.~Length 10 & 0, 1 & 550 & Simple Cycles, Max.~Length 10 & 960 & $1\,510$ \\
Snowflakes & $71.8 \pm 44.7$ & $0.5 \pm 0.1$ & Simple Cycles, Max.~Length 10 + Node Labels & 0, 3 & 5\,000 & WL, Iterations 2 & 36 & $5\,036$ \\
\bottomrule
\end{tabular}

\end{adjustbox}
\caption{\label{tab:details-best-runs}
	Details of the best performing hyperparameter configurations for~\MyGNNs on the datasets used in the experiments.
	\textit{Best Epoch} denotes the average epoch where the highest validation accuracy is reached,
	\textit{Time per Epoch} is the average time per epoch in seconds,
	\textit{Encoder Layers} shows the \textit{Labeling Function} and \textit{Distances} between node pairs used to define the shared weights. \textit{\#Weights} is
	the resulting number of weights.
	\textit{Decoder Layer} shows the \textit{Labeling Function} used to define the shared weights and \textit{\#Weights} is the resulting number of weights.
	\textit{\#Total Weights} is the total number of weights in the model.
	}
\end{table*}

\subsection{Regression Benchmark (ZINC)}\label{subsec:regression}

For the ZINC benchmark, we conducted an extensive hyperparameter and architecture search over
different invariant labeling functions, number of heads, maximum message-passing distances,
and pooling strategies.
The final configuration was selected based on the best validation
performance.
\Cref{tab:zinc-arch} summarizes the resulting architecture, and
\Cref{tab:zinc-hparams} lists the corresponding training hyperparameters.

The model consists of a single linear projection layer followed by an
\emph{invariant-based convolution layer} with multiple parallel heads.
Each head corresponds to a specific structural invariant: induced/simple cycles
(lengths 6–10), atomic numbers, and Weisfeiler–Leman labels with and without considering edge labels, all combined with distance features up to 23 hops (cycles and atomic numbers 23 hops and WL 2 hops) 
These heads enable parameter sharing across nodes and node pairs with the same
invariant signature, allowing information to be propagated over the entire
molecule in a single message-passing layer.
The encoder heads are aggregated into a $100$ dimensional representation for each node.
A multi-head invariant pooling layer aggregates the resulting node embeddings
into a graph representation, which is processed by a compact MLP ($2$ layer) readout.
The invariant pooling layer uses the same invariants as the encoder layer but $10$ heads per invariant instead of $2$ heads (in the encoder).

All experiments are trained with MAE loss and the Adam optimizer, using learning
rate scheduling and weight initialization as specified in
\Cref{tab:zinc-hparams}.
The detailed layer and head definitions
are provided in the configuration files \texttt{network\_ZINC.yml} and \texttt{network\_ZINC\_full.yml}.

\begin{table*}[t]
\centering

\begin{tabular}{p{0.42\linewidth} p{0.5\linewidth}}
\toprule
\textbf{Layer} & \textbf{Configuration} \\
\midrule
Linear Layer & one-hot $\rightarrow$ $10$ output features \\
Layer Norm & \\
Invariant-Based Message-Passing Layer & (each head twice, biases: \textit{atomic numbers}) \\
 & \textit{induced cycles} (lengths: $6$ to $10$, $d=1,\ldots,22$)\\
 & \textit{simple cycles} (lengths: $6$ to $10$, $d=1,\ldots,22$)\\
 & \textit{atomic numbers} ($d=1,\ldots,22$)\\
 & \textit{atomic numbers} ($d=1$, edge labels)\\
 & \textit{degree + atomic numbers} ($d=1,\ldots,4$)\\
 & \textit{degree + atomic numbers + edge labels} ($d=1,\ldots,4$)\\
& \textit{WL, depth: 1 + edge labels} ($d=1,\ldots,2$)\\
Multi-Head Aggregation  & $100$ output features \\
Layer Norm & \\
Invariant-Based Pooling Layer & (same invariants but $10$ heads per invariant) \\
Readout MLP & 100 $\rightarrow$ 100 $\rightarrow$ 1 \\
\bottomrule
\end{tabular}
\caption{
\label{tab:zinc-arch}Best ShareGNN architecture for the ZINC dataset.
Invariants in italic. All layers except for the last one have LeakyRelu activations.}
\end{table*}

\begin{table}[t]
\centering
\begin{tabular}{p{0.42\linewidth} p{0.5\linewidth}}
\toprule
\textbf{Hyperparameter} & \textbf{Value} \\
\midrule
Loss & Mean Absolute Error (MAE) \\
Optimizer & Adam + ReduceLROnPlateau (factor 0.5, patience 5) \\
Learning rate & 0.001 (min: 0.0001)\\
Batch size & 128 \\
Epochs & 150 \\
Weight \& bias initialization & Uniform $[-0.001, 0.001]$ \\
Input features & One-hot node labels \\
Precision & double \\
\bottomrule
\end{tabular}

\caption{
\label{tab:zinc-hparams}Training hyperparameters for ZINC regression.}
\end{table}

\begin{figure*}
	\includegraphics[width=\linewidth]{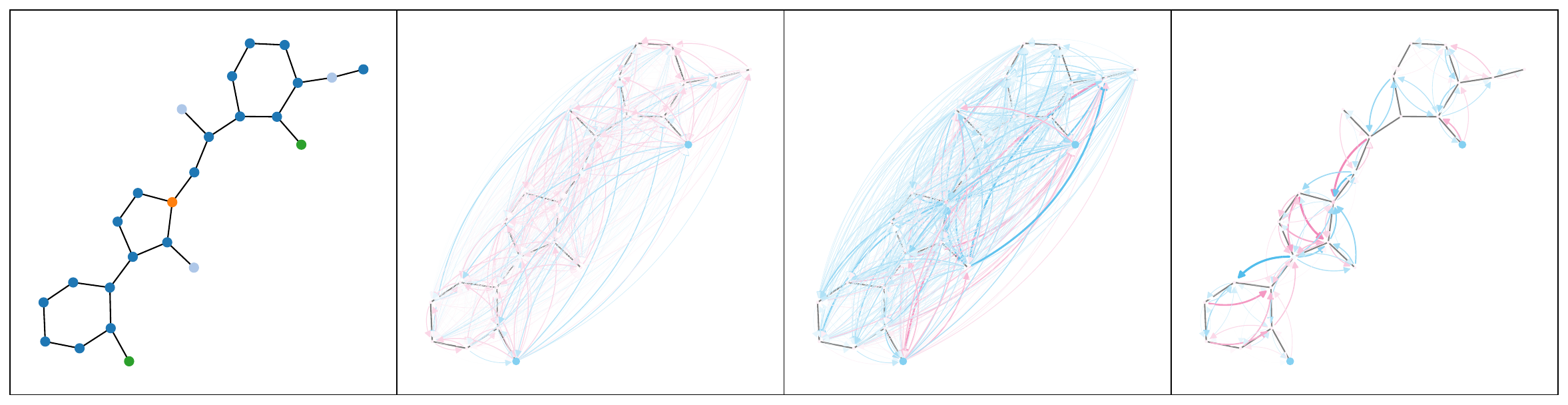}
    \caption{\label{fig:zinc-message-passing}
    Message passing within different heads of the ShareGNN for the ZINC dataset on one example graph.
    The leftmost column shows the original graph with atomic number node labels.
    The other columns show the message passing weights for different heads of the invariant-based encoder.
    The parameters per head are from left to right:
    \textit{(1) induced cycles (lengths 6), $d>1$},
    \textit{(2) atomic numbers, $d>1$},
    \textit{(3) $1$-WL labels at iteration $1$ (using edge and atomic numbers as initial label), $d\in\{1,2\}$}.
    }

\end{figure*}

\section{Ablation Study \& Additional Results}\label{sec:additional-results}
We provide additional results, including visualizations
of the message-passing weights.
In particular, we study the effect of restricting the number of weights as well as considering multiple encoder layers
and different maximum distances for the node pairs.

\subsection{Results for the Standard Evaluation}\label{subsec:standard-evaluation-results}
We already presented the results of our~\MyGNNs regarding the fair evaluation protocol in \Cref{tab:real-world}.
Here, we provide the results for the standard evaluation protocol in \Cref{tab:real-world-sota}.
We use the same splits as provided by~\citet{DBLP:conf/iclr/XuHLJ19}.
In the \textit{standard} setup, we compare to methods that explicitly enhance expressiveness through substructures or topological information, such as
CIN~\citep{DBLP:conf/nips/BodnarFOWLMB21},
SIN~\citep{DBLP:conf/icml/BodnarF0OMLB21},
GSN~\citep{DBLP:journals/pami/BouritsasFZB23},
PIN~\citep{DBLP:conf/aaai/Truong024}.

\paragraph{\textit{Standard} Evaluation}
For comparison with~\citet{DBLP:conf/iclr/XuHLJ19,DBLP:conf/nips/BodnarFOWLMB21,DBLP:journals/pami/BouritsasFZB23} we also adopt the widely used standard protocol.
Each dataset is split into $10$ predefined train/test folds.
Models are trained on the training folds with a grid of hyperparameter settings, and the configuration with the best average test accuracy across folds is used for final reporting (\Cref{tab:real-world-sota}).

\paragraph{Results}
In the standard evaluation (\Cref{tab:real-world-sota}), the performance matches or surpasses current state-of-the-art methods by noting that GSN, the best performing model on the social datasets, uses similar structural information.
The ability to perform well in both protocols highlights the model’s strengths and stability across different training setups.
A variant using Gaussian noise added to the constant node features often shows slightly improved generalization, emphasizing that the model relies entirely on structure rather than node features.

\begin{table}\centering
\begin{tabular}{lcc|cc}
		\toprule
        & NCI1 & NCI109  & IMDB-B& IMDB-M\\
		\midrule
		NoG  & $63.4 \pm 2.5$ & $61.6 \pm 2.7$ & $71.2 \pm 5.6$ & $47.1 \pm 3.3$\\
		WL & \SecondColor{85.4 \pm 2.3} & \SecondColor{85.5 \pm 1.6} & $74.3 \pm 3.5$ & $51.5 \pm 3.9$\\
		\hline
		CIN &$83.6\pm 1.4$&\ThirdColor{84.0\pm 1.6}&$75.6\pm 3.7$&\ThirdColor{52.7\pm 3.1}\\
		SIN & $82.7\pm 2.1$& - & $75.6\pm 3.2$ & $52.4\pm 2.9$\\
		GSN &$83.5\pm 2.3$&-&\FirstColor{77.8\pm 3.3}&\FirstColor{54.3\pm 3.3}\\
		PIN & \ThirdColor{85.1\pm 1.5}& \ThirdColor{84.0\pm 1.5} & \ThirdColor{76.6\pm 2.9} & -\\
		\midrule
		\textbf{ours} & \FirstColor{86.1 \pm 2.4} & \FirstColor{86.8 \pm 1.6} & \SecondColor{77.7 \pm 2.8} & \SecondColor{53.1 \pm 4.0}\\
		\bottomrule
	\end{tabular}
		\caption{\label{tab:real-world-sota}
	\textit{Standard} eval. (Acc. in \%).
		The best results are highlighted by \FirstColor{\textbf{First}}, \SecondColor{\textbf{Second}} and \ThirdColor{\textbf{Third}}.
    }
    \end{table}
    
    \paragraph{Runtime} We compared the runtime of~\MyGNN and that of classical MPNNs showing that~\MyGNN is competitive with GIN and faster than GAT/GATv2, while being slower than GraphSAGE but at the same time converging in fewer epochs.
    
• \begin{table}[t]
	\centering
	\begin{tabular}{lrrrrrrrr}
		\toprule
		& \multicolumn{2}{c}{NCI1} & \multicolumn{2}{c}{NCI109} & \multicolumn{2}{c}
		{Mutagenicity} & \multicolumn{2}{c}{DHFR} \\
		\midrule
  GraphSAGE & $\mathbf{1.1}$ & $195$ & $\mathbf{1.1}$ & $192$ & $\mathbf{1.3}$ &
$180$ & $\mathbf{0.3}$ & $184$ \\
GIN       & $2.9$ & $145$ & $2.8$ & $170$ & $2.9$ & $193$ & $0.8$ & $131$ \\
GAT       & $19.6$ & $447$ & $10.0$ & $263$ & $10.3$ & $295$ & $2.8$ & $183$ \\
GATv2     & $30.5$ & $183$ & $25.5$ & $177$ & $16.3$ & $198$ & $7.9$ & $94$ \\
ShareGNN  & $13.7$ & $\mathbf{6}$ & $67.2$ & $\mathbf{3}$ & $2.1$ & $
\mathbf{15}$ & $0.4$ & $\mathbf{41}$ \\

		\bottomrule
	\end{tabular}

    	\caption{Runtime comparison betwenn~\MyGNN and classical MPNNs showing the runtime per epoch in seconds (first column) and the convergence speed, i.e., the best epoch (second column).}
    \end{table}

\subsection{Influence of Information Encoding}\label{subsec:hyperparameter-study}
Our approach encodes additional structural information via invariance-based weight sharing, i.e.,
all information is encoded in the message-passing weights of the encoder layer and the weights of the decoder layer.
What should be analyzed is whether the improved performance of our~\MyGNNs is solely due to the additional information used
or whether the structural encoding of information is key to the performance.
The following experiment shows that the additional information is not solely responsible for our improved performance
but how the information is encoded matters.
To do so, we collect the node labels of the best run of~\MyGNNs and use them as additional input features for the competitors (\Cref{tab:results-with-features}).

Surprisingly, only GIN (on IMDB) and GraphSAGE (on DHFR) show a slight improvement
compared to their original results.
The overall performance is still worse than that of~\MyGNN.
We argue that the structural encoding of information is key to GNNs' performance and cannot be replaced by additional input features.
Second, we vary the number of encoder layers and the maximum message-passing distance $D$ between two nodes.
\Cref{fig:ablation-c} shows that with increasing distance $D$ the performance on NCI1 slightly improves from $82.9\%$ ($D=1$)
to $85.6\%$ $(D=12)$ for a single encoder layer.
In fact,~\textit{a single} message-passing layer is sufficient, as it already yields the best performance.
This shows that capturing long-range interactions in a single layer is better than capturing them in multiple layers.
Third, we analyze the effect of valid triples $\ruleset$ on the model performance.
In particular, we reduce the number of shared message passing weights as follows:
We consider only such shared weights for message passing that appear at least $x$-times in the graph dataset, i.e., we ignore rare relations.
\Cref{fig:ablation-a,fig:ablation-b} show the results for NCI1 and IMDB-B, respectively.
Each $x$-axis value represents the model where all shared weights for message passing that occur at most $(x-1)$-times in the respective graph dataset are removed.
The performance on NCI1 is relatively stable up to an $x$-value of $8$, i.e., the model
considers only shared weights that occur at least $9$ times in the NCI1 dataset.
For IMDB-BINARY we see no significant drop in performance up to an $x$-value of $20$.
Thus, for both datasets, we can reduce the number of weights by more than $80\%$ without any significant performance loss.
We provide more details on the above ablation study for all the datasets in~\Cref{sec:additional-results}.

\subsection{Weight Analysis of the Encoder}\label{subsec:encoder-layer-analysis}
\Cref{fig:occurences_per_rule} shows the number of occurrences of the shared weights of the encoder layer summed over all graphs for the respective dataset.
The underlying hyperparameter configuration, i.e., labeling function is the one with the best performance for the respective dataset, see \Cref{tab:details-best-runs}.
Each $x$-axis value corresponds to a weight of the encoder layer sorted by the number of occurrences.
The $y$-axis shows the number of occurrences of the respective weight.
The ten vertical dotted lines indicate the distribution of the number of occurrences.
All weights in the plots that are right of the $n$-th vertical line (counting from the right) are present less than $(n+1)$-times in the dataset,
e.g., weights left of the leftmost vertical line appear more than 10 times in the dataset.
The different colors denote the node pair distance the respective weight is associated with.
We observe that for NCI and NCI109, more than half of the weights only occur once in the whole dataset.
In fact, for all datasets only a small fraction of the weights occur more than 10 times.
\Cref{fig:weights_per_rule} shows the final values of the trained encoder layer weights of the respective datasets.
For all datasets we observe a gap in the distribution of the weights, which is best visible for the NCI109 dataset.
We explain this behavior as follows:
The message-passing weights are all initialized with the same constant value of $0.001$.
All weights are updated during training, except for the ones that are associated with node pairs that are only present in the test datasets.
These weights are not updated and remain at their initial value.
Weights that occur only a few times or even only once in the dataset are also less likely to be updated during training.
Surprisingly, it seems that if the weights are updated, they are all updated above or below a certain threshold which
results in the above-mentioned gap in the distribution of the weights.
What is also surprising is that the weight values seem to be distributed symmetrically around zero, except of the IMDB-BINARY dataset.

\subsection{Multi-Layer~\MyGNNs}\label{subsec:ablation-encoder-layers}
We study the effect of using multiple message-passing layers on the performance of the model as well as changing the maximum distance between two nodes for which
message-passing weights are learned.
\Cref{fig:ablation_distance} shows the mean accuracy
for~\MyGNN
for different numbers of encoder layers  ($x$-axis) and different maximum distances between two nodes for which message-passing weights are learned ($y$-axis) for the NCI109 and DHFR datasets.
Note that the models are initialized with the best hyperparameters for the respective dataset.
We observe that including more distant node pairs for which message-passing weights are learned does not lead to a decrease in performance.
Thus, long-range interactions in a single message-passing layer does not reduce performance.
In contrast, if capturing long-range interactions with more and more encoder layers, the performance decreases.
Indeed, the model is unable to fit the data with more than 4 encoder layers.
This shows that capturing long-range interactions in a single layer surpasses the performance of capturing them in multiple layers.

\subsection{Reduction of the Message-Passing Weights}\label{subsec:weight-restiction}
The above observations motivate the following ablation study.
The idea is to restrict the number of weights
by considering only the most frequent weights that appear in the graph dataset~\Cref{fig:ablation_threshold_lower}, and only
the most infrequent weights that appear in the graph dataset~\Cref{fig:ablation_threshold_upper}.
Recall, that each shared weight in the encoder layer is associated with a node pair at a specific distance.
Thus, we can pre-compute the number of occurrences of each weight and sort them accordingly for each dataset.
\Cref{fig:ablation_threshold_lower} shows the mean accuracies with standard deviation on the NCI109, DHFR and IMDB-MULTI datasets for models with different numbers of weights.
Each $x$-axis value corresponds to a model that uses only such shared weights for message passing that occur at least $x$ times in the dataset.
For DHFR and IMDB-MULTI, we observe that the performance only slightly decreases when deleting more and more of the
less frequent weights.
In this way we can reduce the number of weights by approximately 80\% without a significant loss in performance.
For the NCI109 dataset, the performance decreases more rapidly, likely because the number of parameters also drops quickly.
\Cref{fig:ablation_threshold_upper}, where
each $x$-axis value corresponds to a model that uses only such shared weights for message passing that occur at most $x$ times in the dataset,
shows the opposite case.
That is, we reduce the number of weights by removing the most frequent ones.
We observe that the performance increases when adding more and more of the less frequent weights.
However, the performance is still lower than the one of the models that use the most frequent weights despite the fact that the total number of weights is higher.
We conclude that infrequent weights are less important for the graph classification task than the frequent ones.

\Cref{fig:weights_per_rule_lower_10} visualizes the same as~\Cref{fig:weights_per_rule} but this time for the models that are trained with message-passing weights that occur at least 10 times in the dataset.
We see only one vertical dotted line in the plots.
Weights to the right of the vertical line appear exactly 10 times; those to the left appear more than 10 times.
Moreover, by comparing the $x$-axes of~\Cref{fig:weights_per_rule} and~\Cref{fig:weights_per_rule_lower_10} we observe
we train on a significantly smaller number of weights.
Interestingly, the gap in the distribution of the weights values observed in~\Cref{fig:weights_per_rule} has disappeared.
This suggests that all considered weights are often enough updated during training.
Another surprising observation is that in case of IMDB-BINARY, the weights seem to
take certain levels.
We can only speculate that these weights are associated with node pairs that are present in only one or a few graphs in the dataset, and thus are updated very infrequently during training.
In fact, this would mean that they might not be that important for the graph classification task.
An exact explanation for this behavior needs a deeper analysis of these weights and hence the corresponding node pairs.
Despite the fact that we only consider weights that occur at least 10 times in the dataset,
lots of weights in the IMDB-BINARY dataset seem to only occur once in the test dataset (bar around zero).

\subsection{Visualization of the Message-Passing Weights}\label{subsec:visualization}
Our method allows for a very natural visualization of the learned message-passing weights and hence we
expect our model to provide helpful insights regarding interpretability and explainability of graph classification tasks.
Here are a few examples to illustrate this point.
\Cref{fig:interpretability-1} shows the learned weights of the encoder layers of our~\MyGNN
that achieves $100\%$ accuracy on the RingTransfer2 and $99.4\%$ accuracy on the RingTransfer3 dataset, respectively.
Indeed, it is easy to observe that we restricted to such node pairs that are at distance $8$ for RingTransfer2~(\ref{fig:ring-transfer-2-messages}).
For RingTransfer3~(\ref{fig:ring-transfer-3-messages}), we see that the weights in the first layer are restricted to node pairs at distance $8$
and in the second layer to node pairs at distance $4$.
Moreover, for RingTransfer3, we have rotated the graphs such that the node with the label $0$ is at rightmost position for all three graphs.
Recall, that the node with label $0$ together with the labels of the nodes at distances $4$ and $8$ determine the label of the graph.
The columns \textit{Layer $1$ Top $3$} and \textit{Layer $2$ Top $3$} in~\Cref{fig:ring-transfer-3-messages} show the top three absolute weights
of the respective encoder layers.
In particular, we see how the model aggregates the necessary information from the nodes at distances $8$ in the first layer,
i.e., the largest weights are assigned to the node pairs containing the node with label $0$.
In the second layer, the model aggregates the information from the nodes at distances $4$ and assigns the largest weights to the node pairs containing the node with label $0$.
\Cref{fig:interpretability-2} shows the learned weights of the encoder layers of our~\MyGNN
for the CSL and Snowflakes datasets.
In case of Snowflakes, comparing the columns \textit{Node Labels} and \textit{Top $3$ Weights} in~\Cref{fig:snowflakes-messages}
we observe that the model learns to aggregate the relevant information from the subgraphs $M_0,M_1,M_2$ and $M_3$
attached to the node with label $1$.
This validates that the model learns to look at the relevant subgraphs to classify the Snowflake graphs.

\begin{figure*}[t]
	\begin{subfigure}{0.325\textwidth}\centering
		{\includegraphics[width=\textwidth]{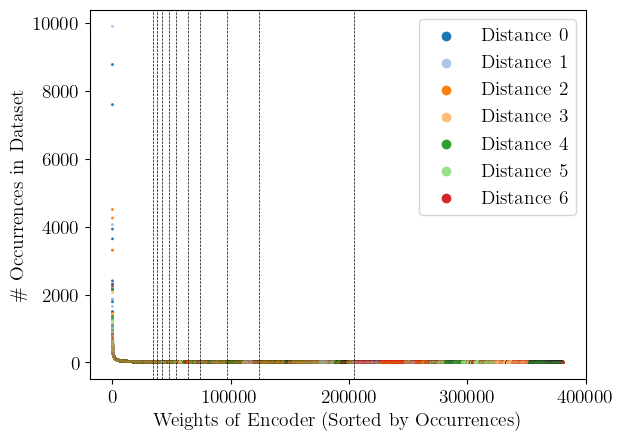}}
		\caption{NCI1\label{fig:occurences_per_rule-NCI1}}
	\end{subfigure}
	\begin{subfigure}{0.325\textwidth}\centering
		{\includegraphics[width=\textwidth]{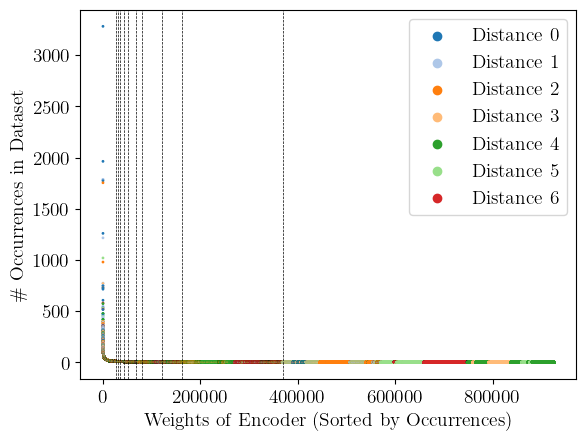}}
		\caption{NCI109\label{fig:occurences_per_rule-NCI109}}
	\end{subfigure}
	\begin{subfigure}{0.325\textwidth}\centering
		{\includegraphics[width=\textwidth]{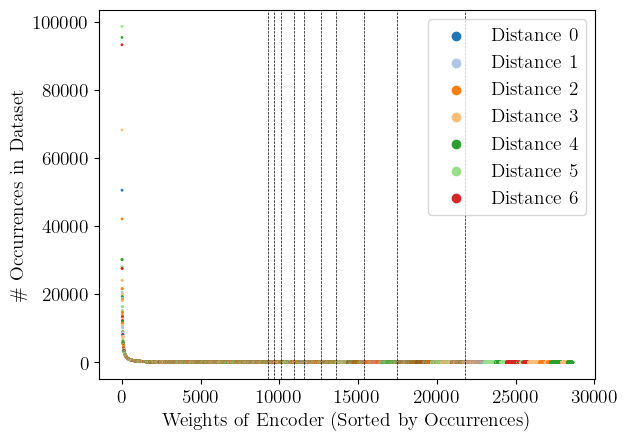}}
		\caption{Mutagenicity\label{fig:occurences_per_rule-Mutagenicity}}
	\end{subfigure}
	\begin{subfigure}{0.325\textwidth}\centering
		{\includegraphics[width=\textwidth]{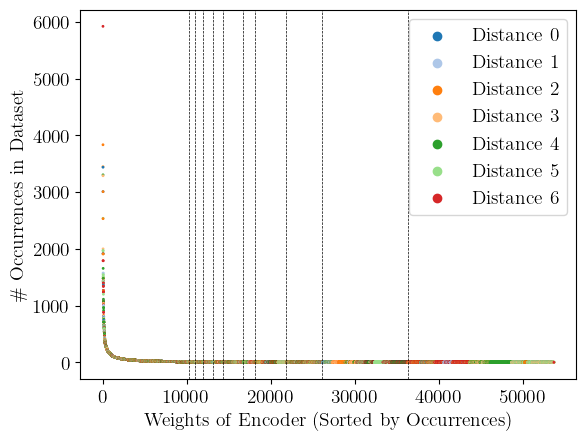}}
		\caption{DHFR\label{fig:occurences_per_rule-DHFR}}
	\end{subfigure}
	\begin{subfigure}{0.325\textwidth}\centering
		{\includegraphics[width=\textwidth]{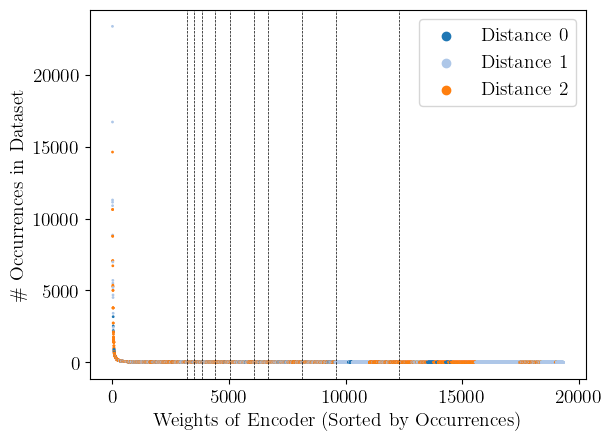}}
		\caption{IMDB-BINARY\label{fig:occurences_per_rule-IMDB-BINARY}}
	\end{subfigure}
	\begin{subfigure}{0.325\textwidth}\centering
		{\includegraphics[width=\textwidth]{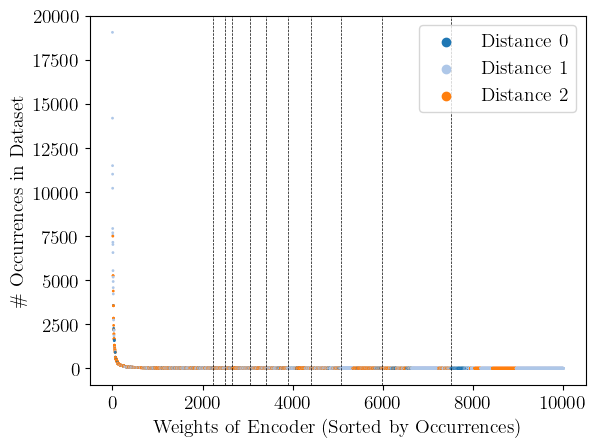}}
		\caption{IMDB-MULTI\label{fig:occurences_per_rule-IMDB-MULTI}}
	\end{subfigure}
	\caption{\label{fig:occurences_per_rule}
	Number of occurrences of the message-passing weights of the encoder layer summed over all graphs for the respective dataset.
	The model is initialized with the best hyperparameters for the respective dataset.
	The ten vertical dotted line indicate the distribution of the number of occurrences.
	All weights in the plots that are right of the $n$-th vertical line (counting from the right) are present less than $(n+1)$-times in the dataset,
	i.e., weights left of the leftmost vertical line appear more than 10 times in the dataset.
	The different colors denote the node pair distance the respective weight is associated with.
	}
\end{figure*}

\begin{figure*}[t]
		\begin{subfigure}{0.325\textwidth}\centering
		{\includegraphics[width=\textwidth]{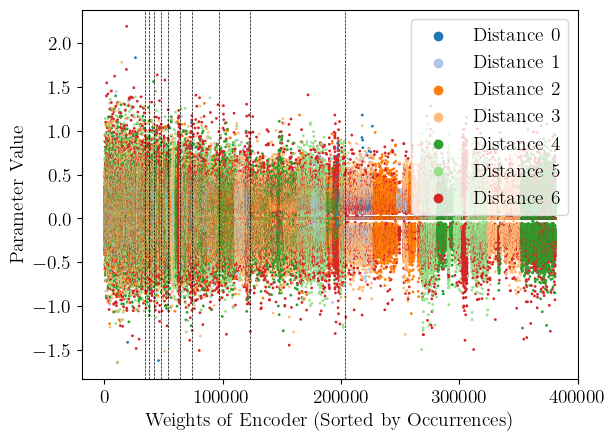}}
		\caption{NCI1\label{fig:weights_per_rule-NCI1}}
	\end{subfigure}
	\begin{subfigure}{0.325\textwidth}\centering
		{\includegraphics[width=\textwidth]{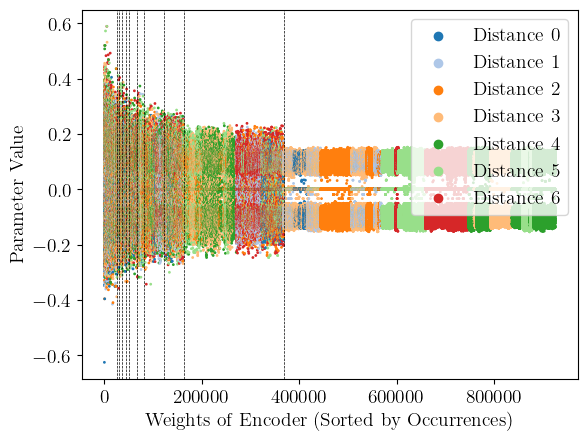}}
		\caption{NCI109\label{fig:weights_per_rule-NCI109}}
	\end{subfigure}
	\begin{subfigure}{0.325\textwidth}\centering
		{\includegraphics[width=\textwidth]{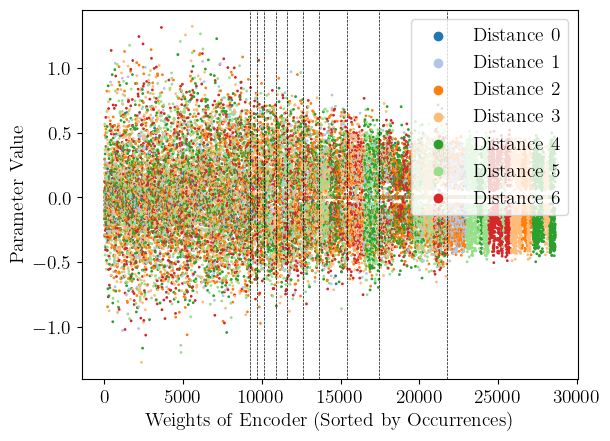}}
		\caption{Mutagenicity\label{fig:weights_per_rule-Mutagenicity}}
	\end{subfigure}
	\begin{subfigure}{0.325\textwidth}\centering
		{\includegraphics[width=\textwidth]{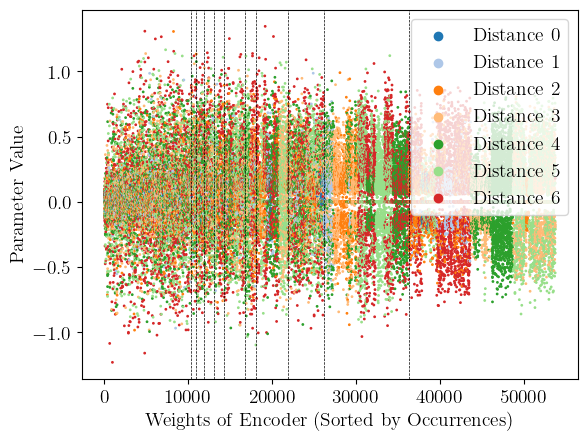}}
		\caption{DHFR\label{fig:weights_per_rule-DHFR}}
	\end{subfigure}
	\begin{subfigure}{0.325\textwidth}\centering
		{\includegraphics[width=\textwidth]{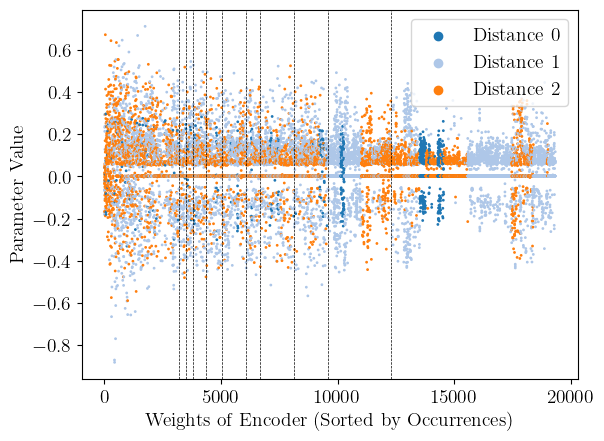}}
		\caption{IMDB-BINARY\label{fig:weights_per_rule-IMDB-BINARY}}
	\end{subfigure}
	\begin{subfigure}{0.325\textwidth}\centering
		{\includegraphics[width=\textwidth]{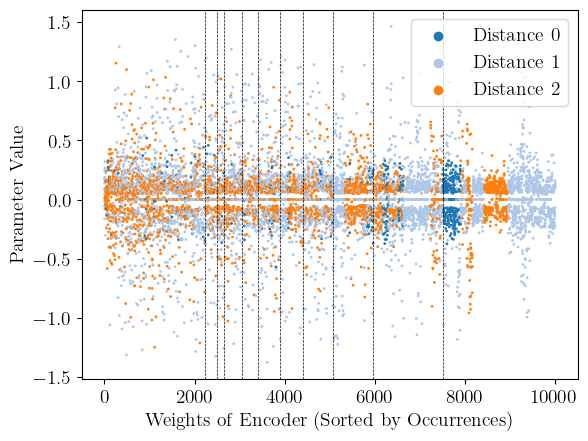}}
		\caption{IMDB-MULTI\label{fig:weights_per_rule-IMDB-MULTI}}
	\end{subfigure}
	\caption{\label{fig:weights_per_rule}
	Values of the trained message-passing weights of the respective datasets.
	The weights are sorted by the number of occurrences in the dataset.
	The model is initialized with the best hyperparameters for the respective dataset.
	The ten vertical dotted lines indicate the number of occurrences.
	All weights in the plots that are right of the $n$-th vertical line (counting from the right) are present less than $(n+1)$-times in the dataset,
	i.e., weights left of the leftmost vertical line appear more than 10 times in the dataset.
	The different colors denote the node pair distance the respective weight is associated with.
	}
\end{figure*}

\begin{figure*}[t]
		\begin{subfigure}{0.325\textwidth}\centering
		{\includegraphics[width=\textwidth]{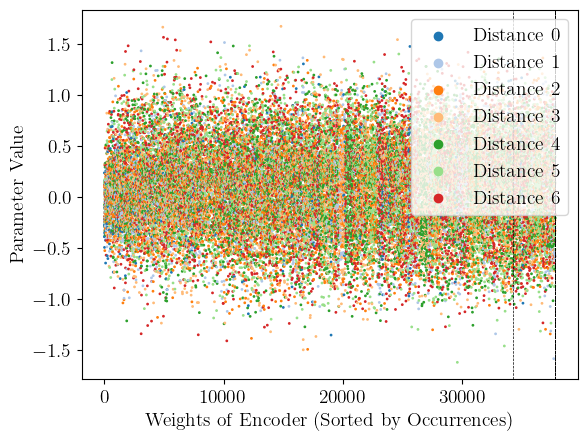}}
		\caption{NCI1\label{fig:weights_per_rule-NCI1_lower_10}}
	\end{subfigure}
	\begin{subfigure}{0.325\textwidth}\centering
		{\includegraphics[width=\textwidth]{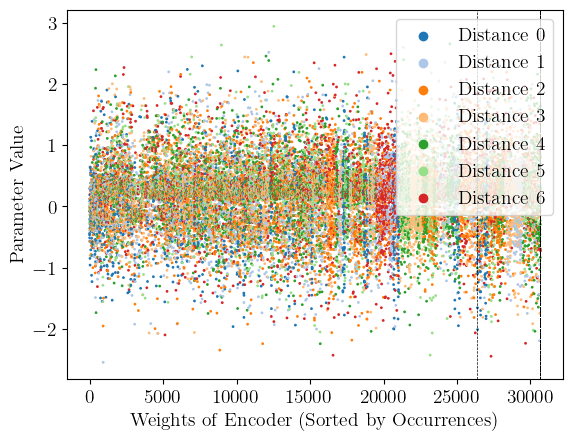}}
		\caption{NCI109\label{fig:weights_per_rule-NCI109_lower_10}}
	\end{subfigure}
	\begin{subfigure}{0.325\textwidth}\centering
		{\includegraphics[width=\textwidth]{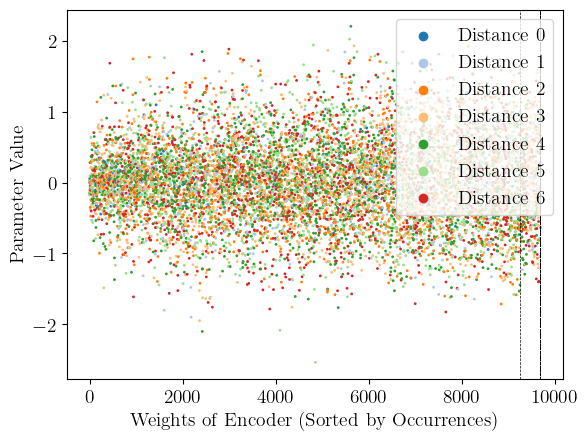}}
		\caption{Mutagenicity\label{fig:weights_per_rule-Mutagenicity_lower_10}}
	\end{subfigure}
	\begin{subfigure}{0.325\textwidth}\centering
		{\includegraphics[width=\textwidth]{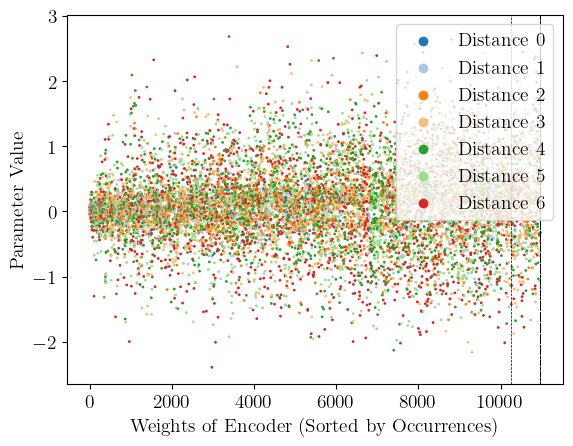}}
		\caption{DHFR\label{fig:weights_per_rule-DHFR_lower_10}}
	\end{subfigure}
	\begin{subfigure}{0.325\textwidth}\centering
		{\includegraphics[width=\textwidth]{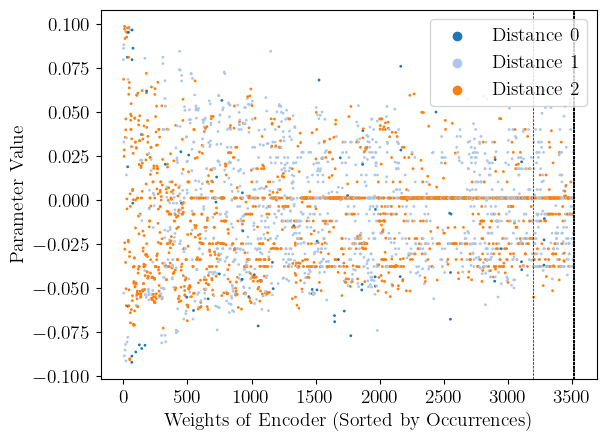}}
		\caption{IMDB-BINARY\label{fig:weights_per_rule-IMDB-BINARY_lower_10}}
	\end{subfigure}
	\begin{subfigure}{0.325\textwidth}\centering
		{\includegraphics[width=\textwidth]{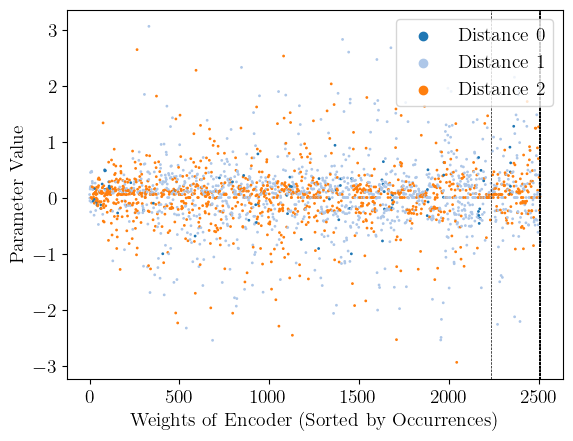}}
		\caption{IMDB-MULTI\label{fig:weights_per_rule-IMDB-MULTI_lower_10}}
	\end{subfigure}
	\caption{\label{fig:weights_per_rule_lower_10}
See~\Cref{fig:weights_per_rule}, but this time the model is trained only with message-passing weights that occur at least 10 times in the dataset.
	}
\end{figure*}

\begin{figure*}[t]
		\begin{subfigure}{0.49\textwidth}\centering
		{\includegraphics[width=\textwidth]{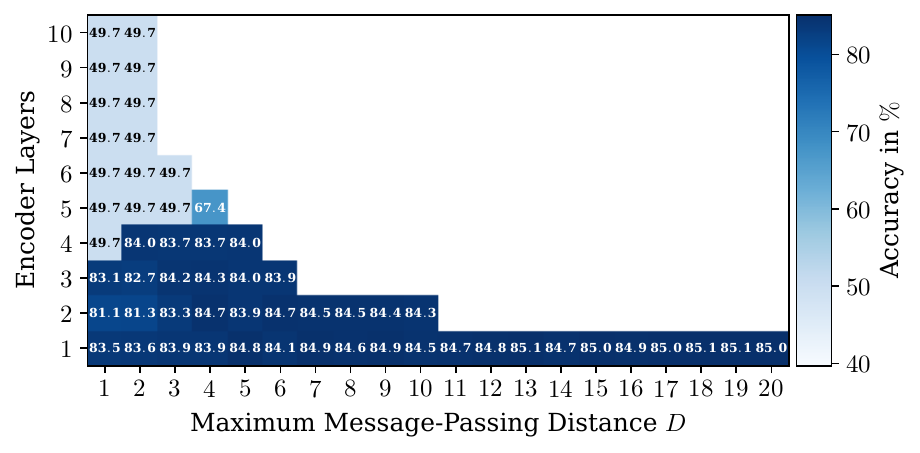}}
		\caption{NCI109\label{fig:ablation_distance-NCI109-20}}
	\end{subfigure}
	\begin{subfigure}{0.49\textwidth}\centering
		{\includegraphics[width=\textwidth]{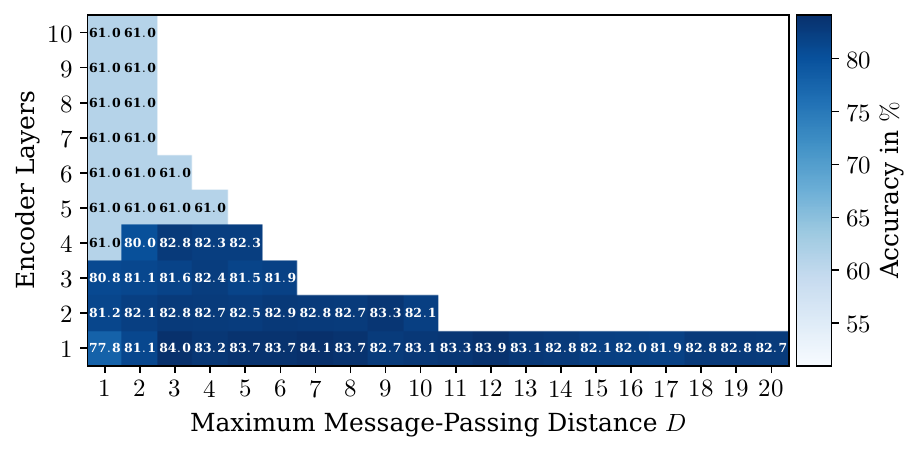}}
		\caption{DHFR\label{fig:ablation_distance-DHFR-20}}
	\end{subfigure}
		\caption{\label{fig:ablation_distance}
\MyGNN performance for different maximum message-passing distances $D$ ($x$-axis) and different numbers of encoder layers ($y$-axis).
		}
		\end{figure*}

\begin{figure*}[t]
	\begin{subfigure}{0.325\textwidth}\centering
		{\includegraphics[width=\textwidth]{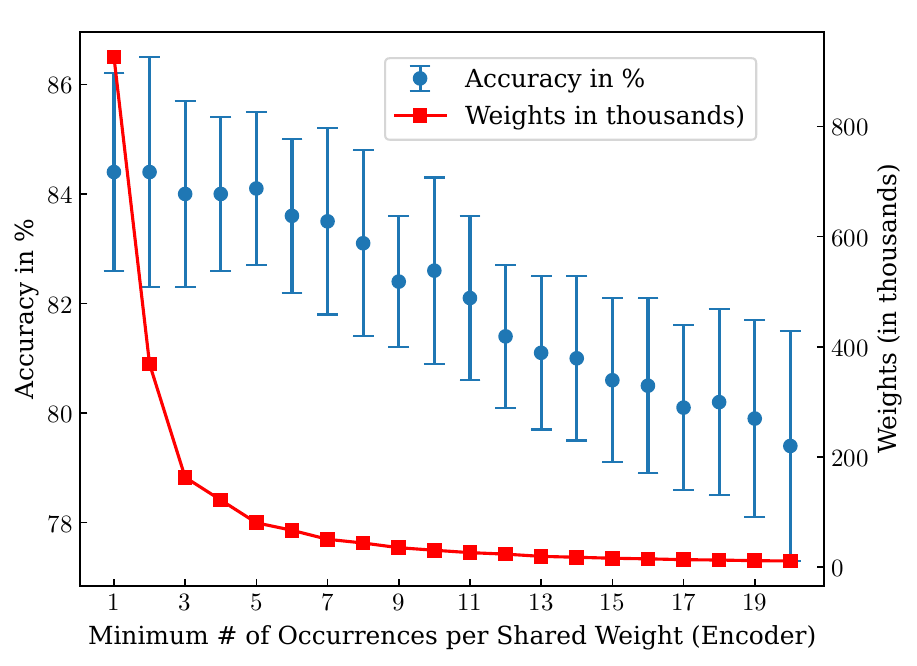}}
		\caption{NCI109\label{fig:ablation_threshold-NCI109-Lower}}
	\end{subfigure}
	\begin{subfigure}{0.325\textwidth}\centering
		{\includegraphics[width=\textwidth]{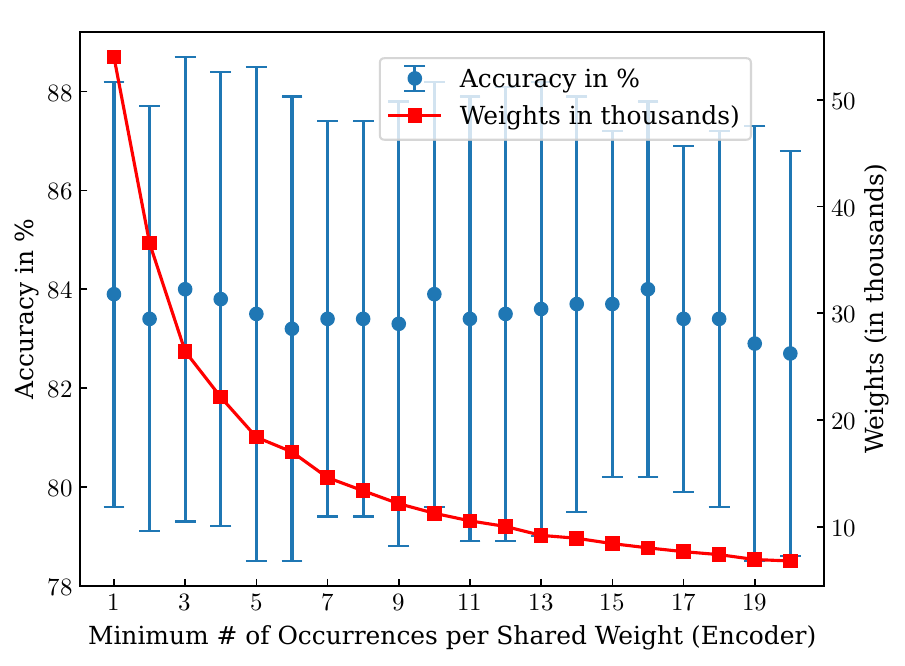}}
		\caption{DHFR\label{fig:ablation_threshold-DHFR-Lower}}
	\end{subfigure}
	\begin{subfigure}{0.3253\textwidth}\centering
		{\includegraphics[width=\textwidth]{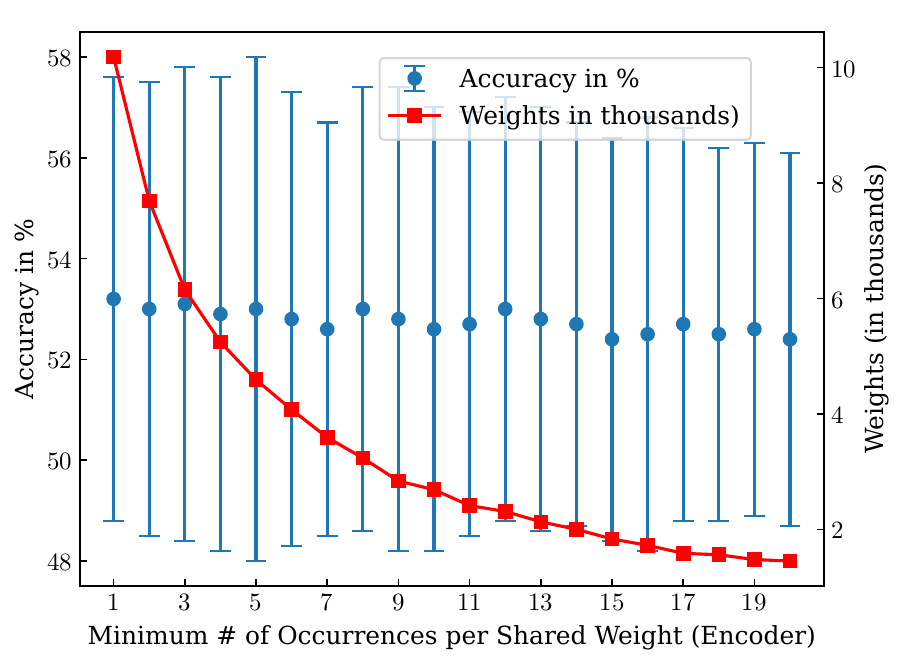}}
		\caption{IMDB-MULTI\label{fig:ablation_threshold-IMDB-MULTI-Lower}}
	\end{subfigure}
	\caption{\label{fig:ablation_threshold_lower} \MyGNN performance (mean accuracy and standard deviation) for different number of weights.
		The model corresponding to the $x$-axis value $i$ contains only those shared message passing weights that occur more than $i-1$-times in the respective graph dataset.
	}
\end{figure*}

\begin{figure*}[t]
		\begin{subfigure}{0.325\textwidth}\centering
		{\includegraphics[width=\textwidth]{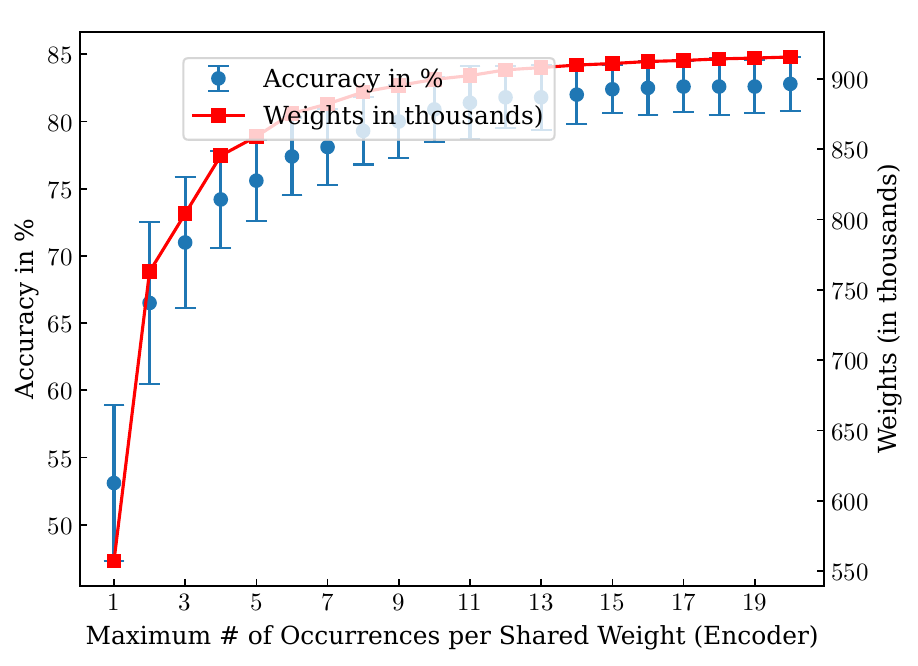}}
		\caption{NCI109\label{fig:ablation_threshold-NCI109-Upper}}
	\end{subfigure}
	\begin{subfigure}{0.325\textwidth}\centering
		{\includegraphics[width=\textwidth]{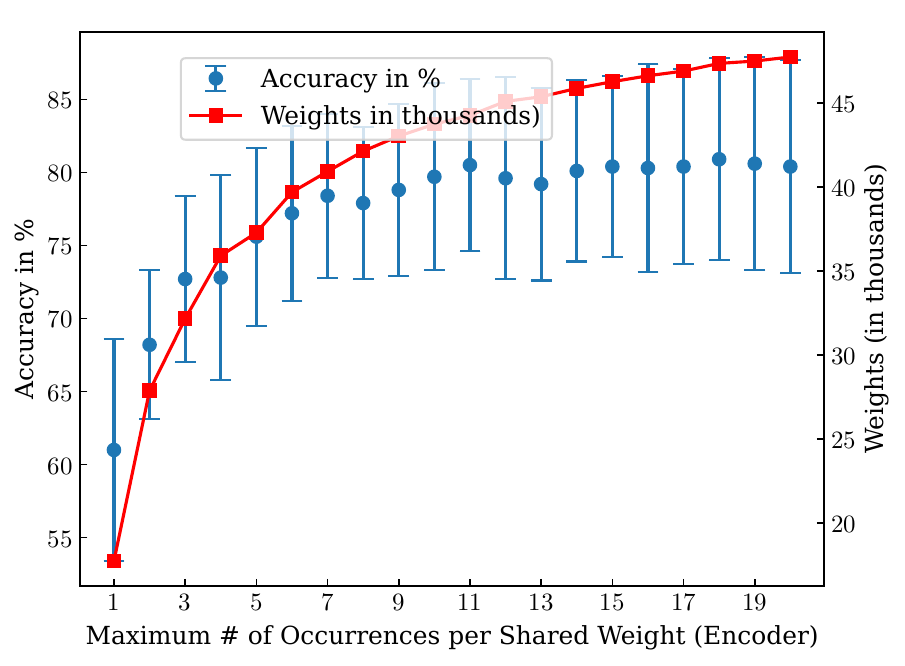}}
		\caption{DHFR\label{fig:ablation_threshold-DHFR-Upper}}
	\end{subfigure}
	\begin{subfigure}{0.325\textwidth}\centering
		{\includegraphics[width=\textwidth]{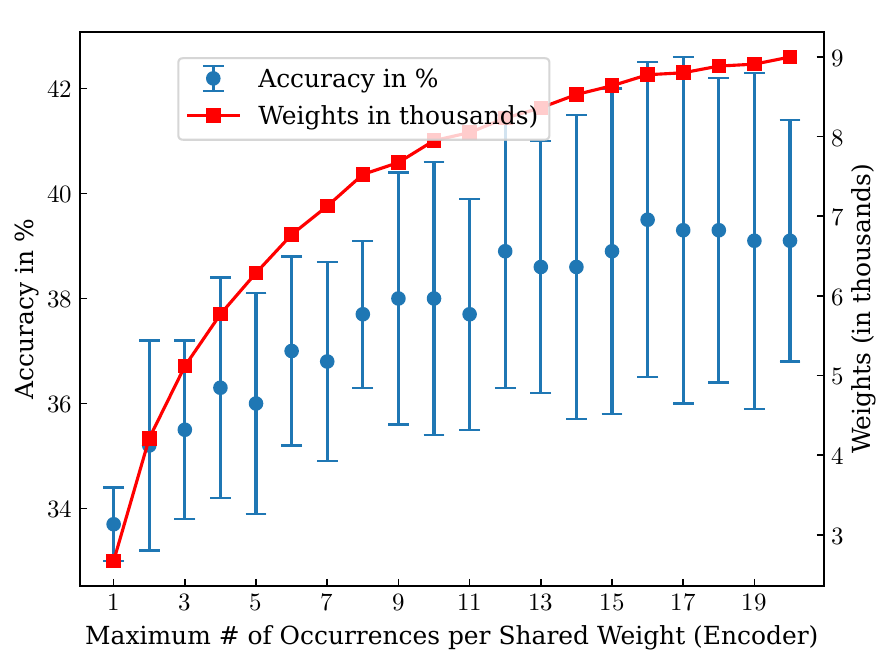}}
		\caption{IMDB-MULTI\label{fig:ablation_threshold-IMDB-MULTI-Upper}}
	\end{subfigure}
		\caption{\label{fig:ablation_threshold_upper}
		\MyGNN performance (mean accuracy and standard deviation) for different number of weights.
		The model corresponding to the $x$-axis value $i$ contains only those shared message passing weights that occur less than $i-1$-times in the respective graph dataset.
		}
		\end{figure*}

\begin{figure*}[t]
  \centering
  	\begin{subfigure}{0.29\textwidth}\centering
  		{\includegraphics[width=\textwidth]{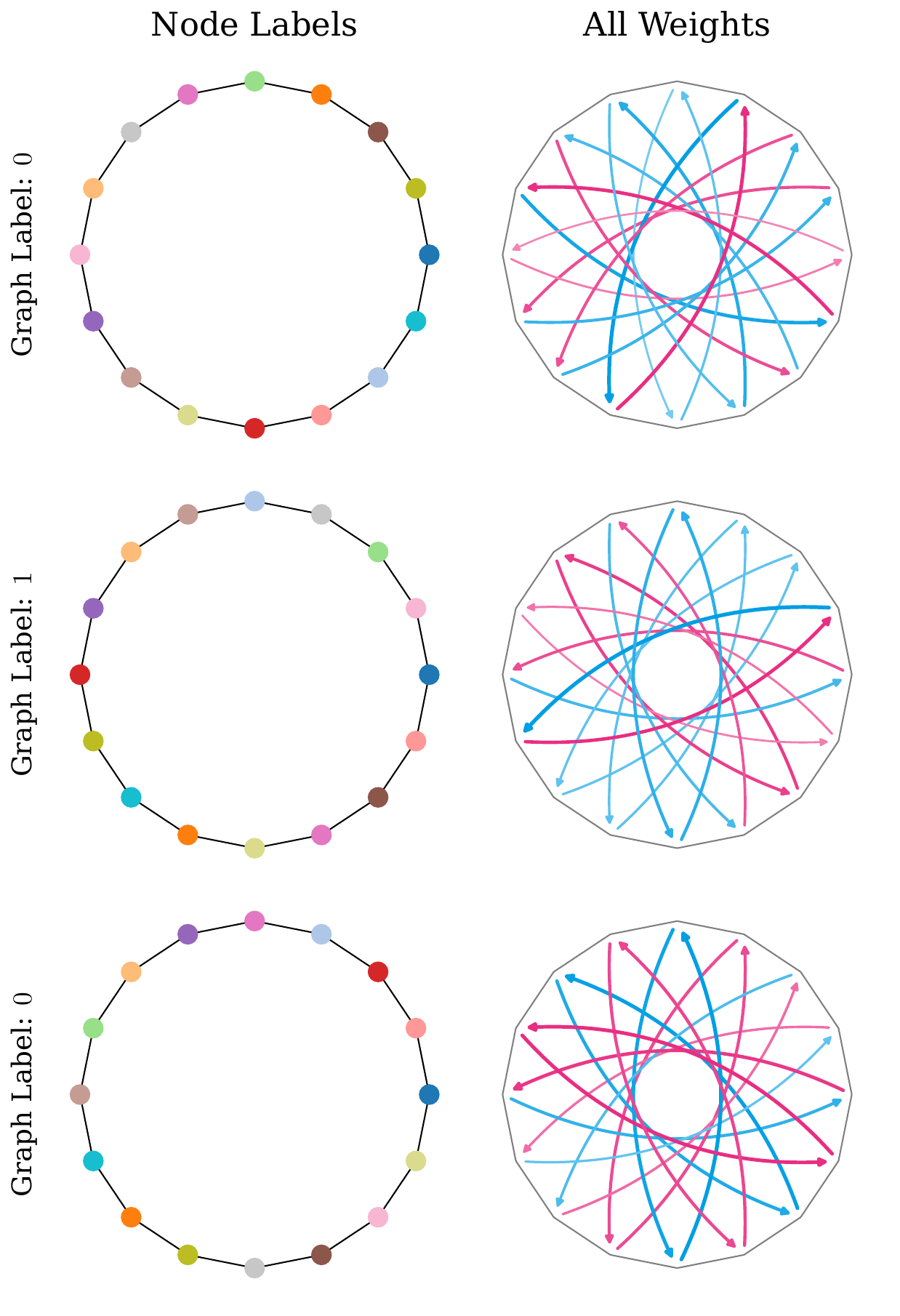}}
  		\caption{RingTransfer2\label{fig:ring-transfer-2-messages}}
		  	\end{subfigure}
  	\begin{subfigure}{0.69\textwidth}\centering
		  		{\includegraphics[width=\textwidth]{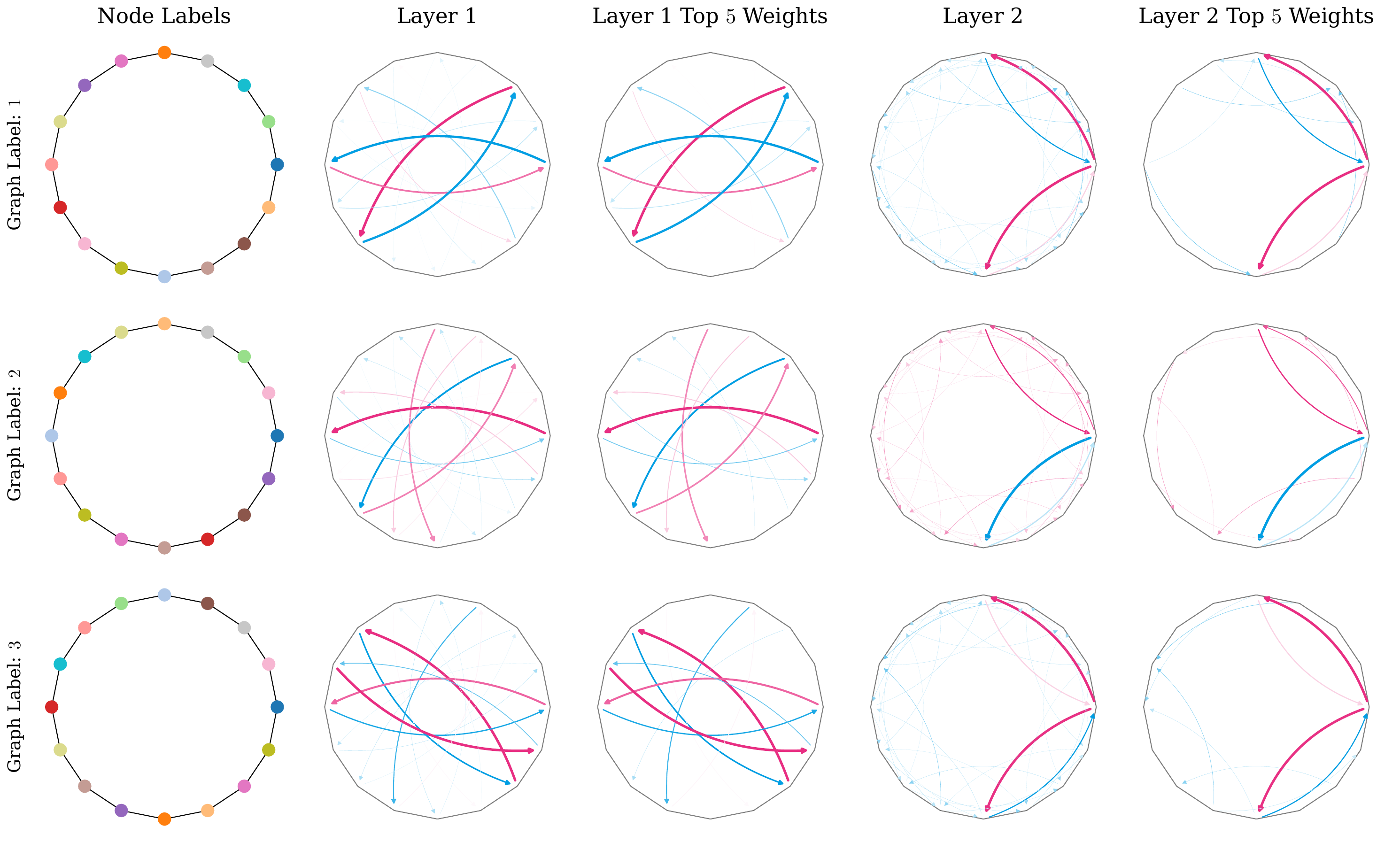}}
  		\caption{RingTransfer3\label{fig:ring-transfer-3-messages}}
\end{subfigure}
  \caption{\label{fig:interpretability-1}
  Visualization of the weights for the respective dataset.
	  The first column of each subfigure shows the graphs with the colors of the nodes representing the different node labels.
	  The other columns show all learned weights and the top five absolute weights, respectively, for the respective encoder layer.}
\end{figure*}

	\begin{figure*}[htbp]
  \centering
  	\begin{subfigure}{0.42\textwidth}\centering
  		{\includegraphics[width=\textwidth]{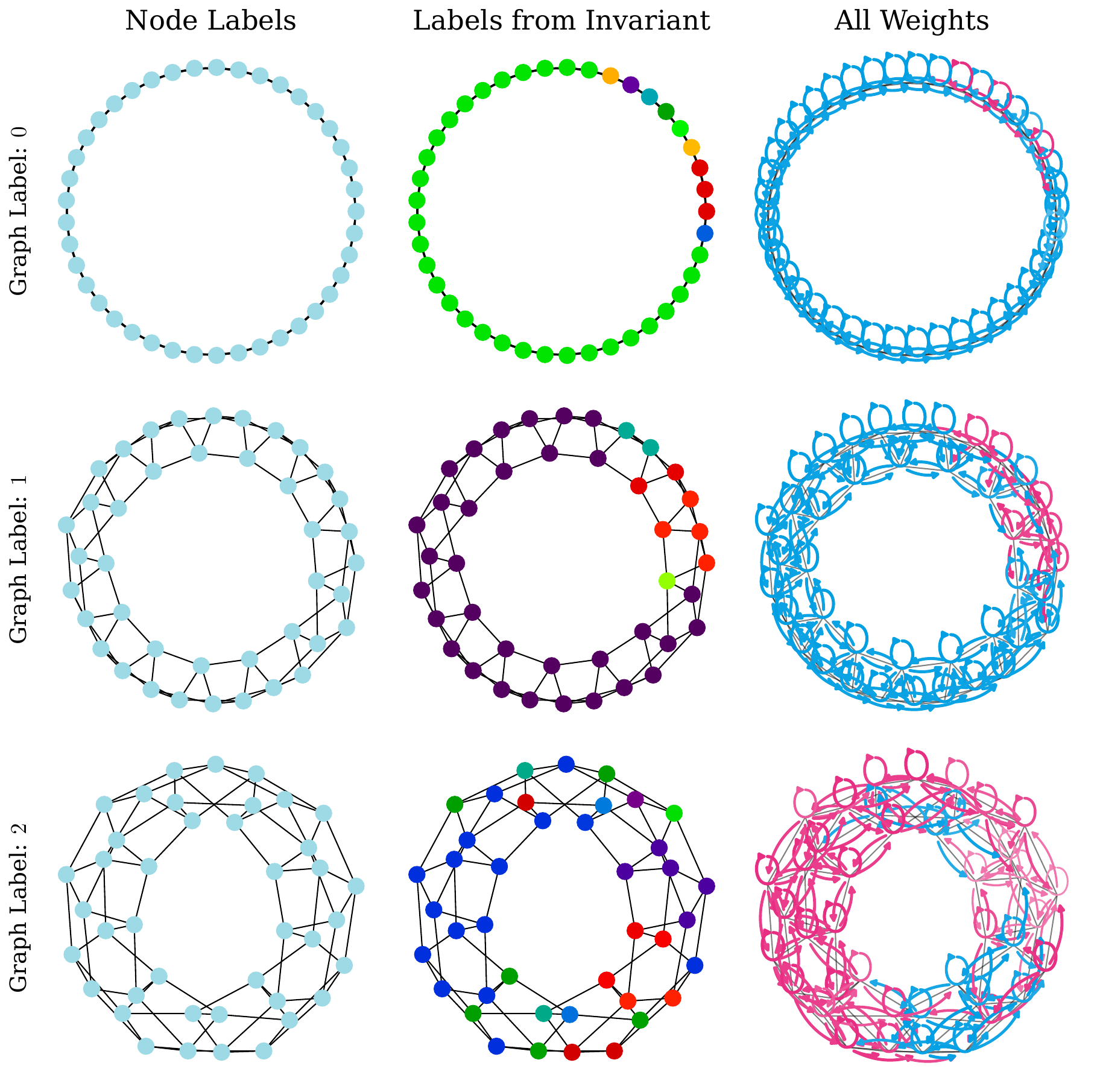}}
  		\caption{CSL\label{fig:csl-messages}}
		  	\end{subfigure}
  	\begin{subfigure}{0.56\textwidth}\centering
		  		{\includegraphics[width=\textwidth]{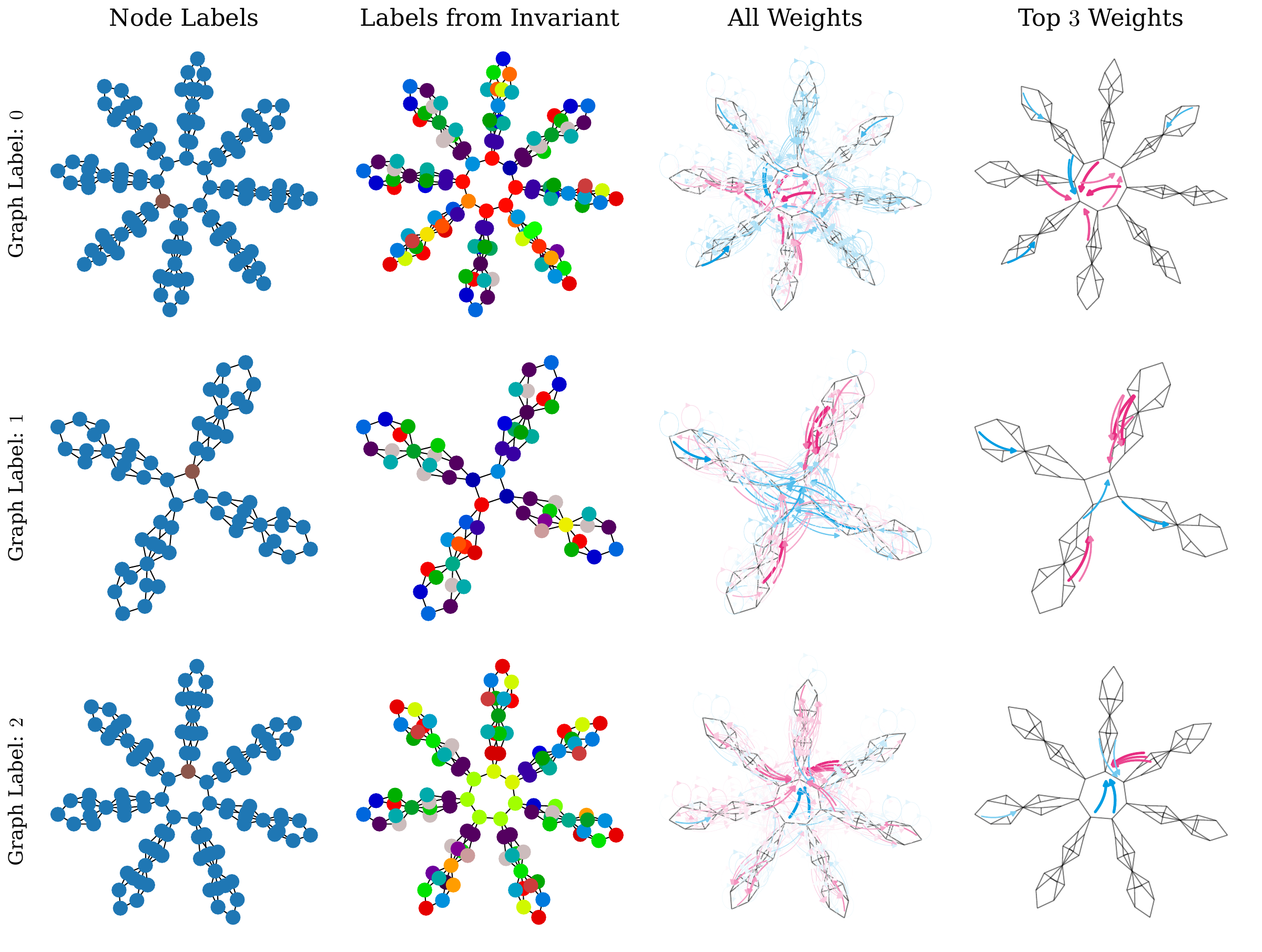}}
  		\caption{Snowflakes\label{fig:snowflakes-messages}}
\end{subfigure}

  \caption{\label{fig:interpretability-2}Visualization of the weights for the
 respective datasets.
  	  The first column of each subfigure shows the graphs with the colors of the nodes representing the different node labels.
  The second column shows the labels of the nodes with respect to the labeling function.
  The last two columns in (\ref{fig:snowflakes-messages}) show all and the top three absolute weights.
  }

\end{figure*}

\section{Extended Example: Edge Labels}\label{sec:example-molecule-graphs}
\begin{figure}[t]\centering
	\chemfig{{C_6}(==[:0]{C_5}(-[:60]{H_1})(-[:-60]{H_2}))(-[:-120]{H_3})(-[:120]{H_4})}
	\hfill
	\chemfig{[:-30]{C_3}(-[:135]{H_1})*3(-{C_5}-{C_4}(-[:45]{H_2})=)}
	\caption{\label{fig:MoleculeGraph}Molecular graphs of ethylene (left) and cyclopropenylidene (right). The numbers denote the order of the nodes.}
\end{figure}

	\begin{figure*}[t]
		\centering
				\scalebox{1}[1]{
		\begin{subfigure}{0.45\textwidth}\centering
	\begin{tikzpicture}[scale = 0.5]\small
	\node[circle,inner sep=0pt, minimum size=0.4cm, draw=black] (a) at (0,0) {\tiny $H$};
	\node[circle,inner sep=0pt, minimum size=0.4cm, draw=black] (b) at (0,-1) {\tiny $H$};
	\node[circle,inner sep=0pt, minimum size=0.4cm, draw=black] (c) at (0,-2) {\tiny $H$};
	\node[circle,inner sep=0pt, minimum size=0.4cm, draw=black] (d) at (0,-3) {\tiny $H$};
	\node[circle,inner sep=0pt, minimum size=0.4cm, draw=black] (e) at (0,-4) {\tiny $C$};
	\node[circle,inner sep=0pt, minimum size=0.4cm, draw=black] (f) at (0,-5) {\tiny $C$};

	\node[circle,inner sep=0pt, minimum size=0.4cm, draw=black] (g) at (5,0) {\tiny $H$};
	\node[circle,inner sep=0pt, minimum size=0.4cm, draw=black] (h) at (5,-1) {\tiny $H$};
	\node[circle,inner sep=0pt, minimum size=0.4cm, draw=black] (i) at (5,-2) {\tiny $H$};
	\node[circle,inner sep=0pt, minimum size=0.4cm, draw=black] (j) at (5,-3) {\tiny $H$};
	\node[circle,inner sep=0pt, minimum size=0.4cm, draw=black] (k) at (5,-4) {\tiny $C$};
	\node[circle,inner sep=0pt, minimum size=0.4cm, draw=black] (l) at (5,-5) {\tiny $C$};

	\draw[tab_blue, line width = 1pt] (a)--(g) (b)--(h) (c)--(i)  (d)--(j) ;
	\draw[tab_orange, line width = 1pt] (e)--(k) (f)--(l);
	\draw[tab_green, line width = 1pt] (a)--(k) (b)--(l) (c)--(k) (d)--(l);
	\draw[tab_purple, line width = 1pt]  (e)--(i) (e)--(g) (f)--(j) (f)--(h);
	\draw[tab_red, line width = 1pt]    (e)--(l) (f)--(k) ;

	\node[circle,inner sep=0pt, minimum size=0.3cm, draw=black] (y1) at (10,-1.5) {};
	\node[circle,inner sep=0pt, minimum size=0.3cm, draw=black] (y2) at (10,-3.5) {};

	\draw[tab_brown, line width = 1pt]  (y1)--(g) (y1)--(h) (y1)--(i) (y1)--(j);
	\draw[tab_gray, line width = 1pt]  (y2)--(g) (y2)--(h) (y2)--(i) (y2)--(j);
	\draw[tab_olive, line width = 1pt]    (y1)--(l) (y1)--(k);
	\draw[tab_cyan, line width = 1pt]    (y2)--(l) (y2)--(k);

	\end{tikzpicture}
	\end{subfigure}}\hfill
						\scalebox{1}[1]{
	\begin{subfigure}{0.45\textwidth}\centering
	\begin{tikzpicture}[scale = 0.5]\small
	\node[circle,inner sep=0pt, minimum size=0.4cm, draw=black] (a) at (0,0) {\tiny $H$};
	\node[circle,inner sep=0pt, minimum size=0.4cm, draw=black] (b) at (0,-1) {\tiny $H$};
	\node[circle,inner sep=0pt, minimum size=0.4cm, draw=black] (c) at (0,-2) {\tiny $C$};
	\node[circle,inner sep=0pt, minimum size=0.4cm, draw=black] (d) at (0,-3) {\tiny $C$};
	\node[circle,inner sep=0pt, minimum size=0.4cm, draw=black] (e) at (0,-4) {\tiny $C$};

	\node[circle,inner sep=0pt, minimum size=0.4cm, draw=black] (g) at (5,0) {\tiny $H$};
	\node[circle,inner sep=0pt, minimum size=0.4cm, draw=black] (h) at (5,-1) {\tiny $H$};
	\node[circle,inner sep=0pt, minimum size=0.4cm, draw=black] (i) at (5,-2) {\tiny $C$};
	\node[circle,inner sep=0pt, minimum size=0.4cm, draw=black] (j) at (5,-3) {\tiny $C$};
	\node[circle,inner sep=0pt, minimum size=0.4cm, draw=black] (k) at (5,-4) {\tiny $C$};

	\draw[tab_blue,solid, line width = 1pt] (a)--(g);
	\draw[tab_blue,solid, line width = 1pt] (b)--(h);
	\draw[tab_orange, line width = 1pt] (c)--(i);
	\draw[tab_orange, line width = 1pt] (d)--(j);
	\draw[tab_orange, line width = 1pt] (e)--(k);
	\draw[tab_green, line width = 1pt] (a)--(i);
	\draw[tab_green, line width = 1pt] (b)--(j);
	\draw[tab_purple, line width = 1pt]  (c)--(g);
	\draw[tab_purple, line width = 1pt] (d)--(h);
	\draw[tab_red, line width = 1pt] (c)--(j);
	\draw[tab_red, line width = 1pt] (d)--(i);

	\draw[tab_pink, line width = 1pt] (c)--(k);
	\draw[tab_pink, line width = 1pt] (d)--(k);
	\draw[tab_pink, line width = 1pt] (e)--(i);
	\draw[tab_pink, line width = 1pt] (e)--(j);

	\node[circle,inner sep=0pt, minimum size=0.3cm, draw=black] (y1) at (10,-1) {};
	\node[circle,inner sep=0pt, minimum size=0.3cm, draw=black] (y2) at (10,-3) {};

	\draw[tab_brown, line width = 1pt]  (y1)--(g) (y1)--(h);
	\draw[tab_gray, line width = 1pt]  (y2)--(g) (y2)--(h);
	\draw[tab_olive, line width = 1pt]   (y1)--(k) (y1)--(i) (y1)--(j);
	\draw[tab_cyan, line width = 1pt]   (y2)--(k) (y2)--(i) (y2)--(j);
	\end{tikzpicture}
\end{subfigure}}
\caption{\label{fig:MyGNNExample}
	Computational graphs of a simple~\MyGNN with one encoder layer for the molecular graphs of ethylene (left) and cyclopropenylidene (right).
	The input signal is propagated from left to right. The graph nodes represent the neurons of the neural network.
	Edges of the same color denote layer-wise shared weights.
}
\end{figure*}

In this example we show that also edge labels can be included into the~\MyGNN framework.

We consider the molecular graphs of ethylene and cyclopropenylidene, see Figure~\ref{fig:MoleculeGraph}.
The atoms of the molecules are denoted by $H$ for hydrogen  and $C$ for carbon.
The graph nodes correspond to the atoms and the graph edges to the atom bonds.
Ethylene is represented by the initial vector $x\in\mathbb{R}^6$  and cyclopropenylidene by $y\in\mathbb{R}^5$.
The bond types  (\textit{single} and \textit{double}) can be seen as edge labels.
Extending the considerations of our approach we show how to include edge labels into our model.
Instead of considering for each pair of nodes a triple consisting of the atom labels and the distance between the nodes
we consider triples consisting of the atom labels and the bond type between the atoms,
i.e., each pair of nodes $(v,w)$ is associated with a triple $(l(v), l(w), \text{bond}(v,w))$  where $l(v), l(w)$ is the atom label
and $\text{bond}(v,w)$ is either $\emptyset$ (no bond), $\odot$ (self-connection), $-$ (single bond) or $=$ (double bond).
Triples $\mathcal{T}$ are valid if they do not contain $\emptyset$ in the third entry, i.e., it they are not of the form $(\cdot,\cdot,\emptyset)$.
Moreover, if $\text{bond}(v,w)=\odot$, i.e., the triple corresponds to a self-connection, then the the first two entries have to be equal.
Finally, we assign each valid triple a learnable parameter from $\weightset_\mathcal{T}$.
For our example, we need the following six learnable parameters:

\begin{center}
\begin{tabular}{ccc}
	$\omega_1\coloneqq \omega_{(H, H, \odot)}$ & $\omega_2\coloneqq \omega_{(C, C, \odot)}$ & $\omega_3\coloneqq \omega_{(H, C, -)}$\\
	$\omega_4\coloneqq \omega_{(C, H, -)}$ & $\omega_5\coloneqq \omega_{(C, C, -)}$ & $\omega_6\coloneqq \omega_{(C, C, =)}$\\
\end{tabular}
\end{center}

Regarding the decoder,
we use the atomic numbers for $l$  and fix the output size to $m=2$.
The set of valid labels $\mathcal{L}$ are all possible atomic numbers.
Let $\weightset_{\labelset}$ be the set of learnable weight vectors for the decoder.
For our example we need the following learnable parameters $\omega_H, \omega_C\in\mathbb{R}^2$.

The order of the graph nodes is fixed as shown in~\Cref{fig:MoleculeGraph}.
Recall, that we need the fixed order to construct
the matrices for forward propagation matrices.
Note that in this example we omit all bias terms for simplicity.

Using the learnable parameters defined above
we get the following two matrices that define the forward propagation (message passing) of the encoder layer for the ethylene graph (left) and the
cyclopropenylidene graph (right).

\begin{center}
\begin{tabular}{cc}
	$W^{\operatorname{enc}}_x= \begin{psmallmatrix}
	\omega_1&0&0&0&\omega_3&0\\
	0&\omega_1&0&0&\omega_3&0\\
	0&0&\omega_1&0&0&\omega_3\\
	0&0&0&\omega_1&0&\omega_3\\
	\omega_4&\omega_4&0&0&\omega_2&\omega_5\\
	0&0&\omega_4&\omega_4&\omega_5&\omega_2\\
	\end{psmallmatrix}$
	&$W^{\operatorname{enc}}_y= \begin{psmallmatrix}
	\omega_1&0&\omega_3&0&0\\
	0&\omega_1&0&\omega_3&0\\
	\omega_4&0&\omega_2&\omega_6&\omega_5\\
	0&\omega_3&\omega_6&\omega_2&\omega_5\\
	0&0&\omega_5&\omega_5&\omega_2\\
	\end{psmallmatrix}
	$\\
\end{tabular}
\end{center}

We get the decoder weight matrices for the ethylene graph (left) and the cyclopropenylidene graph (right) as follows:

\begin{center}
\begin{tabular}{cc}
		$W^{\operatorname{dec}}_x= \begin{psmallmatrix}
	\omega_H&\omega_H&\omega_H&\omega_H&\omega_C&\omega_C\\
	\end{psmallmatrix}$
		&
	$W^{\operatorname{dec}}_y= \begin{psmallmatrix}
	\omega_H&\omega_C&\omega_C&\omega_H&\omega_H\\
	\end{psmallmatrix}$

\end{tabular}
\end{center}

Combining the encoder and decoder layers we obtain $$x' = \sigma^{\operatorname{dec}}(W^{\operatorname{dec}}_x\cdot \sigma^{\operatorname{enc}}(W^{\operatorname{enc}}_x \cdot x))$$
for the ethylene graph, and
$$y' = \sigma^{\operatorname{dec}}(W^{\operatorname{dec}}_y\cdot \sigma^{\operatorname{enc}}(W^{\operatorname{enc}}_y\cdot y))$$
for the cyclopropenylidene graph.

Note that the forward propagation of the encoder layer is essentially a multiplication with a weighted adjacency matrix of the graph, where the weights of the adjacency matrix are given by the learnable parameters (\Cref{fig:MyGNNExample}).
In contrast to adjacency matrices, the weight matrix is not necessary symmetric.
The computation graph induced by the weight matrices exactly represent the graph structure while the edge weights are shared across the network (\Cref{fig:MyGNNExample}).

This example uses edge labels only for message passing between adjacent nodes.
However, edge label information can also be included for node pairs that are not connected by an edge.
For example, by counting
the number of specific edge labels on all shortest paths between two nodes.

\section{Further Applications: Image and Text Classification}\label{sec:appendix-text-image}
	Our approach is not limited to the graph domain.
	Indeed, there are multiple options to extend it to other domains.
	We will give examples how image and text data can be represented as graphs and how our approach can be applied to them.

	\paragraph{Image Data} Each image can be seen as a grid graph with diagonal edges.
	Ordinary convolutional neural networks (CNNs) are based on fixed-size kernels that move along the image.
	With our novel approach applied to the above interpretation of images as grid graphs, message passing in
	our invariant-based encoder is nothing more than a convolutional kernel of arbitrary shape and size,
    see~\Cref{fig:example-convolution-image} for an illustration.

	\begin{figure*}[t]
		\begin{subfigure}{0.49\textwidth}\centering
			\begin{tikzpicture}

				\foreach \x in {0,1,2,3,4} {
					\foreach \y in {0,1,2,3,4} {
						\node[draw,circle,minimum size=0.5cm] (n\x\y) at (\x,\y) {};
					}
				}

				\foreach \x in {0,1,2,3,4} {
					\foreach \y in {0,1,2,3,4} {
						\ifnum\x<4
						  \draw (n\x\y) -- (n\the\numexpr\x+1\relax\y);
						\fi
						\ifnum\y<4
						  \draw (n\x\y) -- (n\x\the\numexpr\y+1\relax);
						\fi
						\ifnum\x<4\ifnum\y<4
						  \draw (n\x\y) -- (n\the\numexpr\x+1\relax\the\numexpr\y+1\relax);
						\fi\fi
						\ifnum\x<4\ifnum\y>0
						  \draw (n\x\y) -- (n\the\numexpr\x+1\relax\the\numexpr\y-1\relax);
						\fi\fi
					}
				}
				\foreach \x in {0,1,2,3,4} {
					\draw[dashed] (n\x0) -- (\x,-1);
				}
				\foreach \y in {0,1,2,3,4} {
					\draw[dashed] (n4\y) -- (5,\y);
				}

				\node[thick, draw=red,circle,minimum size=0.5cm,fill=aqua!50] (r0) at (2,2) {};

				\node[draw,circle,minimum size=0.5cm,fill=aqua!50] (r1) at (1,2) {};
				\node[draw,circle,minimum size=0.5cm,fill=aqua!50] (r2) at (1,1) {};
				\node[draw,circle,minimum size=0.5cm,fill=aqua!50] (r3) at (2,1) {};
				\node[draw,circle,minimum size=0.5cm,fill=aqua!50] (r4) at (3,2) {};
				\node[draw,circle,minimum size=0.5cm,fill=aqua!50] (r5) at (2,3) {};
				\node[draw,circle,minimum size=0.5cm,fill=aqua!50] (r6) at (3,3) {};
				\node[draw,circle,minimum size=0.5cm,fill=aqua!50] (r7) at (1,3) {};
				\node[draw,circle,minimum size=0.5cm,fill=aqua!50] (r8) at (3,1) {};

				\draw[tab_blue,thick, -latex] (r1) to[bend left=15] (r0);
				\draw[tab_orange,thick, -latex] (r2) to[bend left=15] (r0);
				\draw[tab_green,thick, -latex] (r3) to[bend left=15] (r0);
				\draw[tab_purple,thick, -latex] (r4) to[bend left=15] (r0);
				\draw[tab_brown,thick, -latex] (r5) to[bend left=15] (r0);
				\draw[tab_pink,thick, -latex] (r6) to[bend left=15] (r0);
				\draw[tab_olive,thick, -latex] (r7) to[bend left=15] (r0);
				\draw[tab_cyan,thick, -latex] (r8) to[bend left=15] (r0);
				\draw[tab_gray,thick, -latex, loop] (r0) to (r0);
				\end{tikzpicture}
						\caption{\label{fig:example-convolution-regular}}%
		\end{subfigure}
		\begin{subfigure}{0.49\textwidth}\centering
			\begin{tikzpicture}
				\foreach \x in {0,1,2,3,4} {
					\foreach \y in {0,1,2,3,4} {
						\node[draw,circle,minimum size=0.5cm] (n\x\y) at (\x,\y) {};
					}
				}

				\foreach \x in {0,1,2,3,4} {
					\foreach \y in {0,1,2,3,4} {
						\ifnum\x<4
						  \draw (n\x\y) -- (n\the\numexpr\x+1\relax\y);
						\fi
						\ifnum\y<4
						  \draw (n\x\y) -- (n\x\the\numexpr\y+1\relax);
						\fi
						\ifnum\x<4\ifnum\y<4
						  \draw (n\x\y) -- (n\the\numexpr\x+1\relax\the\numexpr\y+1\relax);
						\fi\fi
						\ifnum\x<4\ifnum\y>0
						  \draw (n\x\y) -- (n\the\numexpr\x+1\relax\the\numexpr\y-1\relax);
						\fi\fi
					}
				}
				\foreach \x in {0,1,2,3,4} {
					\draw[dashed] (n\x0) -- (\x,-1);
				}
				\foreach \y in {0,1,2,3,4} {
					\draw[dashed] (n4\y) -- (5,\y);
				}

				\node[thick, draw=red,circle,minimum size=0.5cm,fill=aqua!50] (r0) at (2,2) {};
				\node[draw,circle,minimum size=0.5cm,fill=aqua!50] (r1) at (3,1) {};
				\node[draw,circle,minimum size=0.5cm,fill=aqua!50] (r5) at (3,2) {};
				\node[draw,circle,minimum size=0.5cm,fill=aqua!50] (r2) at (2,0) {};
				\node[draw,circle,minimum size=0.5cm,fill=aqua!50] (r3) at (1,0) {};
				\node[draw,circle,minimum size=0.5cm,fill=aqua!50] (r4) at (0,1) {};

				\node[draw,circle,minimum size=0.5cm,fill=aqua!50] (r6) at (0,3) {};
				\node[draw,circle,minimum size=0.5cm,fill=aqua!50] (r7) at (1,4) {};
				\node[draw,circle,minimum size=0.5cm,fill=aqua!50] (r8) at (2,4) {};

				\draw[tab_orange,thick, -latex] (r1) to[bend left=15] (r0);
				\draw[tab_green,thick, -latex] (r2) to[bend left=15] (r0);
				\draw[tab_green,thick, -latex] (r3) to[bend left=15] (r0);
				\draw[tab_green,thick, -latex] (r4) to[bend left=15] (r0);
				\draw[tab_orange,thick, -latex] (r5) to[bend left=15] (r0);
				\draw[tab_purple,thick, -latex] (r6) to[bend left=15] (r0);
				\draw[tab_purple,thick, -latex] (r7) to[bend left=15] (r0);
				\draw[tab_purple,thick, -latex] (r8) to[bend left=15] (r0);
				\draw[tab_pink,thick, -latex, loop] (r0) to (r0);
				\end{tikzpicture}
			\caption{\label{fig:example-convolution-irregular}}%
		\end{subfigure}
		\caption{\label{fig:example-convolution-image} Representation of a 5x5 partial image as a grid graph with diagonal edges.
		The left subfigure shows the ordinary convolution for images with a regular kernel (3x3 square shape).
		The right subfigure shows an example of an irregular kernel (G-shape)
		that can be represented by our invariant-based encoder layer.
		The node with the red border corresponds to the center of the kernel, the light blue ones denote the receptive field.
		Different edge colors denote different (message-passing) weights.
		}
	\end{figure*}

	\paragraph{Text Data} Each text can be seen as a path graph where each node corresponds to a word in the text, see~\Cref{fig:overview,fig:text-example} for an illustration.
	Such representations of text are also called~\textit{local word consecutive}~\citep{DBLP:journals/air/WangDH24}.
    The label of a node in this case can be the id of the corresponding word given by a tokenizer.
	This allows to apply our approach directly to text classification by using the path graph representation.
	Learning the weights of the invariant-based encoder is nothing more than learning
	relations between certain words at a certain distance in the text.
	The application of our approach to text classification can be seen as a generalization of the approach of~\cite{DBLP:conf/emnlp/HuangMLZW19}.

	Notably, interpretability and transferability of our approach also applies to these domains.
	Specifically, we can re-use the learned weights between two words in a text to learn relations between two words in another text.
	
	\begin{figure*}[t]
		\centering
		\begin{tikzpicture}[on grid,graph node/.style={draw,fill=white,circle,text width={width(20000)},align=center,font=\small, scale=0.5}]
			\node (t1) {Texts\phantom{q}};
			\node[xshift=0.8cm] (t2) {are\phantom{qj}};
			\node[xshift=1.7cm] (t3) {nothing\phantom{qj}};
			\node[xshift=2.65cm] (t4) {else\phantom{qj}};
			\node[xshift=3.4cm] (t5) {than\phantom{qj}};
			\node[xshift=4.4cm] (t6) {labeled\phantom{qj}};
			\node[xshift=5.4cm] (t7) {path\phantom{qj}};
			\node[xshift=6.4cm] (t8) {graphs\phantom{qj}};
			\node[xshift=7.2cm] (t9) {.\phantom{qj}};
			
			\draw[->, snake it, thick] (t4.south) -- ++(0,-0.7) node[midway, right, xshift=0.2cm] {Tokenization};

			\node[below of=t1, yshift=-0.25cm] (token1) {texts\phantom{qj}};
			\node[below of=t2, yshift=-0.25cm] (token2) {are\phantom{qj}};
			\node[below of=t3, yshift=-0.25cm] (token3) {nothing\phantom{qj}};
			\node[below of=t4, yshift=-0.25cm] (token4) {else\phantom{qj}};
			\node[below of=t5, yshift=-0.25cm] (token5) {than\phantom{qj}};
			\node[below of=t6, yshift=-0.25cm] (token6) {labeled\phantom{qj}};
			\node[below of=t7, yshift=-0.25cm] (token7) {path\phantom{qj}};
			\node[below of=t8, yshift=-0.25cm] (token8) {graphs\phantom{qj}};
			\node[below of=t9, yshift=-0.25cm] (token9) {.\phantom{qj}};
			
			\node[left of=token1] (cls) {[CLS]\phantom{qj}};
			\node[right of=token9, xshift=-0.2cm] (sep) {[SEP]\phantom{qj}};

			\node[below of=cls, yshift=0.5cm] (id0) {101\phantom{qj}};
			\node[below of=token1, yshift=0.5cm] (id1) {6981\phantom{qj}};
			\node[below of=token2, yshift=0.5cm] (id2) {2024\phantom{qj}};
			\node[below of=token3, yshift=0.5cm] (id3) {2498\phantom{qj}};
			\node[below of=token4, yshift=0.5cm] (id4) {2842\phantom{qj}};
			\node[below of=token5, yshift=0.5cm] (id5) {2084\phantom{qj}};
			\node[below of=token6, yshift=0.5cm] (id6) {12599\phantom{qj}};
			\node[below of=token7, yshift=0.5cm] (id7) {4130\phantom{qj}};
			\node[below of=token8, yshift=0.5cm] (id8) {19287\phantom{qj}};
			\node[below of=token9, yshift=0.5cm] (id9) {1012\phantom{qj}};
			\node[below of=sep, yshift=0.5cm] (id10) {102\phantom{qj}};
			
			\node[left of=id0, xshift=-0.75cm] (bert) {BERT:\phantom{qj[]}};
			
			\foreach \i\j in {0/101, 1/6981, 2/2024, 3/2498, 4/2842, 5/2084, 6/12599, 7/4130, 8/19287, 9/1012, 10/102} {
				\node [graph node, below of=id\i, xshift=-0.3cm, yshift=-0.1cm] (graphnode\i) {\j};
			}
			\foreach \i/\j in {0/1, 1/2, 2/3, 3/4, 4/5, 5/6, 6/7, 7/8, 8/9, 9/10} {
				\draw (graphnode\i) -- (graphnode\j);
			}
			
			\node[below of=id1, yshift=-0.5cm, xshift=-1cm] (token1gpt) {Text\phantom{qj}};
			\node[below of=id1, yshift=-0.5cm, xshift=-0.3cm] (token2gpt) {s\phantom{qj}};
			\node[below of=id2, yshift=-0.5cm, xshift=-0.4cm] (token3gpt) {are\phantom{qj}};
			\node[below of=id3, yshift=-0.5cm, xshift=-0.3cm] (token4gpt) {nothing\phantom{qj}};
			\node[below of=id4, yshift=-0.5cm, xshift=-0.2cm] (token5gpt) {else\phantom{qj}};
			\node[below of=id5, yshift=-0.5cm, xshift=-0.1cm] (token6gpt) {than\phantom{qj}};
			\node[below of=id6, yshift=-0.5cm, xshift=-0.35cm] (token7gpt) {lab\phantom{qj}};
			\node[below of=id6, yshift=-0.5cm, xshift=0.5cm] (token8gpt) {eled\phantom{qj}};
			\node[below of=id7, yshift=-0.5cm, xshift=0.5cm] (token9gpt) {path\phantom{qj}};
			\node[below of=id8, yshift=-0.5cm, xshift=0.5cm] (token10gpt) {graph\phantom{qj}};
			\node[below of=id8, yshift=-0.5cm, xshift=1.25cm] (token11gpt) {s\phantom{qj}};
			\node[below of=id9, yshift=-0.5cm, xshift=1.2cm] (token12gpt) {.};
			
			\node[below of=token1gpt, yshift=0.5cm] (id1gpt) {8206\phantom{qj}};
			\node[below of=token2gpt, yshift=0.5cm] (id2gpt) {82\phantom{qj}};
			\node[below of=token3gpt, yshift=0.5cm] (id3gpt) {389\phantom{qj}};
			\node[below of=token4gpt, yshift=0.5cm] (id4gpt) {10528\phantom{qj}};
			\node[below of=token5gpt, yshift=0.5cm] (id5gpt) {25974\phantom{qj}};
			\node[below of=token6gpt, yshift=0.5cm] (id6gpt) {621\phantom{qj}};
			\node[below of=token7gpt, yshift=0.5cm] (id7gpt) {3498\phantom{qj}};
			\node[below of=token8gpt, yshift=0.5cm] (id8gpt) {18449\phantom{qj}};
			\node[below of=token9gpt, yshift=0.5cm] (id9gpt) {10644\phantom{qj}};
			\node[below of=token10gpt, yshift=0.5cm] (id10gpt) {29681\phantom{qj}};
			\node[below of=token11gpt, yshift=0.5cm] (id11gpt) {82\phantom{qj}};
			\node[below of=token12gpt, yshift=0.5cm] (id12gpt) {13\phantom{qj}};
			
			\node[left of=id1gpt, xshift=-0.7cm] (gpt) {GPT-2:\phantom{qj[]}};
			
			\foreach \i\j in {1/8206, 2/82, 3/389, 4/10528, 5/25974, 6/621, 7/3498, 8/18449, 9/10644, 10/29681, 11/82, 12/13} {
				\node [graph node, below of=id\i gpt, xshift=-0.3cm, yshift=-0.1cm] (graphnodegpt\i) {\j};
			}
			\foreach \i/\j in {1/2, 2/3, 3/4, 4/5, 5/6, 6/7, 7/8, 8/9, 9/10, 10/11, 11/12} {
				\draw (graphnodegpt\i) -- (graphnodegpt\j);
			}

		\end{tikzpicture}
		\caption{Representation of a sample text as labeled path graph.
			First, the text is parsed through a tokenizer, that assigns each token a unique identifier.
			Second, each token is mapped to a node in a path graph.
			That is, the edges of the graph connect consecutive tokens, forming a path.
		}
		\label{fig:overview}
	\end{figure*}
	
	\begin{figure*}[t]\centering
		\begin{tikzpicture}[on grid,graph node/.style={draw,fill=white,circle,text width={width(20000)},align=center,font=\small, scale=0.5}]
			\node[below of=id1, yshift=-0.5cm, xshift=-1cm] (token1gpt) {Text\phantom{qj}};
			\node[below of=id1, yshift=-0.5cm, xshift=-0.3cm] (token2gpt) {s\phantom{qj}};
			\node[below of=id2, yshift=-0.5cm, xshift=-0.4cm] (token3gpt) {are\phantom{qj}};
			\node[below of=id3, yshift=-0.5cm, xshift=-0.3cm] (token4gpt) {nothing\phantom{qj}};
			\node[below of=id4, yshift=-0.5cm, xshift=-0.2cm] (token5gpt) {else\phantom{qj}};
			\node[below of=id5, yshift=-0.5cm, xshift=-0.1cm] (token6gpt) {than\phantom{qj}};
			\node[below of=id6, yshift=-0.5cm, xshift=-0.35cm] (token7gpt) {lab\phantom{qj}};
			\node[below of=id6, yshift=-0.5cm, xshift=0.5cm] (token8gpt) {eled\phantom{qj}};
			\node[below of=id7, yshift=-0.5cm, xshift=0.5cm] (token9gpt) {path\phantom{qj}};
			\node[below of=id8, yshift=-0.5cm, xshift=0.5cm] (token10gpt) {graph\phantom{qj}};
			\node[below of=id8, yshift=-0.5cm, xshift=1.25cm] (token11gpt) {s\phantom{qj}};
			\node[below of=id9, yshift=-0.5cm, xshift=1.2cm] (token12gpt) {.};
			
			\node[left of=id1gpt, xshift=-0.7cm] (gpt) {GPT-2:\phantom{qj[]}};
			
			\foreach \i\j in {1/8206, 2/82, 3/389, 4/10528, 5/25974, 6/621, 7/3498, 8/18449, 9/10644, 10/29681, 11/82, 12/13} {
				\ifthenelse{\equal{\i}{6}}{
					\node [graph node, below of=token\i gpt, xshift=-0.3cm, yshift=-0.6cm, draw=red, thick] (graphnodegpt\i) {\j};
				}{
					\node [graph node, below of=token\i gpt, xshift=-0.3cm, yshift=-0.6cm] (graphnodegpt\i) {\j};
				}
			}
			\foreach \i/\j in {1/2, 2/3, 3/4, 4/5, 5/6, 6/7, 7/8, 8/9, 9/10, 10/11, 11/12} {
				\draw (graphnodegpt\i) -- (graphnodegpt\j);
			}
			
			\foreach \i\color in {3/Salmon, 4/PineGreen, 8/Magenta, 9/Turquoise} {
				\draw[thick, -latex, bend right=40,draw=\color] (graphnodegpt\i) to (graphnodegpt6);
				
			}

			\foreach \i\color in {5/Maroon, 7/Periwinkle} {
				\draw[draw=\color, thick, -latex, bend left=40] (graphnodegpt\i) to (graphnodegpt6);
				
			}
			\draw[draw=LimeGreen,loop below, thick, -latex] (graphnodegpt6) to (graphnodegpt6);

			\draw[draw=Salmon, thick, -latex] (graphnodegpt1)++ (-2.4,-1) --++ (0.5,0) node[right, xshift=-0.05cm] {\small{$\mu_{\scriptscriptstyle{(389, 621, 3)}}$}};
			\draw[draw=PineGreen, thick, -latex] (graphnodegpt1)++ (-0.8,-1) --++ (0.5,0) node[right, xshift=-0.05cm] {\small{$\mu_{\scriptscriptstyle{(10528, 621, 3)}}$}};
			\draw[draw=Maroon, thick, -latex] (graphnodegpt1)++ (1,-1) --++ (0.5,0) node[right, xshift=-0.05cm] {\small{$\mu_{\scriptscriptstyle{(25974, 621, 1)}}$}};
			\draw[draw=LimeGreen, thick, -latex] (graphnodegpt1)++ (2.75,-1) --++ (0.5,0) node[right, xshift=-0.05cm] {\small{$\mu_{\scriptscriptstyle{(621, 621, 0)}}$}};
			\draw[draw=Periwinkle, thick, -latex] (graphnodegpt1)++ (4.4,-1) --++ (0.5,0) node[right, xshift=-0.05cm] {\small{$\mu_{\scriptscriptstyle{(3498, 621, 1)}}$}};
			\draw[draw=Magenta, thick, -latex] (graphnodegpt1)++ (6.15,-1) --++ (0.5,0) node[right, xshift=-0.05cm] {\small{$\mu_{\scriptscriptstyle{(18449, 621, 2)}}$}};
			\draw[draw=Turquoise, thick, -latex] (graphnodegpt1)++ (8,-1) --++ (0.5,0) node[right, xshift=-0.05cm] {\small{$\mu_{\scriptscriptstyle{(10644, 621, 3)}}$}};

		\end{tikzpicture}
		\caption{\label{fig:text-example}Encoder Layer: Message passing in the labeled path graph generated by the GPT-2 tokenizer.
			The arrows indicate different message passing weights $\mu$ from nodes up to distance 3 to the node corresponding to the token \textit{than} (621).
			Different colors indicate different weights.
		}
	\end{figure*}

\end{document}